\renewcommand{\thefootnote}{}
\title{Universal Sequence Preconditioning}
\author{
  Annie Marsden \footnotemark[1] \And
  Elad Hazan\footnotemark[1] \hspace{0.1mm} \footnotemark[2]
}
\begin{document}
\maketitle

\renewcommand{\thefootnote}{\fnsymbol{footnote}}
\footnotetext[1]{Google Deepmind}
\footnotetext[2]{Princeton University}
\renewcommand{\thefootnote}{\arabic{footnote}} 
\begin{abstract}

We study the problem of preconditioning in sequential prediction.
From the theoretical lens of linear dynamical systems, we show that convolving the target sequence corresponds to applying a polynomial to the hidden transition matrix.
Building on this insight, we propose a universal preconditioning method that convolves the target with coefficients from orthogonal polynomials such as Chebyshev or Legendre.
We prove that this approach reduces regret for two distinct prediction algorithms and yields the first ever sublinear and hidden-dimension–independent regret bounds (up to logarithmic factors) that hold for systems with marginally stable and asymmetric transition matrices.
Finally, extensive synthetic and real-world experiments show that this simple preconditioning strategy improves the performance of a diverse range of algorithms, including recurrent neural networks, and generalizes to signals beyond linear dynamical systems.
\end{abstract}

\section{Introduction}

In sequence prediction the goal of the learner is to predict the next token accurately according to a specified loss function, such as the mean square error or cross-entropy.
This fundamental problem in machine learning has gained increased importance with the rise of large language models, which perform sequence prediction on tokens using cross entropy.
The focus of this paper is {\it preconditioning}, i.e. modifying the target sequence to make it easier to learn.
A classic example is {\bf differencing}, introduced by Box and Jenkins in the 1970s \cite{box1976time}, which transforms observations $\y_1,\y_2,\dots$ into successive differences,
$$ \y_1 - \y_0 , \  \y_2-\y_1 ,\ ... ,\  \y_t- \y_{t-1}, ... $$
It is widely acknowledged that learning this sequence can be ``easier" than learning the original sequence for a large number of modalities. In this work we seek a more general framework for sequence preconditioning that captures the same intuition behind differencing and extends it to a broader class of transformations. The question we ask is 

\begin{center}
{\it What is the general form of sequence preconditioning that enables provably accurate learning?}
\end{center}

We address this question by introducing a preconditioning method which takes in $n$ fixed coefficients $c_0, \dots, c_n$ and converts the sequence of observations $\y_1, \dots, \y_t, \dots$ to the sequence of convolved observations\footnote{This recovers differencing when $n=2$, $c_0 = 1$, and $c_1 = -1$.}
\begin{equation*}
    c_0 \y_0, \; c_0 \y_1 + c_1 \y_0, \; \dots, \; \sum_{i = 0}^n c_i \y_{t-i}, \; \dots 
\end{equation*}
From an information-theoretic perspective, approaches of this kind seem futile—predicting $\y_t$ or $\sum_i c_i \y_{t-i}$ seems equally hard in an adversarial setting.
Yet we show that when the data arises from a linear dynamical system (LDS), there exists a \emph{universal} form of preconditioning that provably improves learnability, independent of the specific system.
In the LDS setting, we show that preconditioning significantly strengthens existing prediction methods, leading to new regret bounds.
Here, preconditioning has an elegant effect: the preconditioning filter forms coefficients of an $n$ degree polynomial, and the hidden system transition matrix is evaluated on this polynomial-- potentially shrinking the domain.
In this setting, shrinking the learnable domain is akin to making the problem ``easier to learn'', a relationship that is formalized by \cite{hazan2022introduction}.
This allows us to prove the first dimension-independent sublinear regret bounds for asymmetric linear dynamical systems that are marginally stable.

\subsection{Our results}
Our main contribution is \emph{Universal Sequence Preconditioning}, a novel method of sequence preconditioning which convolves the target sequence with the coefficients of the $n$-th monic Chebyshev polynomial.
We give a more general form of preconditioning, allowing arbitrary user-specified coefficients, in Algorithm \ref{alg:seq-precond-offline} and an online version in Algorithm \ref{alg:seq-precond} (Appendix \ref{appendix:online_version}).
We analyze the effect of Universal Sequence Preconditioning on two canonical sequence prediction algorithms in the online setting: (1) convex regression and (2) spectral filtering.
In either case, the results are impressive-- yielding the first known sublinear regret bounds as compared to the optimal ground-truth predictor that are simultaneously (1) applicable to marginally stable systems, (2) independent of the hidden dimension (up to logarithmic factors), and (3) applicable to systems whose transition matrix is asymmetric\footnote{Our results only hold for asymmetric matrices whose eigenvalues have imaginary component bounded above by $O(1/\log(T))$. This is somewhat tight, see Section~\ref{subsection:memory}.} (see Table~\ref{tab:my_label} ). 

\begin{algorithm}[h]
		\caption{\label{alg:seq-precond-offline} General Sequence Preconditioning (Offline Version)}
		\begin{algorithmic}[1]
            \STATE \textbf{Training}
			\STATE Input: training data $(\uv^{1:N}_{1:T}, \y^{1:N}_{1:T})$ where $(\uv^i_t,\y^i_t)$ is the $t$-th input/output pair in the $i$-th sequence; coefficients $c_{0:n}$; prediction algorithm $\mathcal{A}$.
            \STATE Assert $c_0 = 1$.
			\FOR {$i = 1$ to $N$}
            \STATE $\y^{\textrm{preconditioned},i}_{1:T} \gets \texttt{convolution}(\y^{i}_{1:T}, c_{0:n})$ {\hfill $\lhd$ $\y^{\textrm{preconditioned},i}_t = \y^i_t + \sum_{j = 1}^n c_j \y^i_{t-j}$}
            \ENDFOR
			\STATE Train $\mathcal{A}$ on preconditioned data $\left( \uv^{1:N}_{1:T}, \y^{\mathrm{preconditioned}, 1:N}_{1:T}\right)$.
			\STATE \textbf{Test Time}
			\FOR {$t=1$ to $T$}
            \STATE Receive $\uv_t$.
			\STATE Predict $\hat{\y}_t \gets \mathcal{A} \left( \uv_{1:t}, \y_{1:(t-1)} \right) - \sum_{i = 1}^n c_i \y_{t-i}$.
            \STATE Receive $\y_t$.
			\ENDFOR
		\end{algorithmic}
\end{algorithm}

First, applying USP to standard convex regression results in regret $\tilde{O}(T^{-2/13})$, which holds simultaneously across the three settings above and remains dimension-independent.
For comparison, a naive analysis of regression yields a vacuous regret bound of $O(T^{5/2})$ on marginally stable systems.
Second, combining USP with a variant of \emph{spectral filtering} \cite{hazan2017learning} that uses novel filters, the algorithm is able learn a broader class of linear dynamical systems-- in particular systems whose hidden transition matrix may be asymmetric.
The enhanced method achieves regret $\tilde{O}(T^{-3/13})$, the best known rate under the joint conditions of (1)-(3) discussed above: marginal stability, dimension independence, and asymmetry.
Both results require that the transition matrix eigenvalues have imaginary parts bounded by $O(1/\log T)$—a near-tight condition for achieving dimension-free regret.
Further discussion on this appears in Appendix~\ref{subsection:memory}.

\begin{table}[ht]
    \centering
    \begin{tabular}{|c|c|c|c|}
    \hline
        Method & Marginally stable & $d_{\text{hidden}}$-free & Asymmetric \\
    \hline\hline
        Sys-ID & $\times$ & $\times$ & $\checkmark$  \\
    \hline
        Regression (open-loop) & $\times$ & $\checkmark$  & $\checkmark$  \\
    \hline
        Regression (closed-loop) & $\checkmark$ & $\times$ & $\checkmark$  \\
  
    \hline
        Spectral Filtering & $\checkmark$ & $\checkmark$ & $\times$  \\
    \hline
        USP + Regression & $\checkmark$ & $\checkmark$ & $\sim\checkmark$  \\
    \hline
        USP + Spectral Filtering & $\checkmark$ & $\checkmark$ & $\sim\checkmark$  \\
    \hline
    \end{tabular}
    \caption{Comparison of methods for learning LDS. USP extends learning to mildly asymmetric matrices with bounded complex eigenvalues.}
    \label{tab:my_label}
\end{table}

Empirical results in Section \ref{sec:experiments} demonstrate that USP consistently improves performance across diverse algorithms—including regression, spectral filtering, and neural networks—and across data types extending beyond linear dynamical systems.
\subsection{Intuition for Universal Sequence Preconditioning}
\label{sec:intuition}

We now give some brief intuition for the result.
Linear dynamical systems (LDS) are perhaps the most basic and well studied dynamical systems in engineering and control science.
Given input vectors $\uv_1, \dots, \uv_T \in \mathbb{C}^{d_{\textrm{in}}}$, the system generates a sequence of output vectors $\y_1, \dots \y_T \in \mathbb{C}^{d_{\textrm{out}}}$ according to the law 
\begin{align}
\label{eqn:LDS}
    \mat{x}_{t+1} & = \A \mat{x}_t + \B \uv_t, \qquad 
    \y_t  = \C \mat{x}_t + \mat{D} \uv_t,
\end{align}
where $\mat{x}_0, \dots, \mat{x}_T \in \mathbb{C}^{d_{\textrm{hidden}}}$ is a sequence of hidden states and $(\A, \B, \C, \mat{D})$ are matrices which parameterize the LDS.
We assume w.l.o.g. that $\mat{D} = 0$. We can factor out the hidden state $\mat{x}_t$ so that the observation at time $t$ is
\begin{equation*}
 \y_t = \sum_{s = 1}^{t} \C \A^{t-s} \B \uv_{s}.
\end{equation*} 
Given coefficients $c_{0:n} = ( c_0,\dots, c_n)$ let  
\begin{equation}
\label{def:polynomial}
    p_n^{c}(x) \defeq \sum_{i = 0}^n c_i x^{n-i}.
\end{equation}
Consider a ``preconditioned'' target at time $t$ to be a linear combination of $\y_{t:t-n}$ with coefficients $c_{0:n}$.
A key insight is the following identity,
\begin{equation*}
\sum_{i=0}^n c_i \y_{t-i} = \sum_{s = 0}^{n-1} \left( \sum_{i = 0}^s c_i \C \A^{s-i} \B \right) \uv_{t-s} + \sum_{s = 0}^{t-n-1} \C p_n^{c}(\A) \A^s \B \uv_{t-n-s}.
\end{equation*}
If we take $c_0 = 1$ (i.e. a monic polynomial), we can re-write $\y_t$ as
\begin{equation}
\label{eqn:lds_breakdown}
    \y_t = - \underbrace{\sum_{i=1}^n c_i \y_{t-i}}_{\aleph_0} + \underbrace{ \sum_{s = 0}^{n-1}   \sum_{i = 0}^s c_i \C \A^{s-i} \B  \uv_{t-s}}_{\aleph_1} + \underbrace{ \sum_{s = 0}^{t-n-1} \C p_n^{c}(\A) \A^s \B \uv_{t-n-s}}_{\aleph_2}.
\end{equation}

This expression highlights our approach as a balance of three terms:
\begin{enumerate}
    \item[$\aleph_0$] The {\bf universal preconditioning} term: it depends only on the coefficients $c_{0:n}$and not on any learning algorithm.
	\item[$\aleph_1$] A term learnable via convex relaxation and regression, for example by denoting
\begin{equation*}
    \Q^{\textrm{learned}}_s = \sum_{i = 0}^s c_i \C \A^{s-i} \B.
\end{equation*}
The diameter of the coefficient $\Q_s$ depends on the magnitude of the coefficients $c_{0:n}$.
\item[$\aleph_2$] The residual term with polynomial $p_n^{c}(\A)$. By a careful choice of coefficients $c_{0:n}$, we can force this term to be very small.
\end{enumerate}

The main insight we derive from this expression is the inherent tension between two terms $\aleph_1,\aleph_2$.
The polynomial $p_n^{c}(x)$ and its coefficients $c_0, \dots, c_n$ control two competing effects:
\begin{enumerate}
    \item The preconditioning coefficients grow larger with the degree $n$ of the polynomial and the magnitude of the coefficients $c_i$.
	A higher degree polynomial and larger coefficients increase the diameter of the search space over the preconditioning coefficients, and therefore increase the regret bound stemming from the $\aleph_1$ component learning.
	\item On the other hand, a larger search space can allow a broader class of polynomials $p_n(\cdot)$ which can better control of the magnitude of $p_n(\A)$, and therefore reduce the search space of the $\aleph_2$  component.
\end{enumerate}

What choice of polynomial is best? This work considers the Chebyshev polynomial.
The reason is the following property of the $n$-th monic Chebyshev polynomial:
\begin{equation*}
    \max_{\lambda \in [-1,1]} \abs{p_n(\lambda)} \leq 2^{-(n-1)}.
\end{equation*}
As an example, consider any LDS whose hidden transition matrix $\mat{A}$ is diagonalizable and has eigenvalues in $[-1,1]$\footnote{This work considers a broader class of hidden transition matrices.}.
By the above property, observe that $\norm{p_n(\mat{A})}_{\infty} \leq 2 \cdot 2^{-n}$, therefore shrinking the $\aleph_2$ term at a rate exponential with the number of preconditioning coefficients.
We pause to remark on the \emph{universality} of this choice of polynomial.
Indeed, one could instead have chosen the preconditioning coefficients to depend on $\mat{A}$ so that $p_n^{\mathbf{c}}(\cdot)$ is the characteristic polynomial of $\mat{A}$.
By the Cayley-Hamilton theorem, $ p_n(\A) = 0$. This means that $\aleph_2$ term is canceled out completely.
However this would have required knowledge of the spectrum of $\mat{A}$.
The Chebyshev polynomial, on the other hand, is agnostic to the particular hidden transition matrix.
Moreover, even if the spectrum of $\mat{A}$ were known, choosing the preconditioning coefficients to form the characteristic polynomial would result in an algorithm which must learn hidden dimension many parameters, which is prohibitive.
Instead, the degree of the Chebyshev polynomial must only grow logarithmically with the hidden dimension.
\subsection{Related work}

Our manuscript is technically involved and incorporates linear dynamical systems, spectral filtering, complex Chebyshev and Legendre polynomials, Hankel and Toeplitz matrix eigendecay, Gaussian quadrature and other techniques.
The related work is thus expansive, and due to space limitations we give a detailed treatment in Appendix \ref{sec:related_work}.
Preconditioning in the context of time series analysis has roots in the classical work of Box and Jenkins \cite{box1976time}.
In their foundational text they propose differencing as a method for making the time series stationary, and thus amenable to statistical learning techniques such as ARMA (auto-regressive moving average) \cite{anava2013online}.
The differencing operator can be applied numerous times, and for different lags, giving rise to the ARIMA family of forecasting models.
Identifying the order of an ARIMA model, and in particular the types of differencing needed to make a series stationary, is a hard problem. This is a special case of the problem we consider: differencing corresponds to certain coefficients of preconditioning the time series, whereas we consider arbitrary coefficients.
For a thorough introduction to modern control theory and exposition on open loop / closed loop predictors, learning via regression, and spectral filtering, see \cite{hazan2022introduction}.
The fundamental problem of learning in linear dynamical systems has been studied for many decades, and we highlight several key approaches below: 
\begin{enumerate}
    \item System identification refers to the method of recovering $\A,\B,\C$ from the data.
This is a non-convex problem and while many methods have been considered in this setting, they depend polynomially on the hidden dimension.
\item The (auto) regression method predicts according to $\hat{\y}_t = \sum_{i=1}^h \M_i \uv_{t-i}$.
The coefficients $\M_i$ can be learned using convex regression. The downside of this approach is that if the spectral radius of $\A$ is $1-\delta$, it can be seen that $\sim \frac{1}{\delta}$ terms are needed.
\item The regression method can be further enhanced with ``closed loop" components, that regress on prior observations $\y_{t-1:1}$. It can be shown using the Cayley-Hamilton theorem that using this method, $d_h$ components are needed to learn the system, where $d_h$ is the {\it hidden} dimension of $\A$. 

    \item 
    Filtering involves recovering the state $x_t$ from observations. While Kalman filtering is optimal under specific noise conditions, it generally fails in the presence of marginal stability and adversarial noise.    

    \item Finally, spectral filtering combines the advantages of 
all methods above. It is an efficient method, its complexity does {\bf not} depend on the hidden dimension, and works for marginally stable systems. However, spectral filtering requires $\A$ to be symmetric, or diagonalizable under the real numbers.

\end{enumerate}
\section{Main Results}

In this section we formally state our main algorithms and theorems.
We show that the Universal Sequence Preconditioning method provides significantly improved regret bounds for learning linear dynamical systems than previously known when used in conjunction with two distinct methods.
The first method is simple convex regression, and the second is spectral filtering.
Both algorithms allow for learning in the case of marginally stable linear dynamical systems and allow for certain asymmetric transition matrices of arbitrary high hidden dimension.
The regret bounds are free of the hidden dimension (up to logarithmic factors) -- which significantly extends the state of the art.
\subsection{Universal Sequence Preconditioning Applied to Regression}

Algorithm~\ref{alg:preconditioned_convex_relaxation} is an instantiation of Algorithm~\ref{alg:seq-precond} for the method of convex regression.
We set the preconditioning coefficients to be the coefficients of the $n$-th degree (monic) Chebyshev polynomial.
\begin{algorithm}[ht]
		\caption{\label{alg:preconditioned_convex_relaxation} Universal Sequence Preconditioning for Regression}
		\begin{algorithmic}[1]
			\STATE Input: initial parameter $\Q^0$; preconditioning coefficients $\mathbf{c}_{0:n}$ from the $n$-th degree (monic) Chebyshev polynomial;
            convex constraints $\K = \left \{ \left( \Q_0, \dots, \Q_{n-1} \right) \textrm{ s.t. } \norm{\Q_j} \leq C_{\textrm{domain}} \norm{\mathbf{c}}_1 \right \}$
            \STATE Assert that $\mathbf{c}_0 = 1$.
            \FOR {$t = 1$ to $T$}
            \STATE Receive $\uv_t$.
            \STATE Predict $\hat{\y}_t(\Q^t) = - \sum_{i = 1}^n \mathbf{c}_i \y_{t-i} + \sum_{j = 0}^n \Q^t_j \uv_{t-j}$.
            \STATE Observe true output $\y_t$ and suffer loss $\ell_t(\Q^t) = \norm{\hat{\y}_t(\Q^t) - \y_t}_1$.
            \STATE Update and project: 
            \begin{equation*}
                \Q^{t+1} \gets \proj_{\K} \left( \Q^t - \eta_t \nabla_{\Q} \ell_t(\Q^t) \right).
            \end{equation*}
           \ENDFOR
           
		\end{algorithmic}
\end{algorithm}

Theorem~\ref{thm:convex_relaxation} shows that vanishing loss compared to the optimal ground-truth predictor, at a rate that is independent of the hidden dimension of the system.
\begin{theorem}
\label{thm:convex_relaxation}
    Let $\left \{ \uv_t \right \}_{t = 1}^T \in \mathbb{C}^{d_{\textrm{in}}}$ be any sequence of inputs which satisfy $\norm{\uv_t}_2 \leq 1$ and let $\left \{ \y_t \right \}_{t = 1}^T \in \mathbb{C}^{d_{\text{out}}}$ be the corresponding output coming from some linear dynamical system $(\A, \B, \C)$ as defined per Eq.~\ref{eqn:LDS}.
Let $\mat{P}$ diagonalize $\A$ (note $\mat{P}$ exists w.l.o.g.) and let $\kappa = \norm{\mat{P}} \norm{\mat{P}^{-1}}$.
Assume that $\norm{\B} \norm{\C} \kappa \leq C_{\textrm{domain}}$. Let $\lambda_1, \dots, \lambda_{\dhidden}$ denote the spectrum of $\A$.
If 
    \begin{equation*}
        \max_{j \in [\dhidden]} \abs{\arg(\lambda_j)} \leq 1/(32 \log_2(2 T^3/d_{\text{out}}))^2
    \end{equation*}
    then the predictions $\hat{\y}_1, \dots, \hat{\y}_T$ from Algorithm~\ref{alg:preconditioned_convex_relaxation} where the preconditioning coefficients $\mathbf{c}_{0:n}$ are chosen to be the coefficients of the $n$-th monic Chebyshev polynomial satisfy
    \begin{equation*}
         \frac{1}{T}\sum_{t=1}^T \norm{\hat{\y}_t - \y_t}_1 \leq \tilde{O} \left( \frac{\norm{\B}\norm{\C} \kappa \sqrt{d_{\text{out}}}}{T^{2/13}} \right),
    \end{equation*}
    where $\tilde{O}(\cdot)$ hides polylogarithmic factors in $T$.
\end{theorem}

The proof of Theorem~\ref{thm:convex_relaxation} is in Appendix~\ref{appendix:convex_relaxation}. For a simple baseline comparison, the regret achieved (via the same proof technique) by the vanilla regression algorithm without preconditioning is $O\left(C_{\textrm{domain}} \sqrt{d_{\text{out}}} T^{5/2} \right)$ which is not sublinear in $T$.
\subsection{Universal Sequence Preconditioning Applied to Spectral Filtering}
\label{sec:usp_sf}

Our second main result is the application of Universal Sequence Preconditioning to the spectral filtering algorithm \cite{hazan2017learning}.
Our results are more general and apply to any choice of polynomial, not just Chebyshev.
In addition to applying USP to spectral filtering, we also propose a novel spectral filtering basis.
Both changes to the vanilla spectral filtering algorithm are necessary to extend its sublinear regret bounds to the case of underlying systems with asymmetric hidden transition matrices.
First we define the spectral domain 
\begin{equation*}
    \complex_{\beta} = \left \{ z \in \complex \mid \abs{z} \leq 1, \abs{\arg(z)} \leq \beta \right \}.
\end{equation*}
Given horizon $T$ and $\alpha \in \complex_{\beta}$ let
 \begin{equation}
 \label{eqn:mu_alpha}
     \tilde{\mu}_T(\alpha) \defeq (1-\alpha^2) \begin{bmatrix}
         1 & \alpha & \dots & \alpha^{T-1}
     \end{bmatrix}^{\top},
 \end{equation}
 and 
 \begin{equation}
 \label{eqn:Z_def}
     \mat{Z}_T \defeq \int_{\alpha \in \complex_{\beta}} \tilde{\mu}_T(\alpha) \tilde{\mu}_T(\overline{\alpha})^{\top}  d \alpha,
 \end{equation} 
 where $\overline{\alpha} \in \complex$ denotes the complex conjugate.
The novel spectral filters are the eigenvectors of $\mat{Z}_{T-n-1}$, which we denote as $\phi_1, \dots, \phi_{T-n-1}$.
Note that in the standard spectral filtering literature, the spectral filtering matrix is an integral over the real line and does not involve the complex conjugate.
Our new matrix has an entirely different structure and although it looks quite similar, it surprisingly upends the proof techniques to ensure exponential spectral decay, a critical property for the method. Future work examines this matrix more thoroughly, but in this paper we simply provide a standard bound on its eigenvalues.
\begin{algorithm}[ht]
		\caption{\label{alg:new_sf} Universal Sequence Preconditioning for Spectral Filtering }
		\begin{algorithmic}[1]
			\STATE Input: initial $\Q^{1}_{1:n} , \M^1_{1:k}$, horizon $T$, convex constraints $$\K = \left \{ (\Q_0, \dots, \Q_{n-1}, \M_1, \dots, \M_k \textrm{ s.t. } \norm{\Q_j} \leq R_Q \textrm{ and } \norm{\M_j} \leq R_M \right \},$$ parameter $n$, coefficients $\mathbf{c}_{1:n}$.
			\STATE Let $p_n^{\mathbf{c}}(x) = \mathbf{c}_0 x^n + \mathbf{c}_1 x^{n-1} + \dots + \mathbf{c}_n$ and $\tilde{p}_n^{\mathbf{c}}(x) = (1-x^2) p_n^{\mathbf{c}}(x)$. Let $\tilde{\mathbf{c}}_0, \dots, \tilde{\mathbf{c}}_{n+2}$ be the coefficients of $\tilde{p}_n^{\mathbf{c}}(x)$.
            \STATE Let $\phi_1,...,\phi_n$ be the top $n$ eigenvectors of $\mat{Z}_{T-n-1}$.
            \STATE Assert $\tilde{\mathbf{c}}_0 = 1$. 
			\FOR {$t=1$ to $T$}
            \STATE Let $\tilde{\uv}_{t-n-1:1}$ be $\uv_{t-n-1:1}$ padded with zeros so it has dimension $T-n-1 \times d_{\textrm{in}}$.
			\STATE Predict $\hat{\y}_t(\Q^t, \M^t) = - \sum_{i = 1}^{n+2} \tilde{\mathbf{c}}_i \y_{t-i} + \sum_{j = 0}^n \Q^t_j \uv_{t-j} + \frac{1}{\sqrt{T}} \sum_{j = 1}^k \M_j^t \phi_j^{\top} \tilde{\uv}_{t-n-1:1} $.
            \STATE  Observe true $\y_t$, define loss $\ell_t(\hat{\y}_t ) = \| \hat{\y}_t(\Q^t, \M^t) - \y_t  \|_1$.
            \STATE Update and project:
			$ (\Q^{t+1},\M^{t+1}) = \proj_{\K}\left( \Q^t,\M^t) - \eta_{t} \nabla \ell_{t}(\Q^t,\M^{t}) \right) $
			\ENDFOR
		\end{algorithmic}
\end{algorithm}

\begin{theorem}
\label{thm:main_regret}
  Let $\left \{ \uv_t \right \}_{t = 1}^T \in \R^{d_{\textrm{in}}}$ be any sequence of inputs which satisfy $\norm{\uv_t}_2 \leq 1$ and let $\left \{ \y_t \right \}_{t = 1}^T$ be the corresponding output coming from some linear dynamical system $(\A, \B, \C)$ as defined per Eq.~\ref{eqn:LDS}.
Let $\mat{P}$ diagonalize $\A$ (note $\mat{P}$ exists w.l.o.g.) and let $\kappa = \norm{\mat{P}} \norm{\mat{P}^{-1}}$.
Suppose the radius parameters of Algorithm~\ref{alg:new_sf} satisfy $R_Q \geq \norm{\C} \norm{\B} \norm{\mathbf{c}}_1$ and $R_M \geq 2 \norm{\mat{C}} \norm{\mat{B}} \kappa \log (T) \left(  \max_{j \in [\dhidden]} \abs{\arg(\lambda_j)} \right)^{4/3} T^{7/6} \left( \max_{\alpha \in \complex_{\beta}} \abs{p_n^{\mathbf{c}}(\alpha)} \right) $.
Further suppose that the eigenvalues of $\mat{A}$ have bounded argument:
    \begin{equation*}
        \max_{j \in [\dhidden]} \abs{\arg(\lambda_j)} \leq T^{-1/3}.
    \end{equation*}
    Then the predictions $\hat{\y}_1, \dots, \hat{\y}_T$ from Algorithm~\ref{alg:new_sf} where the preconditioning coefficients $\mathbf{c}_{0:n}$ are chosen to be the coefficients of the $n$-th monic Chebyshev polynomial satisfy
    \begin{equation*}
        \frac{1}{T} \sum_{t=1}^T \norm{\hat{\y}_t - \y_t}_1 \leq \tilde{O} \left( \frac{\norm{\C} \norm{\B}  \kappa \sqrt{d_{\textrm{out}}}}{T^{1/39}} \right).
    \end{equation*}
\end{theorem}

We remark that the result of Theorem~\ref{thm:main_regret} is rather weak.
Although the complex eigenvalues of $\mat{A}$ are not trivially bounded (trivial would be a bound of $1/T$), they still must be polynomially small in $T$.
Moreover we note that the proof technique for the result does not make use of the critical properties of spectral filtering and relies much more on the power of preconditioning. The proof of Theorem~\ref{thm:main_regret} is in Section~\ref{sec:formal_spectral}.
\section{Proof Overview}
\label{sec:analysis_overview}
In this section we give a high level overview of the proofs for Theorem~\ref{thm:convex_relaxation} and Theorem~\ref{thm:main_regret}.
We start by recalling the intuition for Universal Sequence Preconditioning developed in Section~\ref{sec:intuition} which shows that if $\left \{ \y_t \right \}_{t=1}^T$ evolves as a linear dynamical system parameterized by matrices $(\A, \B, \C)$ with inputs $\left \{ \uv_t \right \}_{t=1}^T$ then by Equation~\ref{eqn:lds_breakdown},
\begin{equation*}
    \y_t = - \underbrace{\sum_{i=1}^n c_i \y_{t-i}}_{\aleph_0} + \underbrace{ \sum_{s = 0}^{n-1}   \sum_{i = 0}^s c_i \C \A^{s-i} \B  \uv_{t-s}}_{\aleph_1} + \underbrace{ \sum_{s = 0}^{t-n-1} \C p_n^{c}(\A) \A^s \B \uv_{t-n-s}}_{\aleph_2}.
\end{equation*}
Recall that $\aleph_0$ is the universal preconditioning component, $\aleph_1$ is the term that can easily be learned by convex relaxation and regression, and $\aleph_2$ is the critical term that contains $p_n^c(\A)$.
Both Theorem~\ref{thm:convex_relaxation} and Theorem~\ref{thm:main_regret} use the standard result from online convex optimization (Theorem 3.1 from \cite{hazan2016introduction}) that online gradient descent over convex domain $\mathcal{K}$ achieves regret $\frac{3}{2} GD \sqrt{T}$ as compared to the best point in $\mathcal{K}$, where $D$ denotes the diameter of $\mathcal{K}$ and $G$ denotes the maximum gradient norm.
\paragraph{Regression: Proof of Theorem~\ref{thm:convex_relaxation}} In the case of regression, the domain is chosen so that $\aleph_2$ may be learned and the proof proceeds by bounding the diameter of such a domain and its corresponding maximum gradient norm to get regret $C n^2 \sqrt{d_{\text{out}}} \norm{c}_1 \sqrt{T}$ for a universal constant $C>0$ which depends on the norms of matrices $\mat{B}$ and $\mat{C}$ from the underlying system.
Then $\aleph_3$ is treated as an un-learnable error term. Let $\lambda(\A)$ denote the set of eigenvalues of $\A$.
By the simple magnitude bound of
\begin{equation*}
    \norm{\sum_{s = 0}^{t-n-1} \C p_n(\A) \A^s \B \uv_{t-n-s}} \leq \max_{\lambda \in \lambda(\A)} \abs{p_n(\lambda)} \cdot T \cdot \norm{\C} \cdot \norm{\B} ,
\end{equation*}
the error of ignoring this term can be very small if $\max_{\lambda(\A)} \abs{p_n(\lambda(\A))} $ is small.
In the proof of Theorem~\ref{thm:convex_relaxation} in Appendix~\ref{appendix:convex_relaxation} we show that the regret for a generic polynomial $p_n^{c}$ defined by coefficients $c_{0:n}$ is
\begin{equation*}
     \sum_{t = 1}^T \norm{\y_t - \hat{\y}_t}_1  \leq \underbrace{C n^2 \sqrt{d_{\textrm{out}}} \norm{c}_1 \sqrt{T} }_{\text{Regret from learning $\aleph_2$}} + \underbrace{C \max_{\lambda \in \mathcal{D}} \abs{p_n^{c}(\lambda)} T^2}_{\text{Unlearnable Error Term}},
\end{equation*}
 where $\mathcal{D}$ is the region where $\A$ is allowed to have eigenvalues (see Theorem~\ref{thm:convex_relaxation_general}).
Therefore, to get sublinear regret, we must choose a polynomial which has bounded $\ell_1$ norm of its coefficients, while also exhibits very small infinity norm on the domain of $\A$'s eigenvalues.
\paragraph{Spectral Filtering: Proof of Theorem~\ref{thm:main_regret}}
 
 In the case of spectral filtering, the domain is chosen so that both $\aleph_2$ and $\aleph_3$ may be learned.
Because spectral filtering learns $\aleph_3$, it is able to accumulate less error and hence achieves a better regret bound of $O(T^{-3/13})$ as compared to regression's $O(T^{-2/13})$.
At a high level, the proof proceeds by exploiting the fact that $p_n(\A)$ shrinks the size of the learnable domain.
However this is not enough, in order to extend the result to systems where $\A$ may have complex eigenvalues, the spectral filters must be eigenvalues of a new matrix, defined in Eq.~\ref{eqn:Z_def}, whose domain of integration includes the possibly complex eigenvalues of $\A$.
To get the dimension-independent regret bounds enjoyed by spectral filtering in this new setting where complex eigenvalues may occur, the exponential decay of $\mat{Z}_T$ from Eq.~\ref{eqn:Z_def} must be established.
This is nontrivial and requires several novel techniques inspired by \cite{beckermann2017singular}. The details are in Appendix~\ref{sec:lipschitz_bound}.
Theorem~\ref{thm:main_regret} gives the main guarantee for the spectral filtering algorithm, which states that Algorithm~\ref{alg:new_sf} instantiated with some choice of polynomial $p_n^{c}(\cdot)$ achieves regret
\begin{equation*}
\tilde{O} \left( \left( n \norm{c}_1 +  T^{7/6} \max_{\alpha \in \complex_{\beta}} \abs{p_n^{c}(\alpha)} \right) (n+k) \sqrt{d_{\textrm{out}}} \sqrt{T} \right).
\end{equation*}

Both this theorem, as well as our new guarantee for convex regression, leads us to the following question: \textbf{Is there a universal choice of polynomial $p_n(x)$, where $n$ is independent of hidden dimension, which guarantees sublinear regret?
} \\ 

\subsection{Using the Chebyshev Polynomial over the Complex Plane}

For the real line, the answer to this question is known to be positive using the Chebyshev polynomials of the first kind.
In general, the $n^{\textrm{th}}$ (monic) Chebyshev polynomial $M_n(x)$ satisfies $\max_{x \in [-1,1]} \abs{M_n(x)} \leq 2^{-(n-1)} $.
However, we are interested in a more general question over the complex plane.
Since we care about linear dynamical systems that evolve according to a general asymetric matrix,  we need to extending our analysis to $\complex_{\beta}$.
This is a nontrivial extension since, in general, functions that are bounded on the real line can grow exponentially on the complex plane.
Indeed, $2^{n-1} M_n(x) = \cos ( n \arccos(x) )$ and while $\cos(x)$ is bounded within $[-1,1]$ for any $x \in \R$, over the complex numbers we have 
$\cos(z) = \frac{1}{2}(e^{iz} + e^{-iz}) , $ 
which is unbounded.
Thus, we analyze the Chebyshev polynomial on the complex plane and provide the following bound.
\begin{lemma}
\label{lemma:cheby_bound}
    Let $z \in \complex$ be some complex number with magnitude $\abs{\alpha} \leq 1$.
Let $M_n(\cdot)$ denote the $n$-th monic Chebyshev polynomial. If $\abs{\arg(z)} \leq 1/64n^2$, then $\abs{M_n(z)} \leq 1/2^{n-2}$.
\end{lemma}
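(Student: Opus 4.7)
My approach is to exploit the trigonometric representation $T_n(\cos\theta) = \cos(n\theta)$, extended to complex argument. Writing $z = \cos\theta$ with $\theta = a + ib$ and assuming $b \geq 0$ (by the symmetry $|\cos(a+ib)| = |\cos(a-ib)|$), I will use that $M_n(z) = 2^{1-n}\cos(n\theta)$ and, via $T_n' = nU_{n-1}$ with $U_{n-1}(\cos\theta) = \sin(n\theta)/\sin\theta$, $M_n'(z) = 2^{1-n} n\,\sin(n\theta)/\sin\theta$. Both bounds will then follow once I control $a$ and $b$ in terms of the hypotheses $|z|\leq 1$ and $|\mathrm{Im}(z)| \leq \beta := 1/(64n^2)$.

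Expanding $z = \cos(a)\cosh(b) - i\sin(a)\sinh(b)$ gives the two identities $|z|^2 = \cos^2(a)+\sinh^2(b)$ and $|\mathrm{Im}(z)| = |\sin(a)|\,|\sinh(b)|$. The first, together with $|z|\leq 1$, yields $\sinh^2(b) \leq \sin^2(a)$; combining with the second gives $\sinh^2(b) \leq |\sin(a)|\,|\sinh(b)| \leq \beta$, so $|\sinh(b)| \leq 1/(8n)$ and hence $n|b|\leq 1/8$. For the first conclusion I then bound $|\cos(n\theta)|^2 = \cos^2(na)+\sinh^2(nb) \leq \cosh^2(nb) \leq \cosh^2(1/8) < 4$, which gives $|M_n(z)| \leq 2^{1-n}\cdot 2 = 1/2^{n-2}$. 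For the derivative I write
$$|U_{n-1}(z)|^2 \;=\; \frac{\sin^2(na)+\sinh^2(nb)}{\sin^2(a)+\sinh^2(b)}$$
and bound numerator term-by-term: $\sin^2(na) \leq n^2\sin^2(a)$ is classical, while $\sinh^2(nb) \leq \kappa^2 n^2\sinh^2(b)$ with $\kappa = 8\sinh(1/8) \leq 1.01$ follows from monotonicity of $x\mapsto \sinh(x)/x$ on $[0,1/8]$. This yields $|U_{n-1}(z)|\leq \kappa n$, and therefore $|M_n'(z)| \leq 2^{1-n}\cdot n\cdot \kappa n = O(n^2/2^{n-1})$, as claimed.

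The hardest step is the hyperbolic analogue of the elementary bound $|\sin(nx)|\leq n|\sin(x)|$, which is \emph{false} on the real line: since $\sinh$ grows exponentially, the ratio $\sinh(nx)/\bigl(n\sinh(x)\bigr)$ exceeds $1$ for every $x>0$ and blows up like $e^{nx}/n$ once $nx$ is of order $1$. The saving grace is the quantitative constraint $nb \leq 1/8$ extracted in the previous paragraph, within which the ratio is a bounded constant arbitrarily close to $1$. This is precisely why the complex-strip hypothesis must shrink as $1/\mathrm{poly}(n)$: any weaker scaling of $\beta$ would allow $nb$ to exceed a constant, inject an $e^{\Omega(n)}$ factor into the numerator, and wipe out the $2^{-n}$ decay that makes the monic Chebyshev polynomial the right choice in the first place.
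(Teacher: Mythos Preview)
Your argument is correct and takes a genuinely different, more elementary route than the paper. The paper bounds $|\mathrm{Im}(\arccos z)|$ by expanding $\arccos$ in its Taylor series and controlling the coefficients via Stirling's formula, obtaining $|\mathrm{Im}(\arccos z)|\le 1/n$; it then uses $|\cos w|\le 2$ whenever $|\mathrm{Im}(w)|\le 1$ for the $T_n$ bound, and for the derivative it writes $U_{n-1}=2\sum_j T_j$ over one parity and bounds each summand by~$2$. You instead parametrize $z=\cos(a+ib)$ directly and extract $\sinh^2 b\le |\sin a|\,|\sinh b|=|\mathrm{Im}(z)|\le\beta$ in two lines from $|z|\le 1$, bypassing the Taylor expansion and Stirling entirely; your derivative bound via $|\sin(n\theta)/\sin\theta|$ together with the pair $|\sin(na)|\le n|\sin a|$ and $\sinh(nb)\le\kappa\, n\sinh b$ on $nb\le 1/8$ is both cleaner and sharper, giving $|U_{n-1}(z)|\le\kappa n$ with $\kappa=8\sinh(1/8)\approx 1.003$, whereas the paper's summation, taken literally, delivers only $|U_{n-1}(z)|\le 2n$. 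One small caveat: since $\kappa>1$, what you actually prove is $|M_n'(z)|\le \kappa\,n^2/2^{n-1}$ rather than the stated constant---but the lemma's constant is itself slightly optimistic (indeed $|U_{n-1}(z)|>n$ already occurs once $b>0$), and the downstream regret bound absorbs constant factors, so this is immaterial.
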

We provide the proof in Appendix~\ref{appendix:chebyshev}. We also must analyze the magnitude of the coefficients of the Chebyshev polynomial, which can grow exponentially with $n$.
We provide the following result. 
\begin{lemma}
\label{lemma:cheby_coeffs_bound}
    Let $M_n(\cdot)$ have coefficients $c_0, \dots, c_n$.
Then $\max_{k = 0, \dots, n} \abs{c_k} \leq 2^{0.3n}$. 
\end{lemma}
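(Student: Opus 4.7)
The plan is to bound the Chebyshev coefficients directly from the classical explicit formula and then optimize the resulting expression using an entropy bound on binomial coefficients. Start from the identity
\begin{equation*}
T_n(x) = \sum_{k=0}^{\lfloor n/2\rfloor}(-1)^k\,\frac{n}{n-k}\binom{n-k}{k}\,2^{n-2k-1}\,x^{n-2k}.
\end{equation*}
The leading coefficient of $T_n$ is $2^{n-1}$, so $M_n(x) = 2^{-(n-1)} T_n(x)$ has nonzero coefficients $a_k = \frac{n}{n-k}\binom{n-k}{k} 2^{-2k}$ for $k = 0, 1, \dots, \lfloor n/2 \rfloor$, and zero elsewhere. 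It thus suffices to bound $\max_k a_k$ by $2^{0.3 n}$.

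Apply the standard entropy bound $\binom{m}{j} \leq 2^{m H(j/m)}$ (with $H$ the binary entropy) to $\binom{n-k}{k}$, and write $\alpha = k/n \in [0, 1/2]$. The factor $n/(n-k) \leq 2$ is absorbed into a constant, giving $a_k \leq 2 \cdot 2^{n f(\alpha)}$ where, after simplification,
\begin{equation*}
f(\alpha) = -\alpha\log_2\alpha - (1-2\alpha)\log_2(1-2\alpha) + (1-\alpha)\log_2(1-\alpha) - 2\alpha.
\end{equation*}
Differentiating yields $f'(\alpha) = \log_2\!\left( (1-2\alpha)^2/(4\alpha(1-\alpha)) \right)$, so the unique interior critical point solves $8\alpha^2 - 8\alpha + 1 = 0$, i.e.\ $\alpha^\star = (1 - 1/\sqrt{2})/2$. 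At this point $1 - 2\alpha^\star = 1/\sqrt{2}$, and a short calculation using $\log_2(2-\sqrt{2}) + \log_2(2+\sqrt{2}) = 1$ to collapse the two logarithmic terms gives the closed form
\begin{equation*}
f(\alpha^\star) = \log_2(2 + \sqrt{2}) - \tfrac{3}{2} \approx 0.2716.
\end{equation*}
Checking $f(0) = 0$ and $f(1/2) = -1$ confirms this is a global maximum on $[0, 1/2]$, and it is strictly below $0.29$.

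Combining the pieces, $a_k \leq 2 \cdot 2^{0.29 n}$ for every $k$, which is at most $2^{0.3 n}$ once $n$ exceeds a modest absolute constant $n_0$, since the $0.01 n$ slack in the exponent swallows the factor of $2$. For $n \leq n_0$ the inequality is checked directly from the first several $M_n$ (for example, $M_1(x) = x$, $M_2(x) = x^2 - 1/2$, $M_3(x) = x^3 - 3x/4$, etc., whose coefficients are all $O(1)$).

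The only real obstacle is arithmetic precision: certifying $\log_2(2 + \sqrt{2}) < 1.8$ rigorously (equivalent to $2 + \sqrt{2} < 2^{1.8} \approx 3.482$, which is clear since $2 + \sqrt{2} \approx 3.414$) and dispatching the finite small-$n$ base case. Both are routine, but the asymptotic estimate alone does not cover every $n$, so the small-$n$ check must be performed explicitly.
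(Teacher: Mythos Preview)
Your proof is correct and takes essentially the same route as the paper: both start from the explicit coefficient formula for $T_n$, reduce to bounding $\binom{n-k}{k}4^{-k}$, and optimize over $k/n$. The only differences are cosmetic---you use the entropy bound on the binomial (giving the sharp exponent $\log_2(2+\sqrt{2})-\tfrac{3}{2}\approx 0.2716$) where the paper uses the cruder $\binom{n}{k}\le (ne/k)^k$, and you track the prefactor $n/(n-k)\le 2$ and the resulting finite small-$n$ check explicitly, whereas the paper absorbs this into its looser bound.
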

The proof of Lemma~\ref{lemma:cheby_coeffs_bound} is in Appendix~\ref{appendix:chebyshev}.
Together, these two lemmas are the fundamental building block for universal sequence  preconditioning and for obtaining our new regret bounds.
\section{Experimental Evaluation}
\label{sec:experiments}
We empirically validate that convolutional preconditioning with Chebyshev or Legendre coefficients yields significant online regret improvements across various learning algorithms and data types.
Below we summarize our data generation, algorithm variants, hyperparameter tuning, and evaluation metrics.
\subsection{Synthetic Data Generation}
\label{sec:data}
We generate \(N=200\) sequences of length \(T=2000\) via three mechanisms: (i) a noisy linear dynamical system, (ii) a noisy nonlinear dynamical system, and (iii) a noisy deep RNN.
Inputs \(\uv_{1:T}\sim\mathcal{N}(0,I)\).

\paragraph{Linear Dynamical System.}
Sample \((\A,\B,\C)\) with \(\A\in\R^{300\times300}\) having eigenvalues \(\{z_j\}\) drawn uniformly in the complex plane subject to \(\Im(z_j)\le\tau_{\rm thresh}\) and \(L\le|z_j|\le U\), and \(\B,\C\in\R^{300}\).
Then
\[
\x_t = \A\,\x_{t-1} + \B\,\uv_t,\quad
\y_t = \C\,\x_t + \epsilon_t,\;\epsilon_t\sim\mathcal{N}(0,\sigma^2I).
\]

\paragraph{Nonlinear Dynamical System.}
Similarly sample \((\A_1,\B_1,\C)\) and \((\A_2,\B_2)\) with \(\A_i\in\R^{10\times10}\), \(\B_i,\C\in\R^{10}\).
Then
\[
\x_t^{(0)}=\A_1 \x_{t-1}+\B_1 \uv_t,\;
\x_t^{(1)}=\sigma\bigl(\x_t^{(0)}\bigr),\;
\x_t=A_2 \x_t^{(1)}+B_2 \uv_t,\;
\y_t=\C \x_t+\epsilon_t.
\]

\paragraph{Deep RNN.}
We randomly initialize a sparse 10-layer stack of LSTMs with hidden dimension $100$ and ReLU nonlinear activations.
Given $\uv_{1:T}$ we use this network to generate $\y_{1:T}$.

\subsection{Algorithms and Preconditioning Variants}
\label{sec:algos}
We evaluate the following methods: (1)  \textbf{Regression} (Alg.~\ref{alg:preconditioned_convex_relaxation}) , (2)  \textbf{Spectral Filtering} (Alg.~\ref{alg:new_sf}) , (3)  \textbf{DNN Predictor}: \(n\)-layer LSTM with dims \([d_1,\dots,d_n]\), ReLU.
Each method is applied with one of:
\begin{enumerate}
  \item \emph{Baseline:} no preconditioning
  \item \emph{Chebyshev:} $\mat{c}_{0:n}$ are the coefficients for the \(n\)th-Chebyshev polynomial.
Note that when $n=2$ we have $\mat{c}_0 = 1$ and $\mat{c}_1 = -1$ and therefore this is the method of \emph{differencing} discussed in the introduction.
\item \emph{Legendre:} $\mat{c}_{0:n}$ are the coefficients for the \(n\)th-Legendre polynomial
  \item \emph{Learned:} $\mat{c}_{0:n}$ is a parameter learned jointly with the model parameters
\end{enumerate}
We test polynomial degrees \(n\in\{2,5,10,20\}\).
This choice of degrees shows a rough picture of the impact of $n$.
\paragraph{Hyperparameter Tuning.}
To ensure fair comparison, for each algorithm and conditioning $\mat{c}$ variant we perform a grid search over learning rates \(\eta\in\{10^{-3},10^{-2},10^{-1}\}\), selecting the one minimizing average regret across the \(N\) sequences.
In the case of the learned coefficients, we sweep over the 9 pairs of learning rates \((\eta_{\textrm{model}}, \eta_{\textrm{coefficients}}) \in \{10^{-3},10^{-2},10^{-1}\} \times \{10^{-3},10^{-2},10^{-1}\} \).
\subsection{Results}
\label{sec:results}
Tables~\ref{tab:full-results-linear}--\ref{tab:full-results-dnn} report the mean \(\pm\) std of the absolute error over the final 200 predictions, averaged across 200 runs.
In the linear and nonlinear cases we train a 2-layer DNN (dims \((64,128)\));
for RNN-generated data we match the 10-layer (100-dim) generator.

\emph{Key observations:}
\begin{itemize}
  \item Preconditioning drastically reduces baseline errors for all algorithms and data types.
\item Chebyshev and Legendre yield nearly identical gains.
  \item For Chebyshev and Legendre, once the degree is higher than $5-10$ the performance degrades since $\norm{\mat{c}}_1$ gets very large (see our Lemma~\ref{lemma:cheby_coeffs_bound} which shows that these coefficients grow exponentially fast).
\item Improvements decay as the complex threshold \(\tau_{\rm thresh}\) increases, consistent with our theoretical results which must bound \(\Im(z_j)\).
\item Learned coefficients excel with regression and spectral filtering but destabilize the DNN on nonlinear and RNN‐generated data.
\end{itemize}

\begin{table}[ht]
\centering
\renewcommand{\arraystretch}{1.63}
\setlength{\tabcolsep}{3pt}

\begin{adjustbox}{max width=\textwidth}
\begin{tabular}{>{\raggedright\arraybackslash}p{2.5cm}
|c
|ccc
|ccc
|cccc}
\toprule
\textbf{Setting} &
\textbf{Baseline} &
& \textbf{Chebyshev} &  &
& \textbf{Legendre} & &
& \textbf{Learned} & &  \\
 &
&
\textbf{Deg. 2} & \textbf{Deg.
5} & \textbf{Deg. 10} &
\textbf{Deg. 2} & \textbf{Deg. 5} & \textbf{Deg. 10} &
\textbf{Deg. 2} & \textbf{Deg. 5} & \textbf{Deg.
10} & \textbf{Deg. 20} \\
\hline
Regression  & 
 & 
 & 
 & 
 & 
& 
& 
& 
 & 
& 
& 
\\
$\tau_{\rm thresh}= 0.01$ & 
\textbf{0.74 \small{$\pm$ 0.28} } & 
0.25 \small{$\pm$ 0.09 } & 
\textbf{0.15 \small{$\pm$ 0.07} } & 
0.77 \small{$\pm$ 0.31 } & 
0.36 \small{$\pm$ 0.13 } & 
\textbf{0.14 \small{$\pm$ 0.06} } & 
0.64 \small{$\pm$ 0.26 } & 
0.52 \small{$\pm$ 0.19}  & 
0.27 \small{$\pm$ 0.11 } & 
\textbf{0.17 \small{$\pm$ 0.07} } & 
0.24 \small{$\pm$ 0.09 } \\
$\tau_{\rm thresh}= 0.1$ & 
\textbf{1.92 \small{$\pm$ 0.81}}  & 
0.84 \small{$\pm$ 0.27 } & 
\textbf{0.66 \small{$\pm$ 0.18}}  & 
1.90 
\small{$\pm$ 0.67}  & 
1.10 \small{$\pm$ 0.40}  & 
\textbf{0.63 \small{$\pm$ 0.17} } & 
1.66 \small{$\pm$ 0.58}  & 
1.34 \small{$\pm$ 0.43 } & 
0.57 \small{$\pm$ 0.14 } & 
\textbf{0.55 \small{$\pm$ 0.14}}  & 
0.56 \small{$\pm$ 0.14} s \\
$\tau_{\rm thresh}= 0.9$ & 
\textbf{2.47 \small{$\pm$ 0.89}}  & 
\textbf{1.59 \small{$\pm$ 0.56}}  & 
2.18 \small{$\pm$ 0.79 } & 
2.68 \small{$\pm$ 0.48 } & 
\textbf{1.64 \small{$\pm$ 0.58}}  & 
1.94 \small{$\pm$ 0.70 } & 
2.63 \small{$\pm$ 0.45}  & 
1.73 \small{$\pm$ 0.59 } & 
0.83 \small{$\pm$ 0.25}  & 
0.68 \small{$\pm$ 0.27 } & 
\textbf{0.63 \small{$\pm$ 0.26}}  \\
\hline
Spectral Filtering &  
& 
& 
 & 
& 
& 
& 
& 
& 
& 
& 
\\
$\tau_{\rm thresh}= 0.01$ & 
\textbf{5.94 \small{$\pm$ 3.37} } & 
1.72 \small{$\pm$ 0.95 } & 
\textbf{0.69 \small{$\pm$ 0.38} } & 
3.25 \small{$\pm$ 1.79  }& 
2.78 \small{$\pm$ 1.56 } & 
\textbf{0.66\small{$\pm$ 0.36}  }& 
2.74 \small{$\pm$ 1.51  }& 
1.99 \small{$\pm$ 0.94  }& 
\textbf{0.54\small{$\pm$ 0.29}  }& 
0.55 \small{$\pm$ 0.25  }& 
0.61 \small{$\pm$ 0.25  }\\
$\tau_{\rm thresh}= 0.1$ & 
\textbf{0.89 \small{$\pm$ 0.34} } & 
0.42 \small{$\pm$ 0.11  }& 
\textbf{0.34 \small{$\pm$ 0.07}  }& 
0.86 \small{$\pm$ 0.28  }& 
0.54 \small{$\pm$ 0.17  }& 
\textbf{0.33 \small{$\pm$ 0.07}  
}& 
0.76 \small{$\pm$ 0.24  }& 
0.69 \small{$\pm$ 0.28  }& 
\textbf{0.31 \small{$\pm$ 0.06}  }& 
0.37 \small{$\pm$ 0.08  }& 
0.45 \small{$\pm$ 0.28  }\\
$\tau_{\rm thresh}= 0.9$  & 
\textbf{10.17 \small{$\pm$ 8.80} } & 
\textbf{9.87 \small{$\pm$ 8.90} } & 
12.66 \small{$\pm$ 8.18 } & 
32.90 \small{$\pm$ 22.02  }& 
\textbf{9.42 \small{$\pm$ 8.39} } & 
11.53 \small{$\pm$ 7.46  }& 
28.83 \small{$\pm$ 19.24 } & 
7.93 \small{$\pm$ 4.42 } & 
6.31 \small{$\pm$ 4.19 } & 
\textbf{5.73 \small{$\pm$ 3.80} } & 
5.86 \small{$\pm$ 4.02 } \\
\hline
2-layer DNN & 
& 
& 
& 
& 
& 
& 
& 
& 
& 
& 
\\
$\tau_{\rm thresh}= 0.01$ 
 & 
\textbf{4.49 \small{$\pm$ 2.02}  }& 
\textbf{2.31 \small{$\pm$ 1.18}  }& 
2.62 \small{$\pm$ 1.52  }& 
10.36\small{$\pm$ 6.05  }& 
2.79 \small{$\pm$ 1.34 } & 
\textbf{2.35 \small{$\pm$ 1.35}  }& 
8.92 \small{$\pm$ 5.20 } & 
2.89 \small{$\pm$ 1.24  }& 
1.56 \small{$\pm$ 0.64  }& 
0.79 \small{$\pm$ 0.25  }& 
\textbf{0.40 \small{$\pm$ 0.16}  }\\
$\tau_{\rm thresh}= 0.1$  & 
\textbf{9.41 \small{$\pm$ 7.34}  }& 
2.66 \small{$\pm$ 1.76  }& 
\textbf{1.52 \small{$\pm$ 0.72} } & 
4.65 \small{$\pm$ 3.03  }& 
4.24\small{$\pm$ 3.06  }& 
\textbf{1.44 \small{$\pm$ 0.71}  }& 
4.03 \small{$\pm$ 2.64  }& 
6.54 \small{$\pm$ 4.32  }& 
3.22 
\small{$\pm$ 1.93  }& 
1.59 \small{$\pm$ 0.95  }& 
\textbf{0.80\small{$\pm$ 0.48}  }\\
$\tau_{\rm thresh}= 0.9$  & 
\textbf{2.45 \small{$\pm$ 1.31}  }& 
\textbf{2.24 \small{$\pm$ 1.34} } & 
3.49 \small{$\pm$ 2.27 } & 
11.48 \small{$\pm$ 8.15  }& 
\textbf{2.17 \small{$\pm$ 1.25} } & 
3.10 \small{$\pm$ 2.00  }& 
9.97 \small{$\pm$ 7.05  }& 
1.29 \small{$\pm$ 0.66  }& 
0.71 \small{$\pm$ 0.34  }& 
0.43 \small{$\pm$ 0.18  }& 
\textbf{0.24 \small{$\pm$ 0.11}  }\\
\bottomrule
\end{tabular}
\end{adjustbox}
\caption{Linear dynamical system data (detailed in Sec.~\ref{sec:data}) across varying complex threshold $\tau_{\rm thres}$.}
\label{tab:full-results-linear}
\end{table}

\begin{table}[ht]
\centering
\renewcommand{\arraystretch}{1.3}
\setlength{\tabcolsep}{4pt}

\begin{adjustbox}{max width=\textwidth}
\begin{tabular}{>{\raggedright\arraybackslash}p{2.5cm}
|c
|ccc
|ccc
|cccc}
\toprule
\textbf{Setting} &
\textbf{Baseline} &
& \textbf{Chebyshev} &  &
& \textbf{Legendre} & &
& \textbf{Learned} & &  \\
 &
&
\textbf{Deg.
2} & \textbf{Deg. 5} & \textbf{Deg. 10} &
\textbf{Deg. 2} & \textbf{Deg. 5} & \textbf{Deg. 10} &
\textbf{Deg. 2} & \textbf{Deg.
5} & \textbf{Deg. 10} & \textbf{Deg. 20} \\
\hline
Spectral Filtering & 
 & 
 & 
& 
 & 
& 
 & 
& 
& 
& 
 & 
\\
 $\tau_{\rm thresh}= 0.01$& 
\textbf{153.8 \small{$\pm$ 15.7} } & 
43.7\small{$\pm$ 19.4 }& 
0.92\small{$\pm$ 0.31 }& 
\textbf{0.26\small{$\pm$ 0.15} }& 
78.4\small{$\pm$ 34.7 }& 
2.82\small{$\pm$ 1.17 }& 
\textbf{0.34\small{$\pm$ 0.19} }& 
1.04\small{$\pm$ 0.29} & 
0.07\small{$\pm$ 0.03 }& 
\textbf{0.05\small{$\pm$ 0.03} }& 
0.07\small{$\pm$ 0.03 }\\
  $\tau_{\rm thresh}= 0.01$ & 
\textbf{124.1\small{$\pm$ 68.4}} & 
33.4\small{$\pm$ 17.3 }& 
2.02\small{$\pm$ 1.23 }& 
\textbf{0.36\small{$\pm$ 0.31}} & 
56.7\small{$\pm$ 30.5 }& 
3.81\small{$\pm$ 1.67 }& 
\textbf{0.35\small{$\pm$ 0.26} }& 
2.85\small{$\pm$ 1.13 }& 
0.04\small{$\pm$ 0.01 }& 
\textbf{0.02\small{$\pm$ 0.01} }& 
0.07\small{$\pm$ 
0.02} \\
 $\tau_{\rm thresh}= 0.9$ & 
\textbf{165.5\small{$\pm$ 84.5}} & 
43.41\small{$\pm$ 16.39 }& 
\textbf{1.27\small{$\pm$ 0.37} }& 
1.90\small{$\pm$ 0.80 }& 
76.9\small{$\pm$ 29.1 }& 
3.32\small{$\pm$ 1.09} & 
\textbf{1.64\small{$\pm$ 0.69}} & 
1.79\small{$\pm$ 0.61 }& 
\textbf{0.10\small{$\pm$ 0.04} }& 
0.12\small{$\pm$ 0.05} & 
0.17\small{$\pm$ 0.07 }\\
\hline
2-layer DNN & 
& 
 & 
& 
& 
 & 
& 
 & 
 & 
& 
 & 
 \\
 $\tau_{\rm thresh}= 0.01$ & 
\textbf{10.46\small{$\pm$ 5.10} }& 
3.45\small{$\pm$ 1.19 }& 
\textbf{0.19\small{$\pm$ 0.16}} & 
0.43\small{$\pm$ 0.22} & 
3.55\small{$\pm$ 1.56} & 
\textbf{0.18\small{$\pm$ 0.14} }& 
0.37\small{$\pm$ 0.19 }& 
45.40\small{$\pm$ 11.35 }& 
25.34\small{$\pm$ 9.06 }& 
12.73\small{$\pm$ 4.60} & 
\textbf{ 6.43\small{$\pm$ 2.32} } \\
 $\tau_{\rm thresh}= 0.1$ 
& 
\textbf{4.20\small{$\pm$ 1.10} }& 
3.38\small{$\pm$ 0.79 }& 
\textbf{0.14\small{$\pm$ 0.05} }& 
0.41\small{$\pm$ 0.16 }& 
3.42\small{$\pm$ 0.79 }& 
\textbf{0.25\small{$\pm$ 0.14}}& 
0.36\small{$\pm$ 0.13 }& 
67.70\small{$\pm$ 28.40} & 
31.99\small{$\pm$ 11.78} & 
15.95\small{$\pm$ 5.87 }& 
\textbf{8.01\small{$\pm$ 2.95}} \\
 $\tau_{\rm thresh}= 0.9$ & 
\textbf{6.72\small{$\pm$ 2.35} }& 
2.45\small{$\pm$ 1.12 }& 
\textbf{0.08\small{$\pm$ 0.03} }& 
0.28\small{$\pm$ 0.15} & 
3.35\small{$\pm$ 0.98 }& 
\textbf{ 0.20\small{$\pm$ 0.07 }}& 
0.24\small{$\pm$ 0.13} & 
58.65\small{$\pm$ 17.09 }& 
28.74\small{$\pm$ 8.07 }& 
14.39\small{$\pm$ 4.06 }& 
\textbf{ 7.26\small{$\pm$ 2.05} }\\
\bottomrule
\end{tabular}
\end{adjustbox}
\caption{Nonlinear data (detailed in Sec.~\ref{sec:data}) across varying complex threshold $\tau_{\rm thres}$.}
\label{tab:full-results-nonlinear}
\end{table}

\begin{table}[ht!]
\centering
\renewcommand{\arraystretch}{1.3}
\setlength{\tabcolsep}{4pt}

\begin{adjustbox}{max width=\textwidth}
\begin{tabular}{>{\raggedright\arraybackslash}p{2.5cm}
|c
|ccc
|ccc
|cccc}
\toprule
\textbf{Setting} &
\textbf{Baseline} &
& \textbf{Chebyshev} &  &
& \textbf{Legendre} & &
& \textbf{Learned} & &  \\
 &
&
\textbf{Deg.
2} & \textbf{Deg. 5} & \textbf{Deg. 10} &
\textbf{Deg. 2} & \textbf{Deg. 5} & \textbf{Deg. 10} &
\textbf{Deg. 2} & \textbf{Deg.
5} & \textbf{Deg. 10} & \textbf{Deg. 20} \\
\hline
 10-layer DNN & 
\textbf{0.54 \small{$\pm$ 0.23} }& 
0.29 \small{$\pm$ 0.10 }& 
\textbf{0.08\small{$\pm$ 0.03} }& 
0.13 \small{$\pm$ 0.05} & 
 0.37 \small{$\pm$ 0.14 }& 
\textbf{ 0.09\small{$\pm$ 0.03} } & 
0.12\small{$\pm$ 0.04 }& 
1.49\small{$\pm$ 0.93}& 
2.13\small{$\pm$ 1.05 }& 
1.04\small{$\pm$ 0.51}  &
\textbf{ \textbf{ 0.5 \small{$\pm$ 0.24} }} \\
\bottomrule
\end{tabular}
\end{adjustbox}
\caption{Performance (average absolute error of the last 200 predictions) of a 10-layer DNN (detailed in Sec.~\ref{sec:algos}) on data generated from the same model (detailed in Sec.~\ref{sec:data}).
\vspace{-5mm}}
\label{tab:full-results-dnn}
\end{table}

\subsection{ETTh1 Dataset}
\label{sec:etth1}
To evaluate whether our proposed preconditioning approach generalizes to real-world time series, we conduct experiments on the well-established \textbf{ETTh1} dataset from the Electricity Transformer Temperature (ETT) 
benchmark~\cite{zhou2021informer}. The ETTh1 dataset consists of continuous hourly measurements of load and oil temperature collected from electricity transformers and has been used in several recent works~\cite{zhou2021informer, wu2021autoformer, nie2022time, gu2022efficiently, gupta2022simplifying, zeng2023transformer, nguyen2024preconditioning}.
We study the effect of preconditioning on a $10$-layer LSTM with hidden dimension $100$ per layer using the Adam optimizer.
We set the horizon to be $T=5000$ and we sweep over a broader range of learning rates $\eta \in \{10^{-j} \}_{j = 0, 1, 2, 3, 4, 5}$.
As before we consider (i) no preconditioning (baseline), (ii) fixed Chebyshev coefficients, (iii) fixed Legendre coefficients, and (iv) coefficients learned jointly with model parameters.
As seen in Figure~\ref{fig:etth1}, preconditioning with Chebyshev and Legendre for degree $5$ the best performance after only the first $1000$ iterations, while the performance of jointly learning the coefficients is worse at this stage.
The performance of all three preconditioning methods are roughly on par with each other by $2500$ iterations and by the full horizon $T = 5000$, jointly learning the coefficients results in the best average prediction error.
\begin{figure}[H]
    \centering
    \begin{subfigure}[b]{0.32\textwidth}
        \includegraphics[width=\linewidth]{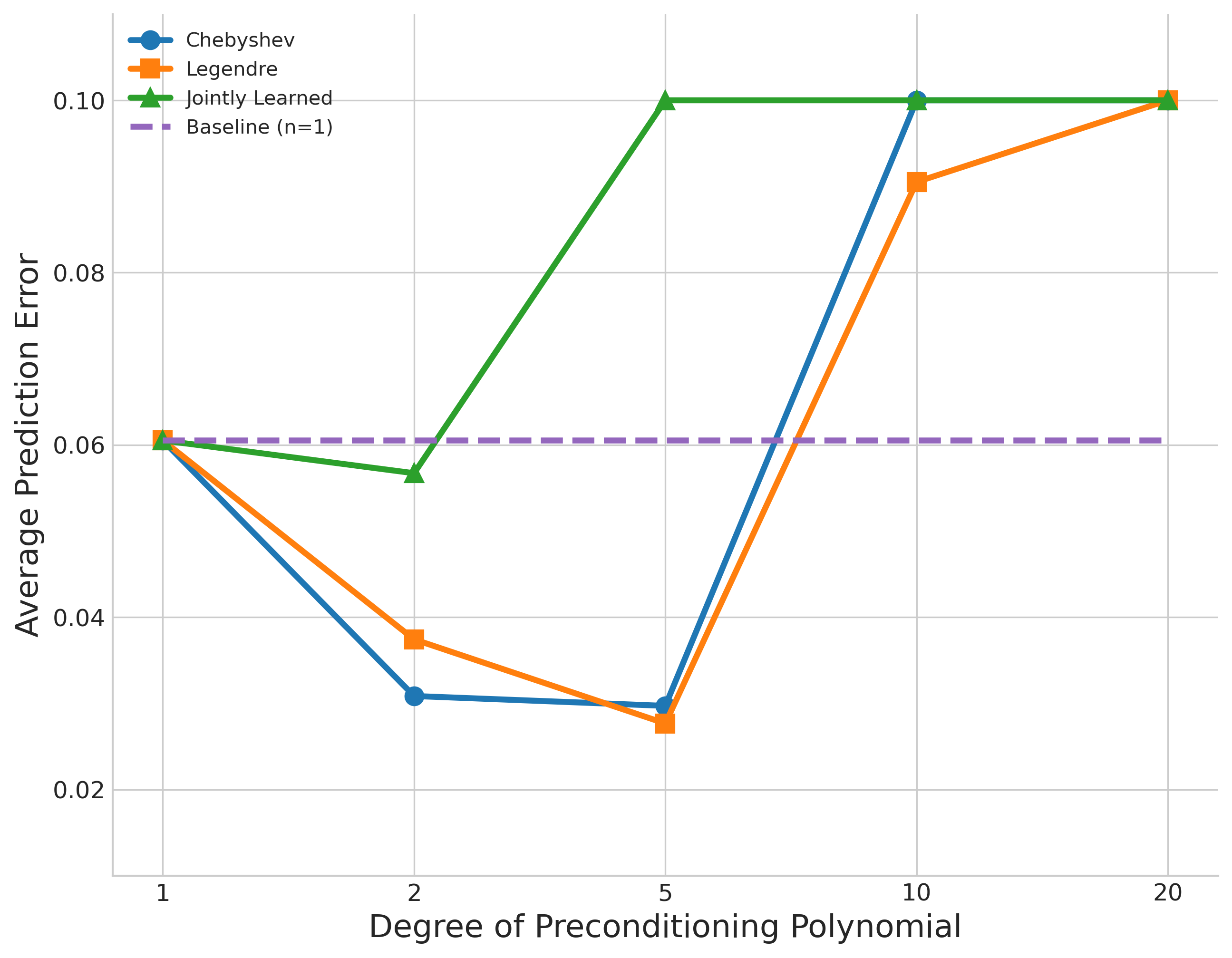}
        \caption{Early Stage: $T=1000$ }
    \end{subfigure}
    \hfill
    \begin{subfigure}[b]{0.32\textwidth}
        \includegraphics[width=\linewidth]{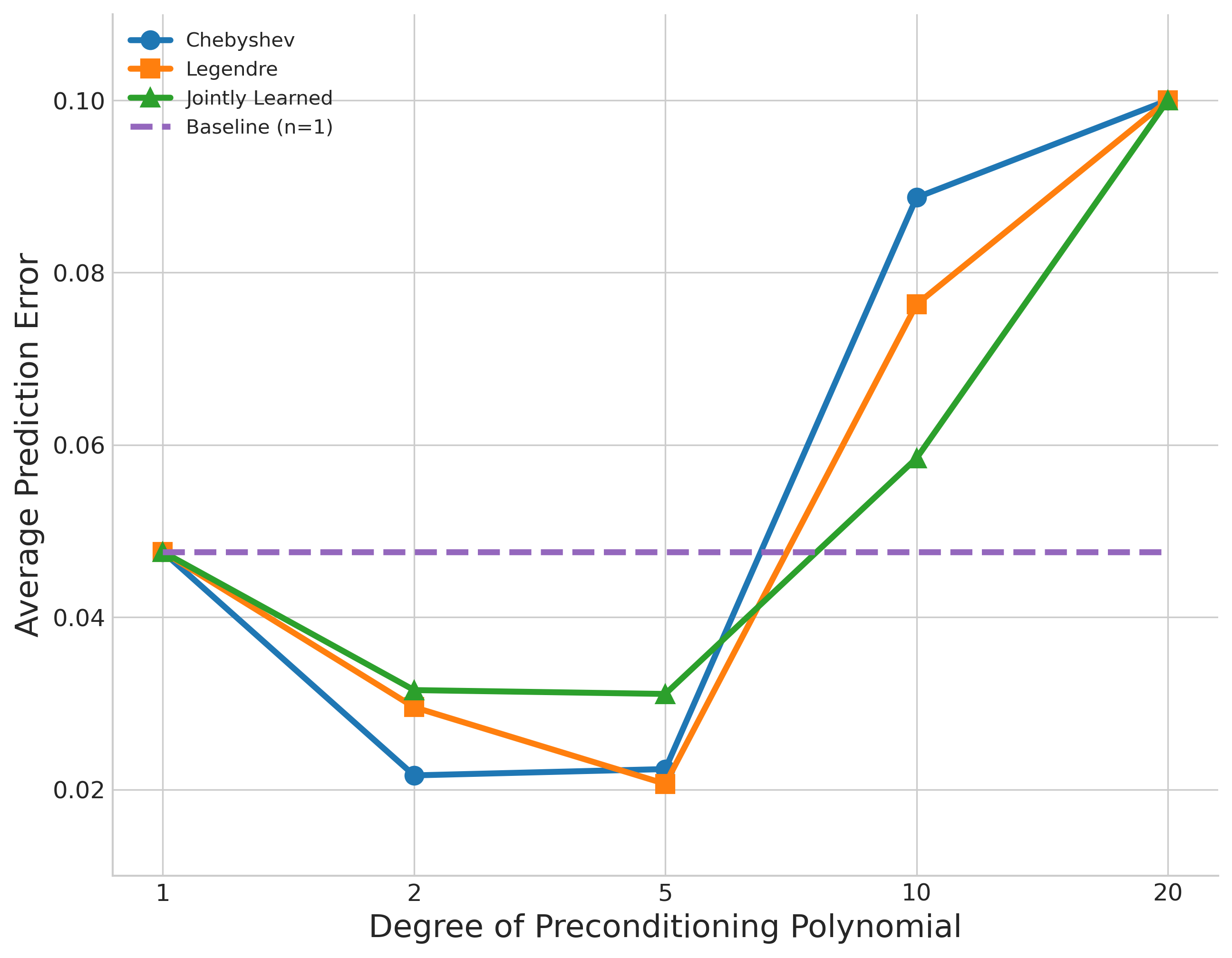}
        \caption{Middle Stage: $T=2500$ }
    \end{subfigure}
    \hfill
    \begin{subfigure}[b]{0.32\textwidth}
        \includegraphics[width=\linewidth]{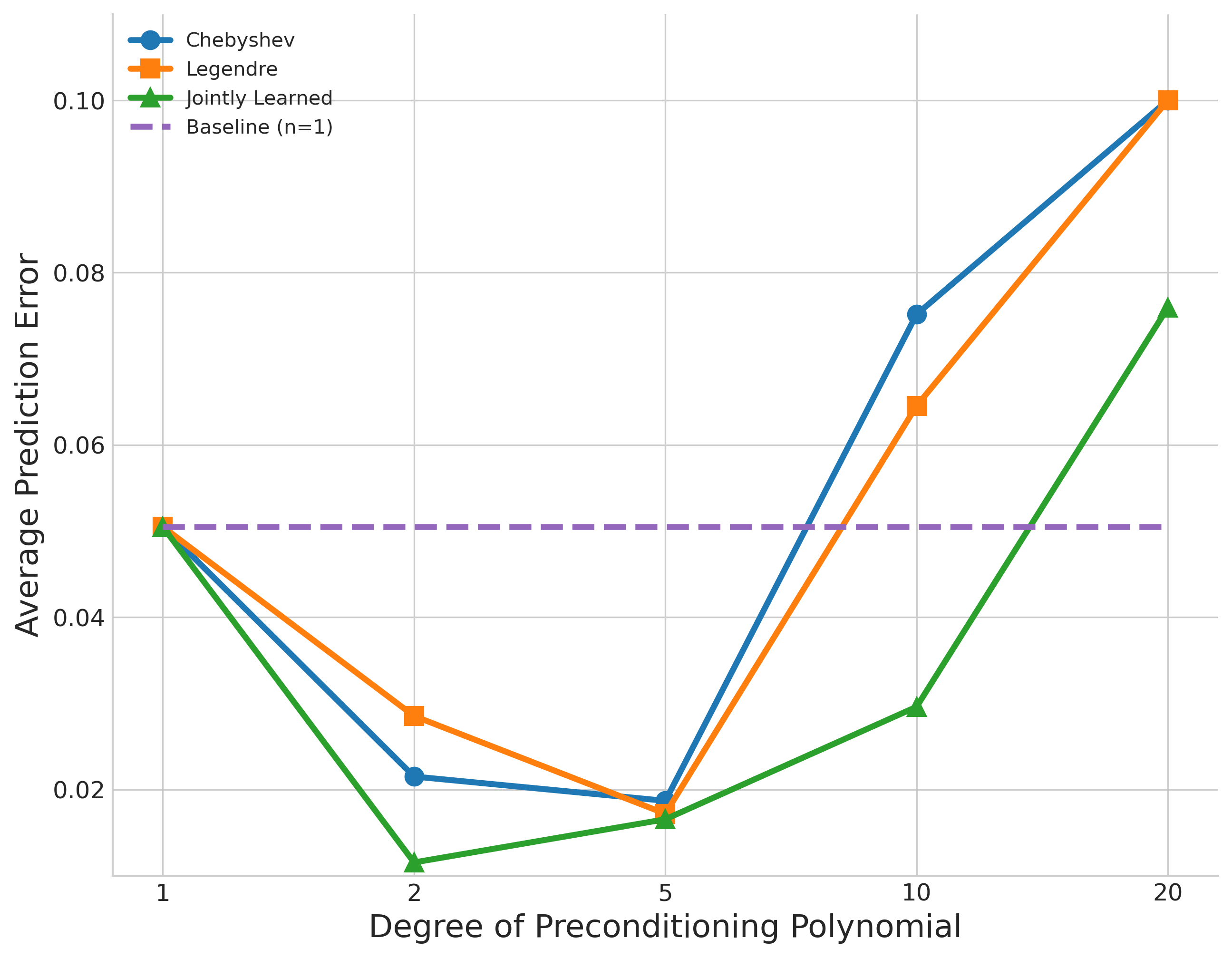}
        \caption{Final Stage: $T=5000$}
    \end{subfigure}

    \vspace{0.5cm}
    
\caption{Absolute prediction error on final $200$ predictions averaged over 10 independent runs for 10-layer LSTM with layer dimension $100$ using Adam optimizer and sweeping over learning rates for each run.}
    \label{fig:etth1}
\end{figure}
\section{Discussion}
There are many settings in machine learning where universal, rather than learned, rules have proven very efficient.  For example, physical laws of motion can be learned directly from observation data. However, Newton's laws of motion succinctly crystallize very general phenomenon, and have proven very useful for large scale physics simulation engines. Similarly, in the theory of mathematical optimization, adaptive gradient methods have revolutionized deep learning. Their derivation as a consequence of regularization in  online regret minimization is particularly simple \cite{duchi2011adaptive}, and thousands of research papers have not dramatically improved the initial basic ideas. These optimizers are, at the very least, a great way to initialize learned optimizers \cite{wichrowska2017learned}.

By analogy, our thesis in this paper is that universal preconditioning based on the solid theory of dynamical systems can be applicable to many domains or,  at the very least, an initialization for other learning methods.

\section{Acknowledgements}
Elad Hazan gratefully acknowledges funding from the Office of
Naval Research and Open Philanthropy. 
\newpage 
\bibliographystyle{plain}
\bibliography{main}
\newpage
\newpage
\tableofcontents
\newpage
\appendix
\section{Related work} \label{sec:related_work}
\paragraph{Preconditioning for sequence prediction.} 
Preconditioning in the context of time series analysis has roots in the classical work of Box and Jenkins \cite{box1976time}. In this foundational text they propose differencing as a method for making the time series stationary, and thus amenable to statistical learning techniques such as ARMA (auto-regressive moving average) \cite{anava2013online}. The differencing operator can be applied numerous times, and for different lags, giving rise to the ARIMA family of forecasting models.   

Identifying the order of an ARIMA model, and in particular the types of differencing needed to make a series stationary, is a hard problem. This is a special case of the problem we consider: differencing corresponds to certain coefficients of preconditioning the time series, whereas we consider arbitrary coefficients. 

\paragraph{Background on control of linear dynamical systems.} 
Linear dynamical systems are the most fundamental and basic model in control theory, and have been studied for more than a century. For a thorough introduction, see the texts \cite{bertsekas2007dynamic,kemin,hazan2022introduction}. 

A rigorous proof that the Cayley-Hamilton theorem implies that $d_h$ learned closed-loop components are sufficient to learn any LDS is given in  \cite{agarwal2023spectral,hazan2018spectral}. 

The seminal work of Kalman on state-space representation and filtering \cite{kalman1960new} allows one to learn any LDS with hidden-dimension parameters under stochastic and generative assumptions. Closed-loop auto-regressive learning subsumes Kalman filtering in the presence of adversarial noise, see e.g. \cite{kozdoba2019line}. \cite{ghai2020no} provide a method to learn a marginally stable LDS in the presence of bounded adversarial noise and asymmetric matrices, however their regret bound depends on the hidden dimension of the system. More recently, \cite{bakshi2023new} use tensor and moment-based methods to learn a LDS with stochastic noise without dependence on the system's condition number. However, their algorithmic complexity still scales polynomially with the hidden dimension. 

In this work we consider regret in the context of {\it learning} linear dynamical systems. This is related to, but different from, {\it control} of the systems. We survey regret minimization for control next. 

\paragraph{Regret for classical control models. }
The first works addressing control in the machine learning community assume either no perturbation in the dynamics at all, or i.i.d. Gaussian perturbations. 
Much of this work has considered obtaining low regret in the online LQR setting \citep{abbasi2011regret,dean2018regret,mania2019certainty,cohen2019learning} where a fully-observed linear dynamic system is driven by i.i.d. Gaussian noise via $x_{t+1} = \A x_t + \B \uv_t + w_t$, and the learner incurs constant quadratic state and input cost $\ell(x,u) = \frac{1}{2}x^\top \Q x + \frac{1}{2}u^\top R u$. The optimal policy for this setting is well-approximated by a \emph{state feedback controller} $\uv_t = K x_t$, where $K$ is the solution to the Discrete Algebraic Riccati Equation (DARE), and thus regret amounts to competing with this controller. Recent algorithms \cite{mania2019certainty,cohen2019learning} attain $\sqrt{T}$ regret for this setting, with polynomial runtime and polynomial regret dependence on relevant problem parameters. 
A parallel line of work by  \cite{cohen2018online} establishes $\sqrt{T}$ regret in a variant of online LQR where the system is known to the learner, noise is stochastic, but an adversary selects quadratic loss functions $\ell_t$ at each time $t$. Again, the regret is measured with respect to a best-in-hindsight state feedback controller.

Provable control in the Gaussian noise setting via the policy gradient method was studied in \cite{fazel2018global}. Other relevant work from the machine learning literature includes tracking adversarial targets \citep{abbasi2014tracking}.

\paragraph{The non-stochastic control problem.}
The most accepted and influential control model stemming from the machine learning community was established in 
\cite{agarwal2019online}, who obtain $\sqrt{T}$-regret in the more general and challenging setting where both the Lipschitz loss function and the perturbations are adversarially chosen. The key insight behind this result is combining an improper controller parametrization known as disturbance-based control with online convex optimization with memory due to \cite{anava2015online}. Follow-up work by \cite{agarwal2019logarithmic} achieves logarithmic pseudo-regret for strongly convex, adversarially selected losses and well-conditioned stochastic noise. Further extensions were made for linear control with partial observation \cite{simchowitz2020improper}, system identification with adversarial noise \cite{chen2021black}, and many more settings surveyed in \cite{hazan2022introduction}. 

\paragraph{Online learning and online convex optimization.}
We make extensive use of techniques from the field of online learning and regret minimization in games \citep{cesa2006prediction, hazan2016introduction}. Following previous work from machine learning, we consider regret minimization in sequence prediction, where the underlying sequence follows a linear dynamical system.   

\paragraph{Spectral filtering for learning linear dynamical systems.}
The spectral filtering technique was put forth in \cite{hazan2017learning} for learning symmetric linear dynamical systems. In \cite{hazan2018spectral}, the technique was extended for more general dynamical systems using closed-loop regression; however, this required hidden-dimension parameters and polynomial dependence on the approximation guarantee. Spectral filtering techniques were recently used in non-linear sequence prediction, notably in the context of large language models, albeit with no theoretical guarantees \cite{agarwal2023spectral}. As convolutional models, these methods are attractive for sequence prediction due to faster generation and inference as compared to attention-based models \cite{agarwal2024futurefill}.

While several methods exist that can learn in the presence of asymmetric transition matrices \citep{kalman1960new, bakshi2023new, ghai2020no}, their performance depends on hidden dimension. On the other hand, spectral filtering methods \citep{hazan2017learning} achieve regret which is independent of hidden dimension, even for marginally stable systems. However, these spectral filtering methods were limited to systems with symmetric transition matrices. In contrast, real-world dynamical systems often have asymmetric transition matrices with large hidden dimension, necessitating a more general approach. In this paper, we provide such an approach by extending the theory of spectral filtering to handle asymmetric systems, as long as the complex component of their eigenvalues is bounded.

In this paper we dramatically improve the spectral filtering technique and broaden its applicability in two major aspects: First, for general asymmetric linear dynamical systems we remove the dependence on the hidden dimension. Second, we improve the dependence of the number of learned parameters from polynomial to logarithmic.    
\section{Memory Capacity of Linear Dynamical Systems} \label{subsection:memory}

The hidden dimension $d_h$, which is the dimension of the transition matrix $\A$, plays a significant role in the expressive power of LDS.
One of the most important features of the hidden dimension is that an LDS can memorize and recall inputs from up to ${d_{\textrm{hidden}}}$ iterations in the past.
This can be seen with the system where $\B,\C$ are identity, and $\A$ is given by the permutation matrix
$$ \A_{d_{\textrm{hidden}}}^{\mathrm{perm}} =
\begin{bmatrix}
0 & 0 & \cdots & 0 & 1 \\
1 & 0 & \cdots & 0 & 0 \\
0 & 1 & \cdots & 0 & 0 \\
\vdots & \vdots & \ddots & \vdots & \vdots \\
0 & 0 & \cdots & 0 & 0 \\
0 & 0 & \cdots & 1 & 0 \\
\end{bmatrix} , 
$$
which implements the memory operator $\y_t = \uv_{t - d_h}$.
Observe that any method which uses fewer than $d_{\textrm{hidden}}$ parameters will fail to implement this memory operator and therefore, for general linear dynamical systems, $d_{\textrm{hidden}}$ parameters are {\it necessary}.
Seemingly, this contradicts our promised results, which allows for learning a general LDS without hidden dimension dependence.
The explanation is in the spectrum of the system. Notice that the eigenvalues of the permutation matrix $\A$ above are the $d_{\textrm{hidden}}$ roots of unity given by 
$$ \lambda_1,...,\lambda_{d_{\textrm{hidden}}} \in \left\{ e^{2 \pi i \frac{k}{d}} , k = 1,2,...,d_{\textrm{hidden}} \right\}.
$$ Note that these eigenvalues have complex component as large as $1-1/d_{\textrm{hidden}}$.
Although in general a LDS can express signals with $d_{\textrm{hidden}}$ memory, and thus intuitively might require $d_{\textrm{hidden}}$ parameters, there are notable special cases that allow for efficient learning, i.e. learning the LDS with far fewer parameters.
A notable case is that of spectral filtering, which allows efficient learning of a {\it symmetric LDS} with poly-logarithmic (in the desired accuracy $\eps$) number of parameters.
The significance of a symmetric transition matrix $\A$ is that it can be diagonalized over the real numbers.
The natural question that arises is {\bf which asymmetric matrices can be learned by spectral filtering efficiently, and which characterization of their spectrum allows for efficient learning?
}

The answer we offer is surprisingly broad.  For a LDS with transition matrix $\A$, let $\lambda_1,...,\lambda_d$ be its complex eigenvalues.
We show that we can learn up to $\eps$ accuracy any LDS for which the largest eigenvalue has imaginary part bounded by $\frac{1}{\mathrm{poly} \log \frac{1}{\eps}} $.
We note that the spectral radius can be arbitrarily close to, or even equal to, one.
The only restriction is on the complex part, which is mildly constrained as a logarithmic function of $\eps$.
As per the permutation matrix example, this dependence is necessary and nearly tight - if the imaginary component of the eigenvalues of $\A$ becomes large, any learning method requires parameterization that depends on the hidden dimension of the system.
\section{Online Version of Preconditioning}
\label{appendix:online_version}
\begin{algorithm}[H]
		\caption{\label{alg:seq-precond} Universal Sequence Preconditioning 
(Online Version) }
		\begin{algorithmic}[1]
			\STATE Input: sequence prediction model $f_\theta$ with initial parameter $\theta^0$; loss function $\ell(\cdot)$; preconditioning coefficients $\mat{c}_{1:n}$.
            \FOR {$t = 1$ to $T$}
            \STATE Receive $\uv_t$
            \STATE Predict
            \begin{equation*}
                \hat{\y}_t = - \sum_{j = 1}^n \mat{c}_j \y_{t-j}  + f_{\theta^t}(\uv_{1:t}, \y_{1:t-1})
            \end{equation*}
            \STATE Observe true output $\y_t$ and suffer loss $\ell_t(\hat{\y}_t, \y_t)$.
            \STATE Update via projected gradient descent
            \begin{equation*}
                \theta^{t+1} \gets \mathrm{proj}_{\K}\left( \theta^t - \eta_t \nabla_{\theta} \ell_t(\hat{\y}_t, \y_t)\right)
            \end{equation*}
           \ENDFOR
		\end{algorithmic}
\end{algorithm}
\newtheorem*{retheorem}{Theorem}
\section{Proof of Convolutional Preconditioned Regression Performance Theorem~\ref{thm:convex_relaxation}}
\label{appendix:convex_relaxation}
We will prove Theorem~\ref{thm:convex_relaxation} first by proving the general result of Algorithm~\ref{alg:preconditioned_convex_relaxation} for any choice of preconditioning coefficients $\mat{c}_{0:n}$. Then we will apply the Chebyshev coefficients to the result to get Theorem~\ref{thm:convex_relaxation}.
For convenience, we restate Theorem~\ref{thm:convex_relaxation} here. 
\begin{theorem}[General Form of Theorem~\ref{thm:convex_relaxation}]
\label{thm:convex_relaxation_general}
    Let $\left \{ \uv_t \right \}_{t = 1}^T \in \R^{d_{\textrm{in}}}$ be any sequence of inputs which satisfy $\norm{\uv_t}_2 \leq 1$ and let $\left \{ \y_t \right \}_{t = 1}^T$ be the corresponding output coming from some linear dynamical system $(\A, \B, \C)$ as defined per Eq.~\ref{eqn:LDS}. Let $\mat{P}$ diagonalize $\A$ (note $\mat{P}$ exists w.l.o.g.) and let $\kappa = \norm{\mat{P}} \norm{\mat{P}^{-1}}$. Assume that $\norm{\B} \norm{\C} \kappa \leq C_{\textrm{domain}}$. Assume that $\norm{\B} \norm{\C} \kappa \leq C_{\textrm{domain}}$. Then the predictions $\hat{\y}_1, \dots, \hat{\y}_T$ from Algorithm~\ref{alg:preconditioned_convex_relaxation} satisfy
   \begin{equation*}
     \sum_{t = 1}^T \norm{\hat{\y}_t - \y_t}_1 \leq C_{\textrm{domain}} \left( \frac{3}{2}  n^2 \sqrt{d_{\textrm{out}}} \norm{\mat{c}}_1 \sqrt{T} +   \max_{\lambda \in \lambda(\A)} \abs{p_n^{\mat{c}} (\lambda)} T^2 \right).
\end{equation*}
\end{theorem}

\begin{proof}[Proof of Theorem~\ref{thm:convex_relaxation_general}]
For the remainder of the proof we will denote 
\begin{equation*}
    \hat{\y}_t(\Q) = - \sum_{i = 1}^n \mat{c}_i \y_{t-i} + \sum_{j = 0}^n \Q_j \uv_{t-j},
\end{equation*}
so that Algorithm~\ref{alg:preconditioned_convex_relaxation} outputs $\hat{\y}_t = \hat{\y}_t(\Q^t)$. Recall we define the domain 
\begin{equation*}
    \K = \left \{ \left( \Q_0, \dots, \Q_{n-1} \right) \textrm{ s.t. } \norm{\Q_j} \leq C_{\textrm{domain}} n \norm{\mat{c}}_1 \right \}. 
\end{equation*}
For convenience, we will use the shorthand $\Q$ to refer to $\left( \Q_0, \dots, \Q_{n-1}\right)$. First we prove that the regret of Algorithm~\ref{alg:preconditioned_convex_relaxation} as compared to the best $\Q^* \in \K$ is hindsight is small. Then we prove that the best $\Q^* \in \K$ in hindsight achieves small prediction error. 
Let $$G = \max_{t \in [T]} \norm{\nabla_{\Q} \ell_t(\Q^t)},$$ and let $$D = \max_{\Q^1, \Q^2 \in \K} \norm{\Q^1 - \Q^2 }.$$ By Theorem 3.1 from \cite{hazan2016introduction}, 
    \begin{equation*}
        \sum_{t = 1}^T \ell_t(\Q^t) -  \min_{\Q^* \in \K}  \sum_{t = 1}^T \ell_t(\Q^*) \leq \frac{3}{2} GD \sqrt{T}.
    \end{equation*}
Therefore it remains to bound $D$ and $G$. First we bound $D$. By definition of $\K$, 
\begin{align*}
D & = \max_{\Q^1, \Q^2 \in \K} \norm{\Q^1 - \Q^2 } \\
& =  \max_{(\Q^1_0, \dots, \Q^1_{n-1}), (\Q^2_0, \dots, \Q^2_{n-1}) \in \K} \norm{(\Q^1_0, \dots, \Q^1_{n-1}) - (\Q^2_0, \dots, \Q^2_{n-1}) }\\
    & \leq \sum_{j = 0}^{n-1} \norm{\Q^1_j - \Q^2_j} \\
    & \leq 2 C_{\textrm{domain}} n^2 \norm{\mat{c}}_1.
\end{align*}
Next we bound the gradient. Recall that
\begin{align*}
    \ell_t(\Q) & = \norm{\hat{\y}_t(\Q) - \y_t}_1 \\
    & = \norm{- \sum_{i = 1}^n \mat{c}_i \y_{t-i} + \sum_{j = 0}^n \Q_j \uv_{t-j} - \y_t}_1.
\end{align*}Note that, in general, $\nabla_{\A} \norm{\A \mat{x} - \mat{b}}_1 = \mathrm{sign}(\A \mat{x} - \mat{b}) \mat{x}^{\top}$. Since $\norm{\mat{x} \mat{y}^{\top}}_F \leq \norm{\mat{x}}_2 \norm{\mat{y}}_2$ we have
\begin{equation*}
    \norm{\nabla_{\A} f(\A)}_F \leq \sqrt{d} \norm{\mat{x}}_2,
\end{equation*}
where $d$ is the dimension of $\mat{b}$.
Using this and the assumption that for any $t \in [T]$, $\norm{\uv_t}_2 \leq 1$, we have
\begin{align*}
   \norm{ \nabla_{\Q_i} \ell_t(\Q) }_F & \leq \sqrt{d_{\textrm{out}}} \norm{\uv_{t-i}}_2 \leq \sqrt{d_{\textrm{out}}}. 
\end{align*}
Therefore, 
\begin{align*}
   G = \max_{t \in [T]} \norm{\nabla_{\Q}\ell_t(\Q^t)}_F \leq n \sqrt{d_{\textrm{out}}}.
\end{align*}
Thus we have a final regret bound
\begin{align*}
    \sum_{t = 1}^T \ell_t(\Q^t) - \min_{\Q^* \in \K} \ell_t(\Q^*) \leq \frac{3}{2} C_{\textrm{domain}} n^2 \sqrt{d_{\textrm{out}}} \norm{\mat{c}}_1 \sqrt{T}.
\end{align*}

Next we show that if $(\uv_{1:T}, \y_{1:T})$ is a linear dynamical system parameterized by $(\A, \B, \C)$, then for any $t \in [T]$, 
\begin{equation*}
   \min_{\Q^* \in \K} \ell_t(\Q^*) \leq \norm{\C} \norm{\B} \cdot \max_{\lambda \in \lambda(\A)} \abs{p_n^{\mat{c}} (\lambda)} \cdot T,
\end{equation*}
where $p_n^{\mat{c}}$ denotes the polynomial
\begin{equation*}
    p_n^{\mat{c}}(x) \defeq \sum_{i = 0}^n \mat{c}_i x^{n-i},
\end{equation*}
and $\lambda(\A)$ denotes the set of eigenvalues of $\A$. 
Indeed, if $(\uv_{1:T}, \y_{1:T})$ is a linear dynamical system parameterized by $(\A, \B, \C)$ then
\begin{equation*}
 \y_t = \sum_{s = 1}^{t} \C \A^{t-s} \B \uv_{s}.
\end{equation*}
With some linear algebra we get that convolving $\y_{t:t-n}$ with coefficients $\mat{c}_{0:n}$ results in
\begin{equation*}
\sum_{i=0}^n \mat{c}_i \y_{t-i} = \sum_{s = 0}^{n-1} \sum_{i = 0}^s \mat{c}_i \C \A^{s-i} \B \uv_{t-s} + \sum_{s = 0}^{t-n-1} \C p_n^{\mat{c}}(\A) \A^s \B \uv_{t-n-s}, 
\end{equation*}
or equivalently (since $\mat{c}_0 = 1$ due to the assertion in Algorithm~\ref{alg:preconditioned_convex_relaxation}),
\begin{equation*}
    \y_t = - \sum_{i=1}^n \mat{c}_i \y_{t-i} + \sum_{s = 0}^{n-1} \sum_{i = 0}^s \mat{c}_i \C \A^{s-i} \B \uv_{t-s} + \sum_{s = 0}^{t-n-1} \C p_n^{\mat{c}}(\A) \A^s \B \uv_{t-n-s}.
\end{equation*}
Set $\hat{\Q}_s = \sum_{i = 0}^s \mat{c}_i \C \A^{s-i} \B $ and set $\hat{\Q} =  \left( \hat{\Q}_0, \dots, \hat{\Q}_{n-1} \right)$. 
Since we assumed $C_{\textrm{domain}} \geq \norm{\C} \norm{\B}$, 
\begin{equation*}
    \norm{\hat{\mat{Q}}_i} \leq \sum_{j = 0}^i \abs{\mat{c}_j} \norm{\C \A^{i-j} \B} \leq \norm{\C} \norm{\B} \sum_{j = 0}^i \abs{\mat{c}_j} \leq C_{\textrm{domain}} n \norm{\mat{c}}_1.
\end{equation*}
Therefore $\hat{\Q}  \in \K$. Moreover,
\begin{align*}
   \hat{\y}_t(\hat{\Q}) - \y_t  & = \left( - \sum_{i = 1}^n \mat{c}_i \y_{t-i} + \sum_{j=0}^{n-1} \hat{\Q}_j \uv_{t-j} \right)   \\
   & \qquad - \left( - \sum_{i=1}^n \mat{c}_i \y_{t-i} + \sum_{s = 0}^{n-1} \sum_{i = 0}^s \mat{c}_i \C \A^{s-i} \B \uv_{t-s} + \sum_{s = 0}^{t-n-1} \C p_n^{\mat{c}}(\A) \A^s \B \uv_{t-n-s} \right) \\
   & = \sum_{s = 0}^{t-n-1} \C p_n^{\mat{c}}(\A) \A^s \B \uv_{t-n-s}.
\end{align*}
Therefore, $\norm{   \hat{\y}_t(\hat{\Q}) - \y_t}_1 = \norm{\sum_{s = 0}^{t-n-1} \C p_n^{\mat{c}}(\A) \A^s \B \uv_{t-n-s}}_1$. Let $\A$ be diagonalized by some $\mat{P}$ so that
\begin{equation*}
    \A = \mat{P} \mat{D} \mat{P}^{-1},
\end{equation*}
where $\mat{D}$ is the diagonalization of $\A$ and let $\kappa = \norm{\mat{P}} \norm{\mat{P}^{-1}}$, note that we can assume this w.l.o.g. since the set of dagonalizable matrices over the complex numbers is dense and therefore if $\A$ is not diagonalizable we may apply an aribtrarily small perturbation to it. Then since $\max_{j \in [\dhidden]} \abs{\lambda_j(\A)} \leq 1$,
\begin{align*}
    \norm{\C p_n^{\mat{c}}(\A) \A^j \B} & = \norm{\mat{C} \mat{P} p_n^{\mat{c}}(\mat{D}) \mat{D}^j \mat{P}^{-1} \B}  \\
    & \leq \max_{k \in [\dhidden]} \abs{p_n^{\mat{c}}(\mat{D}_{kk})} \cdot \norm{\C} \norm{\mat{P}} \norm{\mat{P}^{-1}} \norm{\mat{B}} \\
    & \leq \max_{\lambda \in \lambda(\A)} \abs{p_n^{\mat{c}}(\lambda)} \cdot \norm{\C}\norm{\mat{B}} \kappa.
\end{align*}
Thus,
\begin{equation*}
    \norm{  \hat{\y}_t(\hat{\Q}) - \y_t}_1 \leq \norm{\C} \norm{\mat{B}} \kappa \cdot \max_{\lambda \in \lambda(\A)} \abs{p_n^{\mat{c}} (\lambda)} \cdot T,
\end{equation*}
and so (recalling that we showed $\hat{\Q} \in \K$),
\begin{align*}
    \min_{\Q^* \in \K} \sum_{t = 1}^T \ell_t(\Q^*) & \leq \sum_{t = 1}^T \ell_t(\hat{\Q}) \\
    & = \sum_{t = 1}^T \norm{\hat{\y}_t(\hat{\Q}) - \y_t}_1 \\
    & \leq \norm{\C} \norm{\B} \kappa \cdot \max_{\lambda \in \lambda(\A)} \abs{p_n^{\mat{c}} (\lambda)} \cdot T^2. 
\end{align*}
Since $C_{\textrm{domain}} \geq \norm{\C} \norm{\B} \kappa $ we conclude,
\begin{equation*}
     \sum_{t = 1}^T \ell_t(\Q^t)  \leq C_{\textrm{domain}} \left( \frac{3}{2}  n^2 \sqrt{d_{\textrm{out}}} \norm{\mat{c}}_1 \sqrt{T} +   \max_{\lambda \in \lambda(\A)} \abs{p_n^{\mat{c}} (\lambda)} T^2 \right).
\end{equation*}
\hfill $\qedsymbol$ \end{proof}

Next we choose $\mat{c}$ to be the coefficients of the $n$-th monic Chebyshev polynomial to get the original theorem, Theorem~\ref{thm:convex_relaxation}, restated here for convenience.

\begin{retheorem}[Restatement of Theorem~\ref{thm:convex_relaxation}]
 Let $\left \{ \uv_t \right \}_{t = 1}^T \in \R^{d_{\textrm{in}}}$ be any sequence of inputs which satisfy $\norm{\uv_t}_2 \leq 1$ and let $\left \{ \y_t \right \}_{t = 1}^T$ be the corresponding output coming from some linear dynamical system $(\A, \B, \C)$ as defined per Eq.~\ref{eqn:LDS}.  Let $\mat{P}$ diagonalize $\A$ (note $\mat{P}$ exists w.l.o.g.) and let $\kappa = \norm{\mat{P}} \norm{\mat{P}^{-1}}$. Assume that $\norm{\B} \norm{\C} \kappa \leq C_{\textrm{domain}}$. Let $\lambda_1, \dots, \lambda_{\dhidden}$ denote the spectrum of $\A$. If 
    \begin{equation*}
        \max_{j \in [\dhidden]} \abs{\mathrm{Arg}(\lambda_j)} \leq \left( 64 \log_2 \left( \frac{8T^{3/2} }{3 \sqrt{d_{\textrm{out}}}}  \right) \right)^{-2},
    \end{equation*}
    then the predictions $\hat{\y}_1, \dots, \hat{\y}_T$ from Algorithm~\ref{alg:preconditioned_convex_relaxation} where the preconditioning coefficients $\mat{c}_{0:n}$ are chosen to be the coefficients of the $n$-th monic Chebyshev polynomial satisfy
    \begin{equation*}
         \frac{1}{T}\sum_{t=1}^T \norm{\hat{\y}_t - \y_t}_1 \leq \frac{9 C_{\textrm{domain}} \sqrt{d_{\textrm{out}}} \log_2^2(3 T)}{T^{2/13}}.
    \end{equation*}
\end{retheorem}

\begin{proof}
From Theorem~\ref{thm:convex_relaxation_general} we have 
\begin{equation*}
     \sum_{t = 1}^T \ell_t(\Q^t)  \leq C_{\textrm{domain}} \left( \frac{3}{2}  n^2 \sqrt{d_{\textrm{out}}} \norm{\mat{c}}_1 \sqrt{T} +   \max_{\lambda \in \lambda(\A)} \abs{p_n^{\mat{c}} (\lambda)} T^2 \right).
\end{equation*}
By Lemma~\ref{lemma:cheby_bound}, if for any eigenvalue $\lambda$ of $\A$, $\abs{\arg(\lambda)} \leq 1/64n^2$ then
\begin{equation*}
    \max_{\lambda \in \lambda(\A)} \abs{p_n^{\mat{c}}(\lambda)} \leq \frac{1}{2^{n-2}}.
\end{equation*}
Moreover, by Lemma~\ref{lemma:cheby_coeffs_bound}, $\norm{\mat{c}}_1 \leq 2^{0.3n}$. Thus the Chebyshev-conditioned predictor class satisfies
\begin{equation*}
     \sum_{t = 1}^T \ell_t(\Q^t)  \leq C_{\textrm{domain}} \left( \frac{3}{2}  n^2 \sqrt{d_{\textrm{out}}} 2^{0.3n} \sqrt{T} +  2^{-(n-2)} T^2\right).
\end{equation*}
Picking 
\begin{equation*}
    n = \frac{10}{13}\log_2 \left( \frac{8}{3 \sqrt{d_{\textrm{out}}}} T^{3/2} \right),
\end{equation*}
we get
\begin{align*}
 \sum_{t = 1}^T \ell_t(\Q^t)  & \leq 2\left(\frac{3}{2} \left(\frac{10}{13}\log_2 \left( \frac{8}{3 \sqrt{d_{\textrm{out}}}} T^{3/2} \right) \right)^2 
 \sqrt{d_{\textrm{out}}} \right)^{10/13} 4^{3/13}  T^{11/13} \\
 & \leq 9 C_{\textrm{domain}} \sqrt{d_{\textrm{out}}} \log_2(3 T)^2 T^{11/13}.
\end{align*}
Dividing both sides by $T$ we get the stated result. 
\hfill $\qedsymbol$ \end{proof}
 
\section{Proof of Theorem~\ref{thm:main_regret}} \label{sec:formal_spectral}
\subsection{Preliminaries and Notation}
\label{sec:preliminaries}
We analyze the output sequence $\left \{ \y_t \right \}_{t = 1}^T$ generated by a linear dynamical system $(\A, \B, \C)$ with inputs $\left \{ \uv_t \right \}_{t = 1}^T \in \R^{d_{\textrm{in}}}$ satisfying $\norm{\uv_t}_2 \leq 1$.
We assume that the dynamics matrix $\A$ is diagonalizable. Let $\mat{P}$ be the matrix that diagonalizes $\A$ (which exists w.l.o.g. as diagonalizable matrices are dense in $\mathbb{C}^{d \times d}$), and let $\kappa = \norm{\mat{P}} \norm{\mat{P}^{-1}}$ denote the condition number of the eigenbasis.

Recall the spectral domain $\complex_{\beta} = \left \{ z \in \complex \mid \abs{z} \leq 1, \abs{\arg(z)} \leq \beta \right \}$ defined in Section~\ref{sec:usp_sf}.
For a given polynomial $p_n^{\mathbf{c}}(x)$ with coefficients $\mathbf{c}$, let $B_n = \max_{\alpha \in \complex_{\beta}}  \abs{p_n^{\mathbf{c}}(\alpha)}$.

Recall the definitions of $\tilde{p}_n^{\mathbf{c}}(\alpha)$, $\tilde{\mu}(\alpha)$, and the spectral covariance matrix $\mat{Z}$ from Section~\ref{sec:usp_sf} (Eqs.~\ref{eqn:mu_alpha}--\ref{eqn:Z_def}).

Let $\phi_1, \dots, \phi_T$ be the eigenvectors of $\mat{Z}$.
Let $\uv_{t:1}$ be the concatenated inputs up to time $t$ which are padded to create a length $T$ vector,
\begin{equation}
    \uv_{t:1} \defeq \begin{bmatrix}
        & \uv_t 
        & \uv_{t-1}
        & \dots 
        & \uv_1
        & 0 
        & \dots
        & 0
    \end{bmatrix}^{\top}.
\end{equation}

\subsection{General Form of Main Result}
First, we prove a more general form that holds for any choice of $\mathbf{c}_{0:n}$ and resulting polynomial $p_n^{\mathbf{c}}$.
\begin{theorem}
\label{thm:main_regret_general}
Let the system and notation be defined as in Section~\ref{sec:preliminaries}.
If the radius parameters of Algorithm~\ref{alg:new_sf} are set to:
    \begin{align*}
        & R_Q = \norm{\C} \norm{\B} \norm{\mathbf{c}}_1 \\
        & R_M = 2 \norm{\mat{C}} \norm{\mat{B}} \kappa \log (T) \beta^{4/3} T^{7/6} B_n 
    \end{align*}
    then the predictions $\hat{\y}_1, \dots, \hat{\y}_T$ from Algorithm~\ref{alg:new_sf} satisfy
      \begin{align*}
    \sum_{t=1}^T \ell_t(\Q^t, \M^t) \leq 18 \kappa \norm{\mat{C}} \norm{\mat{B}} \log (T) (n+k)  \left(T^3 \beta^{4/3} B_n k + n  \norm{\mathbf{c}}_1 T^{1/2}  \right).
\end{align*}
\end{theorem}

Critically, we observe that the guaranteed regret bound does not depend on the hidden dimension of the dynamics matrix $\A$.
While the general version of Algorithm~\ref{alg:new_sf} is interesting in its own right, we show that by choosing the coefficients of the algorithm to be the Chebyshev coefficients we obtain sublinear absolute error.
\begin{retheorem}[Detailed Version of Theorem~\ref{thm:main_regret}]
  Let the assumptions of Theorem~\ref{thm:main_regret_general} hold.
Furthermore, suppose that for any eigenvalue, $\lambda_j$, of $\mat{A}$
    \begin{equation*}
        \max_{j \in [\dhidden]} \abs{\arg(\lambda_j)} \leq T^{-1/4} \cdot T^{-13p/4},
    \end{equation*}
  then the predictions $\hat{\y}_1, \dots, \hat{\y}_T$ from Algorithm~\ref{alg:new_sf} where the preconditioning coefficients $\mathbf{c}_{0:n}$ are chosen to be the coefficients of the $n$-th monic Chebyshev polynomial satisfy
    \begin{equation*}
        \frac{1}{T} \sum_{t=1}^T \norm{\hat{\y}_t - \y_t}_1 \leq \tilde{O} \left( \frac{\kappa \norm{\mat{C}} \norm{\mat{B}}}{T^p}, \right),
    \end{equation*}
    where $\tilde{O}(\cdot)$ hides polylogarithmic factors in $T$.
\end{retheorem}
\begin{proof}
We introduce parameter $\beta$ to denote the maximum argument of the eigenvalues of $\mat{A}$ so that the spectrum of $\mat{A}$ lies in $\complex_{\beta}$.
From Theorem~\ref{thm:convex_relaxation_general} we have 
\begin{equation*}
     \sum_{t = 1}^T \ell_t(\Q^t, \M^t)  \leq 18 \kappa \norm{\mat{C}} \norm{\mat{B}} \log (T) (n+k)  \left(T^3 \beta^{4/3} B_n k + n  \norm{\mathbf{c}}_1 T^{1/2}  \right).
\end{equation*}
By Lemma~\ref{lemma:cheby_bound}, if for any eigenvalue $\lambda$ of $\A$, $\abs{\arg(\lambda)} \leq 1/64n^2$ then
\begin{equation}
\label{eqn:eig_bnd}
    \max_{\lambda \in \lambda(\A)} \abs{p_n^{\mathbf{c}}(\lambda)} \leq \frac{1}{2^{n-2}} = 4\cdot 2^{-n}.
\end{equation}
We will choose 
\begin{equation}
\label{eqn:n_choice}
    n = \frac{10}{13} \log_2 \left( T^{-1/2} T^3 \beta^{4/3}\right),
\end{equation}
and so if $\beta< 1/T^{1/4}$ then $\beta < 1/64n^2$, meaning that Eq.~\ref{eqn:eig_bnd} holds.
Moreover, by Lemma~\ref{lemma:cheby_coeffs_bound}, $\norm{\mathbf{c}}_1 \leq 2^{0.3n}$. Thus the Chebyshev-conditioned predictor class satisfies
\begin{equation*}
     \sum_{t = 1}^T \ell_t(\Q^t)  \leq 60 \kappa \norm{\mat{C}} \norm{\mat{B}} \log (T) (n+k)  \left(T^3 \beta^{4/3} 2^{-n} k + n  2^{0.3n} T^{1/2}  \right)
\end{equation*}
Then for $n$ as chosen above in Eq.~\ref{eqn:n_choice},
\begin{align*}
T^3 \beta^{4/3} 2^{-n} + 2^{0.3n} T^{1/2} = 2 \left( T^{14} \beta^{4} \right)^{1/13}.
\end{align*}
Assuming that $\beta<1$ we have that $n < 3 \log_2(T)$
and therefore
\begin{align*}
 \sum_{t = 1}^T \ell_t(\Q^t)  & \leq 120 \kappa \norm{\mat{C}} \norm{\mat{B}} \log (T) n^2 k^2   \left(T^3 \beta^{4/3} 2^{-n}  +   2^{0.3n} T^{1/2}  \right) \tag{Factoring out $n$ and $k$ and using $n+k \leq 2n k $} \\
 & \leq 240 \kappa \norm{\mat{C}} \norm{\mat{B}} \log (T) \left( 3 \log_2\left( T\right) \right)^2  k^2  \left( T^{14} \beta^{4} \right)^{1/13}  \tag{Plugging in chosen value for $n$.} \\
 & \leq 720 \kappa \norm{\mat{C}} \norm{\mat{B}}  \log (T)^3 k^2 \left( T^{14} \beta^{4} \right)^{1/13}.
\end{align*}
Therefore if $\beta = T^{-1/4} \cdot T^{-13p/4}$ then the final accumulated error is 
\begin{align*}
 \sum_{t = 1}^T \ell_t(\Q^t) \leq 720 \kappa \norm{\mat{C}} \norm{\mat{B}}  \log (T)^3 k^2  T^{1-p},
 \end{align*}
 which is sublinear as long as $p > 0$.
\hfill $\qedsymbol$ \end{proof}

\paragraph{Remark on the loss function. }  The reader may notice we use the $\ell_1$, or absolute loss, rather than Euclidean or other loss functions.
All norms are equivalent up to the (output) dimension, and thus learning to predict as well as the best linear dynamical system in hindsight is meaningful in any norm.
However, we make this technical choice since it greatly simplifies the regret bounds, and in particular the bound on the gradient norms, which is technically involved.
We conjecture that sublinear regret bounds are attainable in other norms as well, and leave it for future work.
We prove Theorem~\ref{thm:main_regret} on an (equivalent) algorithm, where we rescale the parameter $\M_j$ by $\sqrt{T}$ and the input $\langle \phi_j, \uv_{(t-n-i):1} \rangle$ by $1/\sqrt{T}$.
We account for this rescaling by increasing the size of the domain for $\M$ by $\sqrt{T}$.
The proof of Theorem~\ref{thm:main_regret} proceeds in two parts. The first is to show that any linear dynamical signal is well approximated by a predictor of the form in Algorithm~\ref{alg:new_sf} $\hat{\y}(\Q, \M)$ where $(\Q, \M) \in \K$.
\begin{lemma}[Approximation Lemma]
\label{lemma:approximation}
Suppose $\y_{1:T}$ evolves as a linear dynamical system characterized by $(\A, \B, \C)$ as in Eq.~\ref{eqn:LDS} satisfying the assumptions in Section~\ref{sec:preliminaries}.
Consider domain
    \begin{align*}
    \K =  \{ (\Q_1, \dots, \Q_n, \M_1, \dots, \M_k) \textrm{ s.t.
} & \norm{\Q_i} \leq R_Q, \textrm{ and } \norm{\M_i} \leq R_M \}.
\end{align*}
   If 
    \begin{align*}
        & R_Q \geq \norm{\C} \norm{\B} \norm{\mathbf{c}}_1, \\
        & R_M \geq 2 \norm{\mat{C}} \norm{\mat{B}} \kappa \log (T) \beta^{4/3} T^{7/6} B_n, 
    \end{align*}
    then there exists $(\hat{\Q}_1, \dots, \hat{\Q}_n, \hat{\M}_1, \dots, \hat{\M}_k) \in \K$ such that for prediction (as in Algorithm~\ref{alg:new_sf})
    \begin{equation*}
        \hat{\y}_t(\hat{\Q}, \hat{\M}) = - \sum_{i = 1}^n c_i \y_{t-i} + \sum_{j = 0}^n \hat{\Q}^t_j \uv_{t-j} + \frac{1}{\sqrt{T}} \sum_{j = 1}^k \hat{\M}_j^t \phi_j^{\top} \tilde{\uv}_{t-n-1:1},
    \end{equation*}
it holds that
    \begin{equation*}
        \norm{\hat{\y}_t - \y_t}_1 \leq 6 \kappa \norm{\mat{C}}  \norm{\mat{B}} \log(T) T^2 \beta^{4/3} B_n.
\end{equation*}
\end{lemma}

\begin{proof}[Proof of Lemma~\ref{lemma:approximation}]
Suppose $\y_{1:T}$ evolves as a linear dynamical system characterized by $(\A, \B, \C)$.
Then given inputs $\uv_{1:t}$,
\begin{equation*}
 \y_t = \sum_{s = 1}^{t} \C \A^{t-s} \B \uv_{s}.
\end{equation*}
With some linear algebra we get that convolving $\y_{t:t-n}$ with coefficients $\mathbf{c}_{0:n}$ results in
\begin{equation}
\label{eqn:y_t}
    \y_t = - \sum_{i=1}^n \mathbf{c}_i \y_{t-i} + \sum_{s = 0}^{n-1} \sum_{i = 0}^s \mathbf{c}_i \C \A^{s-i} \B \uv_{t-s} + \sum_{s = 0}^{t-n-1} \C p_n^{\mathbf{c}}(\A) \A^s \B \uv_{t-n-s}.
\end{equation}
Set $\hat{\Q}_s = \sum_{i = 0}^s \mathbf{c}_i \C \A^{s-i} \B $ and set $\hat{\Q} =  \left( \hat{\Q}_0, \dots, \hat{\Q}_{n-1} \right)$.
Then
\begin{align*} \norm{\hat{\Q}_i} \leq  \sum_{j = 0}^i \abs{\mathbf{c}_j} \norm{\C \A^{i-j} \B} \leq  \norm{\C} \norm{\B}  \sum_{j = 0}^i \abs{\mathbf{c}_j} \leq  \norm{\C} \norm{\B} \norm{\mathbf{c}}_1.
\end{align*}

Next we turn our attention to the spectral filtering parameters.
Using the notation $\tilde{p}_n^{\mathbf{c}}(\alpha)$ and $\tilde{\mu}(\alpha)$ from Section~\ref{sec:preliminaries}, we define the combined vector $\mu_{\tilde{p}_n}(\alpha) \defeq p_n^{\mathbf{c}}(\alpha) \tilde{\mu}(\alpha)$.
Let $\mat{D}$ be the diagonalization of $\A$. 
Then,
\begin{align*}
    \sum_{s = 0}^{t-n-1} \C \tilde{p}_n^{\mathbf{c}}(\A) \A^s \B \uv_{t-n-s}   & = \sum_{s = 0}^{t-n-1}  \mat{C} \mat{P} \tilde{p}_n^{\mathbf{c}}(\mat{D}) \mat{D}^s \mat{P}^{-1} \B  \uv_{t-n-s} \\
   & = \sum_{s = 0}^{t-n-1}  \mat{C} \mat{P} \left(  \sum_{m = 1}^{\dhidden} \tilde{p}_n^{\mathbf{c}}(\alpha_m) \alpha_m^s \mat{e}_m \mat{e}_m^{\top} \right) \mat{P}^{-1} \B  \uv_{t-n-s} \\
   & = \sum_{m = 1}^{\dhidden}  \sum_{s = 0}^{t-n-1}  \mat{C} \mat{P} \mat{e}_m  \mat{e}_m^{\top} \mat{P}^{-1} \B   \tilde{p}_n^{\mathbf{c}}(\alpha_m) \alpha_m^s  \uv_{t-n-s} \\
    & = \sum_{m = 1}^{\dhidden}  \mat{C} \mat{P} \mat{e}_m  
\mat{e}_m^{\top} \mat{P}^{-1} \B  \mu_{\tilde{p}_n}(\alpha_m)^{\top} \tilde{\uv}_{t-n-1:1} \\
      & = \sum_{m = 1}^{\dhidden}  \mat{C} \mat{P} \mat{e}_m  \mat{e}_m^{\top} \mat{P}^{-1} \B  \mu_{\tilde{p}_n}(\alpha_m)^{\top} \left(\sum_{j=1}^{T-n} \phi_j \phi_j^{\top} \right) \tilde{\uv}_{t-n-1:1} \tag{Orthonormality of the filters} \\
        & = \sum_{j = 1}^{T-n} \left( \sum_{m = 1}^{\dhidden}  \mat{C} \mat{P} \mat{e}_m  \mat{e}_m^{\top} \mat{P}^{-1} \B  \mu_{\tilde{p}_n}(\alpha_m)^{\top} \phi_j \right) \phi_j^{\top}  \tilde{\uv}_{t-n-1:1}.
\end{align*}
Therefore defining
    \begin{equation*}
        \hat{\M}_j \defeq T^{1/2} \sum_{m = 1}^{\dhidden}  \mat{C} \mat{P} \mat{e}_m  \mat{e}_m^{\top} \mat{P}^{-1} \B  \mu_{\tilde{p}_n}(\alpha_m)^{\top} \phi_j,
    \end{equation*}
    we have
  \begin{align*}
   \sum_{s = 0}^{t-n-1} \C \tilde{p}_n^{\mathbf{c}}(\A) \A^s \B \uv_{t-n-s}  
        & = \sum_{j = 1}^{T-n} \hat{\mat{M}}_j \frac{ \phi_j^{\top}  \tilde{\uv}_{t-n-1:1} }{\sqrt{T}}.
\end{align*}  
Next we bound the norm of $\hat{\M}_j$.Let $\mat{S}$ be the diagonal matrix with entries $\mat{S}_{mm} = \mu_{\tilde{p}_n}(\alpha_m)^{\top} \phi_j$.
Note that $\hat{\M}_j = \mat{C} \mat{P} \mat{S} \mat{P^{-1}} \mat{B}$. For short, let $C_{\kappa} = \norm{\mat{C}} \norm{\mat{B}} \kappa$.
Recalling that $\mu_{\tilde{p}_n}(\alpha) = p_n^{\mathbf{c}}(\alpha) \mu (\alpha)$,
\begin{equation}
\label{eqn:Mbound}
    \norm{\hat{\M}_j}  = \norm{T^{1/2} \mat{C} \mat{P} \mat{S} \mat{P^{-1}} \mat{B}}  \leq C_{\kappa} T^{1/2} \max_{m \in [\dhidden]}  \abs{p_n^{\mathbf{c}}(\alpha_m)}\cdot \max_{m \in [\dhidden]} \abs{\tilde{\mu}(\alpha_m)^{\top} \phi_j}.
\end{equation}
By Lemma~\ref{lemma:spectral_filtering_property}, 
\begin{equation*}
     \max_{\alpha \in \complex_{\beta}} \abs{\tilde{\mu}(\alpha)^{\top} \phi_j} \leq 2 \log (T)  \beta^{4/3} T^{2/3}.
\end{equation*}
Therefore,
\begin{equation*}
    \norm{\hat{\M}_j} \leq 2 C \log (T) \beta^{4/3} T^{7/6} \max_{\alpha \in \complex_{\beta}}  \abs{p_n^{\mathbf{c}}(\alpha)}.
\end{equation*}
Therefore $(\hat{\Q}, \hat{\M}) \in \K$ for the chosen radius of $\K$ (i.e. $R_Q$ and $R_M$).
Finally, we bound the truncation error:
\begin{equation*}
    \hat{\y}_t(\hat{\Q}, \hat{\M}) - \y_t  = \sum_{j = k+1}^{T-n} \hat{\mat{M}}_j \frac{ \phi_j^{\top}  \tilde{\uv}_{t-n-1:1} }{\sqrt{T}}.
\end{equation*}

\begin{align*}
    \norm{ \hat{\y}_t(\hat{\Q}, \hat{\M}) -  \y_t}_2 
     & \leq \sum_{j = k+1}^{T-n} \norm{ \hat{\M}_j } \norm{\phi_j^{\top}}_1 \norm{ \uv_{(t-n-1):1}}_{\infty} T^{-1/2} \\
     & \leq \sum_{j = k+1}^{T-n} \norm{ \hat{\M}_j } \tag{$\norm{\phi_j}_2 = 1 \implies \norm{\phi_j}_1 \leq \sqrt{T} $} \\
      & \leq C_{\kappa} T^{1/2} B_n  \cdot \sum_{j = k+1}^{T-n} \max_{\alpha \in \mathbb{C}_{\beta}} \abs{\tilde{\mu}(\alpha)^{\top} \phi_j} \tag{Bound on $\norm{\hat{\mat{M}}_j}$ from Eq.~\ref{eqn:Mbound}} \\
      & \leq C_{\kappa} T^{1/2} {B_n} \cdot \left(6 \log(T) \beta^{4/3} T^{3/2} \right) \tag{Lemma~\ref{lemma:spectral_filtering_property}} \\
  & = 6 \norm{\mat{C}} \norm{\mat{B}} \kappa \log(T) T^2 \beta^{4/3} B_n.
\tag{Plugging in value for $C_{\kappa}$.}
\end{align*}
\hfill $\qedsymbol$ \end{proof}

The next result provides the regret of Online Gradient Descent when compared to the best $(\Q^*, \M^*) \in \K$.
\begin{lemma}[Online Gradient Descent]
\label{lemma:ogd}
Recall the domain $\K$ in Algorithm~\ref{alg:new_sf} be
\begin{align*}
    \K =  \{ (\Q_1, \dots, \Q_n, \M_1, \dots, \M_k) \textrm{ s.t.
} & \norm{\Q_i} \leq R_Q \textrm{ and } \norm{\M_i} \leq R_M \}.
\end{align*}

The iterates output by Algorithm~\ref{alg:new_sf} satisfy 
\begin{equation*}
        \sum_{t = 1}^T \ell_t(\Q^t, \M^t) -  \min_{(\Q^*, \M^*) \in \K}  \sum_{t = 1}^T \ell_t(\Q^*, \M^*) \leq \frac{3}{2} (n R_Q + k R_M) (n+k) \sqrt{d_{\textrm{out}}} \sqrt{T}.
\end{equation*}
\end{lemma}

\begin{proof}[Proof of Lemma~\ref{lemma:ogd}]
Let $G = \max_{t \in [T]} \norm{\nabla_{\Q,\M} \ell_t(\Q^t, \M^t)}$ and let $$D = \max_{(\Q^1, \M^1), (\Q^2, \M^2) \in \K} \norm{(\Q^1, \M^1) - (\Q^2, \M^2)}.$$ By Theorem 3.1 from \cite{hazan2016introduction}, 
    \begin{equation*}
        \sum_{t = 1}^T \ell_t(\Q^t, \M^t) -  \min_{(\Q^*, \M^*) \in \K}  \sum_{t = 1}^T \ell_t(\Q^*, \M^*) \leq \frac{3}{2} GD \sqrt{T}.
\end{equation*}
    Therefore it remains to bound $G$ and $D$.
By definition of $\K$ we have
    \begin{equation*}
        D \leq n R_Q + k R_M.
\end{equation*}
    For $G$ we compute the subgradient at any $\Q_i$ and $\M_i$.
Note that, in general, $\nabla_{\A} \norm{\A \mat{x} - \mat{b}}_1 = \mathrm{sign}(\A \mat{x} - \mat{b}) \mat{x}^{\top}$.
Since $\norm{\mat{x} \mat{y}^{\top}}_F \leq \norm{\mat{x}}_2 \norm{\mat{y}}_2$ we have
\begin{equation*}
    \norm{\nabla_{\A} f(\A)}_F \leq \sqrt{d} \norm{\mat{x}}_2,
\end{equation*}
where $d$ is the dimension of $\mat{b}$.
Using this and the assumption that for any $t \in [T]$, $\norm{\uv_t}_2 \leq 1$, we have
\begin{align*}
     \norm{ \nabla_{\M_j} \ell_t(\Q,\M) } & \leq \sqrt{d_{\textrm{out}}} \norm{\phi_j^{\top} \tilde{\uv}_{t-n-1:1} T^{-1/2}} \\
     & \leq \sqrt{d_{\textrm{out}}}  \norm{\phi_j}_1 \norm{\tilde{\uv}_{t-n-1:1}}_{\infty} T^{-1/2} \\
     & \leq \sqrt{d_{\textrm{out}}} .
\tag{$\norm{\tilde{\uv}_{t-n-1:1}}_{\infty} \leq 1$ and $\norm{\phi_j}_1 \leq \sqrt{T}$ since $\norm{\phi_j}_2 \leq 1$ }
\end{align*}
Next,
\begin{align*}
   \norm{ \nabla_{\Q_i} \ell_t(\Q) }_F & \leq \sqrt{d_{\textrm{out}}} \norm{\uv_{t-i}}_2 \leq \sqrt{d_{\textrm{out}}}.
\end{align*}
Therefore, 
\begin{align*}
   G = \max_{t \in [T]} \norm{\nabla_{(\Q, \M)}\ell_t(\Q^t, \M^t)}_F \leq (n+k) \sqrt{d_{\textrm{out}}}.
\end{align*}
    Therefore, we have 
 \begin{equation*}
        \sum_{t = 1}^T \ell_t(\Q^t, \M^t) -  \min_{(\Q^*, \M^*) \in \K}  \sum_{t = 1}^T \ell_t(\Q^*, \M^*) \leq \frac{3}{2} (n R_Q + k R_M) (n+k) \sqrt{d_{\textrm{out}}} \sqrt{T}.
\end{equation*}

\hfill $\qedsymbol$ \end{proof}

Combining Lemma~\ref{lemma:approximation} and Lemma~\ref{lemma:ogd} proves Theorem~\ref{thm:main_regret}. 
\begin{proof}[Proof of Theorem~\ref{thm:main_regret}]
    By Lemma~\ref{lemma:ogd} the iterates from Algorithm~\ref{alg:new_sf} $(\Q^1, \M^1)$, \dots, $(\Q^T, \M^T)$ satisfy
    \begin{equation*}
     \sum_{t = 1}^T \ell_t(\Q^t, \M^t) -  \min_{(\Q^*, \M^*) \in \K}  \sum_{t = 1}^T \ell_t(\Q^*, \M^*) \leq \frac{3}{2} (n R_Q + k R_M) (n+k) \sqrt{d_{\textrm{out}}} \sqrt{T}.
\end{equation*}
By Lemma~\ref{lemma:approximation}, if $\y_{1:T}$ comes from a linear dynamical system then for large enough radius parameters of $\K$ ($R_Q$ and$R_M$), there exists $(\hat{\Q}, \hat{\M}) \in \K$ such that 
\begin{equation*}
    \sum_{t = 1}^T \ell_t(\hat{\Q}, \hat{\M}) \leq T \left( 6 \kappa \norm{\mat{C}}  \norm{\mat{B}} \log(T) T^2 \beta^{4/3} B_n \right).
\end{equation*}
Let $C_T = 6 \kappa \norm{\mat{C}}  \norm{\mat{B}} \log(T)$ for short.
Since $(\hat{\Q}, \hat{\M}) \in \K$ we must also have that
\begin{equation*}
    \min_{(\Q^*, \M^*) \in \K}  \sum_{t = 1}^T \ell_t(\Q^*, \M^*) \leq C_T T^3 \beta^{4/3} B_n.
\end{equation*}
Therefore, 
\begin{align*}
    \sum_{t=1}^T \ell_t(\Q^t, \M^t) & \leq \min_{(\Q^*, \M^*) \in \K}  \sum_{t = 1}^T \ell_t(\Q^*, \M^*) + \frac{3}{2} (n R_Q + k R_M) (n+k) \sqrt{d_{\textrm{out}}} \sqrt{T} \\
    & \leq C_T T^3 \beta^{4/3} B_n +  \frac{3}{2} (n R_Q + k R_M) (n+k) \sqrt{d_{\textrm{out}}} \sqrt{T}.
\end{align*}
Recall that we set $R_Q = \norm{\C} \norm{\B} \norm{\mathbf{c}}_1$ and $R_M = 2 \norm{\mat{C}} \norm{\mat{B}} \kappa \log (T) \beta^{4/3} T^{7/6} B_n$.
Using $C_T$ we bound $R_M \leq C_T \beta^{4/3} T^{7/6} B_n$.
We conclude,
\begin{align*}
    \sum_{t=1}^T \ell_t(\Q^t, \M^t) & \leq C_T T^3 \beta^{4/3} B_n +  \frac{3}{2} C_T \bigg( n \norm{\mathbf{c}}_1  + k \beta^{4/3} T^{7/6} B_n \bigg) (n+k) \sqrt{d_{\textrm{out}}} \sqrt{T}  \\
    & \leq C_T  \left( T^3 \beta^{4/3} B_n +  \frac{3}{2} n(n+k) \norm{\mathbf{c}}_1 T^{1/2}  + \frac{3}{2} k (n+k) \beta^{4/3} T^{5/3} B_n  \right) \\
    & \leq C_T  \left( 3 T^3 \beta^{4/3} B_n k (n+k) +  \frac{3}{2} n(n+k) \norm{\mathbf{c}}_1 T^{1/2} \right) \tag{Bounding the right-most term by the left-most term times $k(n+k)$} \\
    & \leq 6 \kappa 
\norm{\mat{C}} \norm{\mat{B}} \log (T)  \left( 3 T^3 \beta^{4/3} B_n k (n+k) +  \frac{3}{2} n(n+k) \norm{\mathbf{c}}_1 T^{1/2} \right) \tag{Plugging in $C_T = 5 \kappa \norm{\mat{C}}  \norm{\mat{B}} \log(T)$} \\
    & \leq 18 \kappa \norm{\mat{C}} \norm{\mat{B}} \log (T) (n+k)  \left(T^3 \beta^{4/3} B_n k + n  \norm{\mathbf{c}}_1 T^{1/2}  \right).
\end{align*}
\hfill $\qedsymbol$ \end{proof}

\subsection{Proving Approximation Lemma~\ref{lemma:approximation}}
\label{sec:approximation_lemma_proof}

\begin{lemma}
\label{lemma:spectral_filtering_property}
Let $\tilde{\mu}(\alpha)$ and $\mat{Z}$ be as defined in Section~\ref{sec:preliminaries} and let $\phi_1, \dots, \phi_{T}$ be the eigenvectors of $\mat{Z}_{T}$.
\begin{equation*}
    \max_{\alpha \in \complex_{\beta}} \abs{\tilde{\mu}(\alpha)^{\top} \phi_j} \leq 2 \log (T)  \beta^{4/3} T^{2/3} .
\end{equation*}
Moreover,
\begin{equation*}
\sum_{j=1}^T \abs{\tilde{\mu}(\alpha)^{\top} \phi_j}   \leq 6 \log (T) \beta^{4/3} T^{3/2}.
\end{equation*}
\end{lemma}

\subsubsection{Proof of the Spectral Filtering Property Lemma~\ref{lemma:spectral_filtering_property}}
\label{sec:lipschitz_bound}

In order to prove Lemma~\ref{lemma:spectral_filtering_property} we require two further helper lemmas.
The first is Lemma~\ref{lemma:lipschitz}, which roughly argues that the Lipschitz constant of a function $f: \complex_{\beta} \to \R$ can be bounded by a polynomial of the expectation of the function on $\complex_{\beta}$.
The second is Lemma~\ref{thm:novel_spectral_decay}, which argues that the matrix $\mat{Z}$ from Eq.~\ref{eqn:Z_def} that we use to derive the new spectral filters has small eigenvalues.
\begin{lemma} \label{lemma:lipschitz}
Let $L>0$, $g_{\max}>0$ and $0\le\beta\le 1$.  Define
\[
  \mathbb C_\beta=\bigl\{\,z \in\mathbb C:\;|z|\le1,\;\abs{\arg(z)} \le\beta\bigr\},
\]
and let $\mathcal F$ be the set of non-negative, $L$-Lipschitz functions
$f:\mathbb C_\beta\!\to\!\mathbb R$ such that
\(\max_{z\in\mathbb C_\beta}f(z)=g_{\max}\).
Then
\[
  \boxed{\;
    \min_{f\in\mathcal F}
      \int_{\mathbb C_\beta} f(z)\,dz
    \;\ge\;
    \frac{\beta\,g_{\max}^{3}}{24\,L^{2}}\;}
  .
\]
\end{lemma}

\begin{proof}[Proof of Lemma~\ref{lemma:lipschitz}]
\textbf{1.  The extremal function.} 
Fix $f\in\mathcal{F}$ and choose $z_\star$ with $f(z_\star)=g_{\max}$.
Among admissible functions we consider only those that decrease as fast
as the Lipschitz constraint allows in every radial direction, replacing $f$
by the {\em extremal cone}.
Set
\[
   h(z):=[\,g_{\max}-L|z-z_\star|\,]_+,
   \qquad
   r:=\frac{g_{\max}}{L}.
\]
Because $0\le h\le f$ on $\mathbb C_\beta$, it suffices to lower bound
\(\displaystyle\int_{\mathbb C_\beta}h\).

\medskip
\textbf{2.
Geometry around $z_\star$.}  It can be seen by Euclidean geometry,  the point $z_\star = 0$ minimize the volume of the intersection with $C_\beta$, up to a factor of ${4}$, to make it the extremal function.
It is depicted in Figure \ref{fig:cone-int}. Notice that the area of the intersection is colored in blue, and it is particularly easy to integrate over.
\begin{figure}[ht!] 
  \centering
  \includegraphics[width=.55\textwidth]{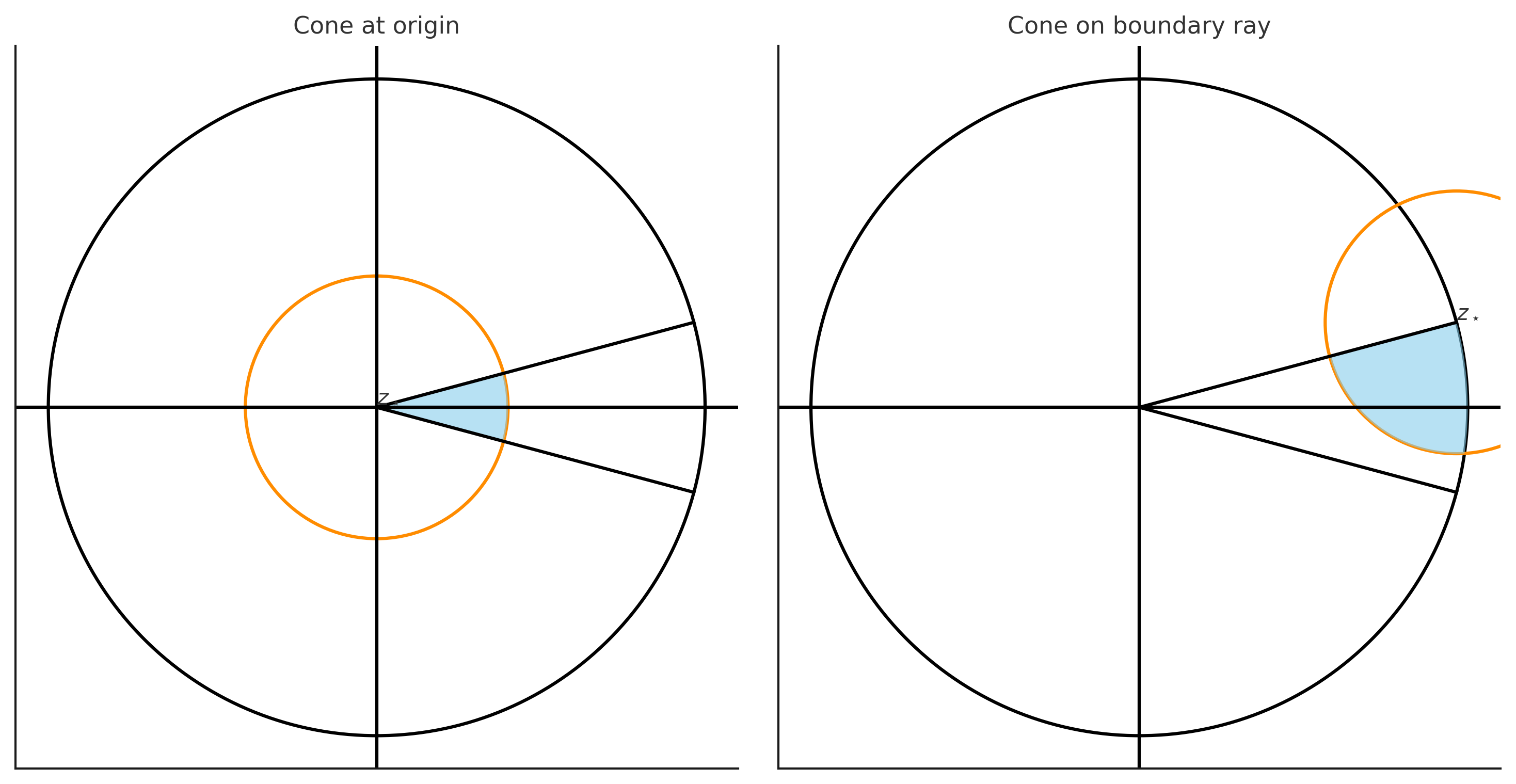}
  \caption{The origin minimizes the intersection area up to factor $\frac{1}{4}$.}
  \label{fig:cone-int}
\end{figure}

\medskip
\textbf{3.
Integrating \(h\).}
\[
\begin{aligned}
  \int_{\mathbb C_\beta} h
  & \geq    \int_{0}^{\beta}\!\!\int_{0}^{r}
       \bigl(g_{\max}-L\rho\bigr)\,\rho
       \,d\rho\,d\varphi
   =  \frac{\beta \,g_{\max}^{3}}{6\,L^{2}}
.
\end{aligned}
\]

Since \(h\in\mathcal F\), and is the extremal function, we obtain the Lemma statement.
\hfill $\qedsymbol$ \end{proof}

\begin{lemma}
\label{thm:novel_spectral_decay}
Recall from Section~\ref{sec:preliminaries}
 \begin{equation*}
     \mat{Z} \defeq \int_{\alpha \in \complex_{\beta}} \tilde{\mu}(\alpha) \tilde{\mu}(\overline{\alpha})^{\top}  d \alpha.
\end{equation*} 
There are at most $2 \log(T)$ eigenvalues with absolute value greater than $\beta$.
\end{lemma}

\begin{proof}[Proof of Lemma~\ref{thm:novel_spectral_decay}]
The $(j,k)$-th entry of $\mat{Z}$ is
\begin{align*}
    \mat{Z}_{jk} &  = \int_{r \in [0,1]} \int_{\theta \in \complex_{\beta}} (1-r^2e^{2i\theta}) (re^{i \theta})^j (1-r^2e^{-2i\theta})(r e^{-i \theta})^k r dr d \theta.
\end{align*}
Evaluating this integral for $j=k$ we get
\begin{align*}
    \mat{Z}_{jj} = \beta \left( \frac{1}{j+1} + \frac{1}{j+3} \right) - \frac{\sin(2 \beta)}{j+2} \leq \frac{3 \beta}{j+1}.
\end{align*}
Therefore,
\begin{align*}
    \mathrm{tr}(\mat{Z}) & \leq 6 \beta \log(T).
\end{align*}
Using the fact that $\mat{Z}$ has nonnegative eigenvalues, we have that the number of eigenvalues larger than $\beta$, $n_{\beta}$, satisfies
\begin{equation*}
    \sum_{i=1}^T \lambda_i \geq \sum_{\lambda_i \geq \beta} \lambda_i \geq \beta n_{\beta}.
\end{equation*}
Since the sum of $\mat{Z}$'s eigenvalues is bounded by its trace we see
\begin{equation*}
    \mathrm{tr}(\mat{Z}) \geq \beta n_{\beta}.
\end{equation*}
Using our bound on the trace of $\mat{Z}$ we have
\begin{equation*}
    n_{\beta} \leq \frac{2 \beta \log(T)}{\beta} = 6 \log (T).
\end{equation*}
\hfill $\qedsymbol$ \end{proof}

With Lemma~\ref{lemma:lipschitz} and Lemma~\ref{thm:novel_spectral_decay} in hand, we are ready to prove Lemma~\ref{lemma:spectral_filtering_property}.
\begin{proof}[Proof of Lemma~\ref{lemma:spectral_filtering_property}]
Let
\begin{equation*}
    f_j(\alpha) \defeq \abs{\phi_j^{\top} \tilde{\mu}(\alpha)}^2,
\end{equation*}
If $f_{j}(\alpha)$ is $L$-Lipschitz, letting $g_{\textrm{max}} = \max_{\alpha \in \complex_{\beta}} f_{j}(\alpha)$ by Lemma~\ref{lemma:lipschitz},
\begin{equation*}
    \int_{\alpha \in \complex_{\beta}} f_{j} (\alpha) d\alpha \geq \frac{\beta g_{\textrm{max}}^3  }{24 L^2},
\end{equation*}
or equivalently,
\begin{equation*}
   \max_{\alpha \in \complex_{\beta}} f_{j}(\alpha)  \leq \left( \frac{24 L^2}{\beta}  \int_{\alpha \in \complex_{\beta}} f_{j}(\alpha) d\alpha \right)^{1/3}.
\end{equation*}
Observe that
\begin{align*}
\int_{\alpha \in \complex_{\beta}} f_{j}(\alpha) d\alpha & = \int_{\alpha \in \complex_{\beta}}  \abs{\phi_j^{\top} \tilde{\mu}(\alpha)}^2 d \alpha \\
& = \phi_j^{\top} \left( \int_{\alpha \in \complex_{\beta}}   \tilde{\mu}(\alpha) \tilde{\mu}(\overline{\alpha})^{\top} d \alpha   \right) \overline{\phi_j}  \\
& = \sigma_j.
\end{align*}
Therefore conclude we have shown,
\begin{equation}
\label{eqn:bound1}
    \max_{\alpha \in \complex_{\beta}} \abs{\tilde{\mu}(\alpha)^{\top} \phi_j} = \max_{\alpha \in \complex_{\beta}}\sqrt{f_j(\alpha)}  \leq \left( \frac{24 L^2}{\beta} \sigma_j \right)^{1/6}.
\end{equation}
The remainder of the proof consists of bounding the Lipschitz constant $L$ and bounding the eigenvalue $\sigma_j$.
To bound the Lipschitz constant of $f_{j}$,
\begin{align*}
  L & \leq \max_{\alpha \in S} \abs{ f'_{j}(\alpha) } \\
  & =  \max_{\alpha \in S} 2 \abs{ \mathrm{Re} \left( \phi_j^{\top} \tilde{\mu}'(\alpha) \cdot \phi_j^{\top} \tilde{\mu}(\alpha) \right)} \\
    & \leq \max_{\alpha \in S} \norm{\phi_j}_2^2 \cdot \norm{\tilde{\mu}(\alpha)}_2 \cdot \norm{\tilde{\mu}'(\alpha)}_2 \\
    & = \max_{\alpha \in S} \norm{\tilde{\mu}(\alpha)}_2 \cdot \norm{\tilde{\mu}'(\alpha)}_2.
\end{align*}
Using Lemma~\ref{lemma:mu_alpha_bnd}, we have $L \leq 12 \beta^4 T^2$. By Lemma~\ref{thm:novel_spectral_decay}, for any $j \in [T]$,
 \begin{equation*}
        \sigma_j \leq 2 \beta \log T.
    \end{equation*}
We conclude,
\begin{equation*}
    \max_{\alpha \in \mathbb{C}_{\beta}} \abs{\tilde{\mu}(\alpha)^{\top} \phi_j} \leq \left( 24 \beta^8 T^4 \log(T) \right)^{1/6} < 2 \log (T)  \beta^{4/3} T^{2/3} .
\end{equation*}
We also bound
\begin{align*}
    \sum_{j=1}^T \abs{\tilde{\mu}(\alpha)^{\top} \phi_j} & \leq \sum_{j=1}^T \left( \frac{24 L^2}{\beta} \sigma_j \right)^{1/6} \\
    & = \left( \frac{24 L^2}{\beta}\right)^{1/6} \sum_{j=1}^T \sigma_j^{1/6} \\
    & = T \left( \frac{24 L^2}{\beta}\right)^{1/6} \sum_{j=1}^T \frac{1}{T} \sigma_j^{1/6} \\
    & \leq T \left( \frac{24 L^2}{\beta}\right)^{1/6}  \left( \sum_{j=1}^T \frac{1}{T} \sigma_j  \right)^{1/6} \tag{$\mathbb{E}[f(X)] \leq f(\mathbb{E}[X])$ for concave $f(\cdot)$ and $f(x) = x^{1/6}$ is concave} \\
    & \leq  T^{5/6} \left( \frac{24 L^2}{\beta}\right)^{1/6}  \left( \mathrm{tr}(\mat{Z} ) \right)^{1/6} \\
    & \leq  T^{5/6} \left( \frac{24 L^2}{\beta}\right)^{1/6}   \left( 
6 \beta \log (T) \right)^{1/6}  \tag{$\mathrm{tr}(\mat{Z}) \leq 6 \beta \log (T)$ by proof of Lemma~\ref{thm:novel_spectral_decay}} \\
    & \leq \left(24 T^5 \cdot 12^2 \beta^8 T^4  \cdot 6 \log T \right)^{1/6} \tag{Plugging in $L \leq 12 \beta^4 T^2$ from above.} \\
    & \leq 6 \beta^{4/3} T^{3/2} \log (T).
\end{align*}
\hfill $\qedsymbol$ \end{proof}

\newpage
\appendix
\section{Proofs of Technical Lemmas}

\begin{lemma}
\label{lemma:mu_alpha_bnd}
For $\beta \ll 1$ and $T$ large enough, the $\ell_2$ norms are bounded by:
\begin{equation*}
\|\tilde{\mu}(\alpha)\|_2 \le \max\left(1, \ 4 \beta^2 \sqrt{T} \right), \quad \|\tilde{\mu}'(\alpha)\|_2 \le \max\left(1, \ 3 \beta^2 T^{3/2} \right).
\end{equation*}
\end{lemma}

\begin{proof}[Proof of Lemma~\ref{lemma:mu_alpha_bnd}]
\textbf{1. Bound on $\|\tilde{\mu}(\alpha)\|_2$.}
The squared magnitude of the entries depends on $|1-\alpha^2|$. Using the triangle inequality and the small angle approximation $|\sin(x)| \le |x|$:
\begin{equation*}
    |1-\alpha^2| = |1 - r^2 e^{2i\theta}|
\end{equation*}
The magnitude is maximized when $r=1$, pushing the value to the boundary of the unit circle.
\begin{equation*}
    |1 - e^{2i\theta}| = 2 |\sin(\theta)| \le 2|\theta| \le 2 \beta
\end{equation*}
Thus, we have the critical inequality for the kernel term:
\begin{equation*}
    |1-\alpha^2| \le 2 \beta \quad (\text{for } r \approx 1)
\end{equation*}
The squared $\ell_2$ norm is:
\begin{equation*}
    \|\tilde{\mu}(\alpha)\|_2^2 = \sum_{j=0}^{T-1} |\tilde{\mu}_j(\alpha)|^2 = |1-\alpha^2|^4 \sum_{j=0}^{T-1} |\alpha|^{2j}
\end{equation*}
We analyze the maximum over the domain $\mathcal{D}$. The maximum of a modulus of a holomorphic function (multiplied by polynomial terms) generally occurs at the boundaries or critical points (here, $z=0$).
\paragraph{Case A: The Origin ($r=0$).} At $\alpha = 0$, the zeroth term is $(1-0)^2 \cdot 1 = 1$. All other terms are 0, so $\|\tilde{\mu}(0)\|_2 = 1$.
\paragraph{Case B: The Boundary ($r=1$).} At $r=1$, the geometric sum is simply $\sum_{j=0}^{T-1} 1 = T$. Using our preliminary bound $|1-\alpha^2| \le 2 \beta$:
\begin{equation*}
    \|\tilde{\mu}(\alpha)\|_2^2 \le(2\beta)^4 \cdot T = 16 \beta^4 T.
\end{equation*}
Taking the square root, $\|\tilde{\mu}(\alpha)\|_2 \le 4 \beta^2 \sqrt{T}$. Combining the cases we have, 
\begin{equation*}
    \|\tilde{\mu}(\alpha)\|_2 \le \max(1,4 \beta^2 \sqrt{T})
\end{equation*}
\textbf{2. Bound on $\|\tilde{\mu}'(\alpha)\|_2$.}
First, we compute the derivative using the product rule:
\begin{align*}
    \tilde{\mu}'_j(\alpha) &= \frac{d}{d\alpha} \left[ (1-\alpha^2)^2 \alpha^j \right] \\
    &= 2(1-\alpha^2)(-2\alpha)\alpha^j + (1-\alpha^2)^2 (j \alpha^{j-1}) \\
    &= \alpha^{j-1} (1-\alpha^2) \left[ -4\alpha^2 + j(1-\alpha^2) \right]
\end{align*}
We seek to maximize the squared norm $\|\tilde{\mu}'(\alpha)\|_2^2 = \sum_{j=0}^{T-1} |\tilde{\mu}'_j(\alpha)|^2$.
\paragraph{Case A: The Origin ($r=0$).} For $j=1$, the term is $\alpha^0 (1-0) [-0 + 1(1)] = 1$. For $j \neq 1$, the term vanishes due to the $\alpha$ factor. Thus $\|\tilde{\mu}'(0)\|_2 = 1$.
\paragraph{Case B: The Boundary ($r=1$).} For large $T$, the sum is dominated by large $j$. For large $j$, the term $j(1-\alpha^2)$ dominates the constant $-4\alpha^2$.
\begin{equation*}
    |\tilde{\mu}'_j(\alpha)| \approx |\alpha|^{j-1} |1-\alpha^2| \cdot j |1-\alpha^2| = j |1-\alpha^2|^2
\end{equation*}
Summing the squares:
\begin{equation*}
    \|\tilde{\mu}'(\alpha)\|_2^2 \approx \sum_{j=0}^{T-1} \left( j |1-\alpha^2|^2 \right)^2 = |1-\alpha^2|^4 \sum_{j=0}^{T-1} j^2
\end{equation*}
Using the summation formula $\sum_{j=0}^{n} j^2 \approx \frac{n^3}{3}$ and the bound $|1-\alpha^2| \le 2 \beta$:
\begin{equation*}
    \|\tilde{\mu}'(\alpha)\|_2^2 \le (2\beta)^4 \cdot \frac{T^3}{3} = 16 \beta^4 \frac{T^3}{3}
\end{equation*}
Taking the square root:
\begin{equation*}
    \|\tilde{\mu}'(\alpha)\|_2 \le \frac{4}{\sqrt{3}} \beta^2 T^{3/2}
\end{equation*}
\hfill $\blacksquare$
\end{proof}
\section{Chebyshev Polynomials Evaluated in the Complex Plane}
\label{appendix:chebyshev}
In this section we let $T_n$ denote the $n$-th Chebyshev polynomial and let $M_n$ denote the monic form. 

\begin{proof}[Proof of Lemma~\ref{lemma:cheby_bound}]
We use that $M_n(z) = T_n(z)/2^{n-1}$ and
\begin{equation}
T_n(z) =  \cos \left( n \arccos(z)\right).
\end{equation}
If $\abs{\mathrm{Im}(z)} \leq 1/64n^2$ then by Lemma~\ref{lemma:arccos}, $\arccos(z) \leq 1/n$. Therefore $n \arccos(z) \leq 1$ and so by Fact~\ref{fact:cos_bound}, 
\begin{equation}
\label{eqn:tbound}
    T_n(z) = \cos( n \arccos(z)) \leq 2
\end{equation}
Now we turn to the derivative $M_n'(z)$. It's a fact that
\begin{equation}
    M_n'(z) = \frac{n}{2^{n-1}} U_{n-1}(z),
\end{equation}
where $U_{n-1}$ is the Chebyshev polynomial of the second kind. We next use the fact that
\begin{equation*}
    U_{n-1}(z) = \begin{cases} 2 \sum_{\substack{j \geq 0 \\ j \textrm{ even } }}^n T_j(z), & n \textrm{ even}, \\
    2 \sum_{\substack{j \geq 0 \\ j \textrm{ odd } }}^n T_j(z), & n \textrm{ odd} .
    \end{cases}
\end{equation*}
By Eq~\eqref{eqn:tbound}, $\abs{T_j(z)} \leq 2$ for any $j$ and therefore
\begin{equation}
    \abs{U_{n-1}(z)} \leq n.
\end{equation}
Therefore, 
\begin{equation*}
    \abs{M_n'(z)} \leq \frac{n^2}{2^{n-1}}.
\end{equation*}
\hfill $\qedsymbol$ \end{proof}

\begin{fact}
\label{fact:cos_bound}
    Let $z \in \complex$. Then $\abs{\cos(z)} \leq 2$ whenever $\abs{\mathrm{Im}} \leq 1$.
\end{fact}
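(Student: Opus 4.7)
The plan is to reduce the bound to a direct computation via the exponential representation of cosine. Write $z = x + iy$ with $x, y \in \reals$ and $|y| \leq 1$, and use the identity
\begin{equation*}
\cos(z) \eq \tfrac{1}{2}\lr{e^{iz} + e^{-iz}}.
\end{equation*}
The triangle inequality then gives $|\cos(z)| \leq \tfrac{1}{2}(|e^{iz}| + |e^{-iz}|)$.

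Next I would compute the two moduli explicitly. Since $e^{iz} = e^{ix - y} = e^{-y}(\cos x + i \sin x)$, one has $|e^{iz}| = e^{-y}$, and similarly $|e^{-iz}| = e^{y}$. Combining these,
\begin{equation*}
\abs{\cos(z)} \leq \tfrac{1}{2}\lr{e^{-y} + e^{y}} \eq \cosh(y).
\end{equation*}

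Finally, since $\cosh$ is even and monotonically increasing on $[0,\infty)$, the hypothesis $|y| \leq 1$ yields $\cosh(y) \leq \cosh(1) = (e + e^{-1})/2 < 2$, which completes the proof.

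There is no real obstacle here; the statement is a routine consequence of the exponential form of $\cos$ together with the elementary bound $\cosh(1) < 2$. (In fact the argument shows the sharper bound $|\cos(z)| \leq \cosh(1) \approx 1.543$ under the same hypothesis, but the weaker constant $2$ is what gets invoked in the proof of Lemma~\ref{lemma:cheby_bound}.)
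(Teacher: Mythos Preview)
Your proof is correct. The paper takes a slightly different route: it writes $\cos(x+iy)$ via the identity $\cos x\cosh y - i\sin x\sinh y$, computes the modulus exactly as $(\cos^2 x + \sinh^2 y)^{1/2}$ using $\cosh^2 y - \sinh^2 y = 1$, and then bounds this by $\sqrt{1+\sinh^2 1}<2$. Your exponential-form argument with the triangle inequality is more direct---it skips the algebraic simplification and lands immediately on $\cosh(y)$, yielding the sharper constant $\cosh(1)\approx 1.543$ that you note. Both arguments are elementary and equally valid; the only trade-off is that the paper's version gives the exact modulus before bounding, while yours is a one-step estimate.
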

\begin{proof}[Proof of Fact~\ref{fact:cos_bound}]
   \begin{align*}
       \abs{ \cos(x + i y) } & = \left( \cos^2 x \cosh^2 y + \sin^2 x \sinh^2 y \right)^{1/2} \tag{Uses standard complex cosine identity.} \\
       & = \left( \cos^2 x  +  \cos^2 x \left(\cosh^2 y - 1 \right) + \sin^2 x \sinh^2 y \right)^{1/2}  \\
        & = \left( \cos^2 x  +  \cos^2 x \sinh^2 y + \sin^2 x \sinh^2 y \right)^{1/2} \tag{$\cosh^2 y - \sinh^2 y = 1$} \\
         & = \left( \cos^2 x  +   \sinh^2 y \right)^{1/2} \tag{$\cos^2 x + \sin^2 x = 1$} \\
         & \leq \left( 1  +   \sinh^2 y \right)^{1/2} \tag{$\sinh^2 y \leq 2$ when $\abs{y} \leq 1$.} \\
          & \leq 2.
   \end{align*}
\hfill $\qedsymbol$ \end{proof}

\begin{lemma}
\label{lemma:arccos}
    Let $z \in \complex$ with $\abs{z} \leq 1$. Then $\abs{ \mathrm{Im} \left( \arccos(z) \right) } \leq 1/n$ whenever $\abs{\arg(z)} \leq 1/64n^2$.
\end{lemma}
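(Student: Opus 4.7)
The plan is to invert the cosine explicitly and reduce the bound on $|\mathrm{Im}(\arccos z)|$ to a purely algebraic inequality about $\sinh^2(\mathrm{Im}(\arccos z))$. I would start by writing $z = x + iy$ with $x^2 + y^2 \leq 1$ and $|y| \leq 1/(64 n^2)$, setting $w = \arccos z = u + iv$ on the principal branch (so that $u \in [0,\pi]$), and expanding the defining identity $z = \cos w$ via
\begin{equation*}
\cos(u+iv) \eq \cos u\,\cosh v \,-\, i\,\sin u\,\sinh v
\end{equation*}
to read off the two real relations $x = \cos u\,\cosh v$ and $y = -\sin u\,\sinh v$. The goal is a bound on $|v|$ depending only on $|y|$, so the next step is to eliminate $u$.

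For the elimination, I would apply $\cos^2 u + \sin^2 u = 1$ to the two equations above to obtain $x^2/\cosh^2 v + y^2/\sinh^2 v = 1$. Setting $s = \sinh^2 v$ (so $\cosh^2 v = 1+s$) and clearing denominators turns this into the simple quadratic
\begin{equation*}
s^2 \,+\, (1 - x^2 - y^2)\,s \,-\, y^2 \eq 0.
\end{equation*}
With $a := 1 - |z|^2 \geq 0$, the positive root is $s = \tfrac{1}{2}\bigl(-a + \sqrt{a^2 + 4 y^2}\bigr)$. The elementary estimate $\sqrt{a^2 + 4 y^2} \leq a + 2|y|$ then yields the clean bound $\sinh^2 v \leq |y| \leq 1/(64 n^2)$, and hence $|\sinh v| \leq 1/(8n)$.

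The last step is to pass from a bound on $|\sinh v|$ to a bound on $|v|$. Since $\sinh t \geq t$ for $t \geq 0$ (immediate from the power series), the inverse function satisfies $\mathrm{arcsinh}(t) \leq t$, giving $|v| \leq |\sinh v| \leq 1/(8n) \leq 1/n$, which is even a bit stronger than what the statement requires. I do not anticipate a serious obstacle. The only subtlety is the choice of branch: the hypothesis $|z| \leq 1$ keeps $z$ away from the principal branch cuts $(-\infty,-1] \cup [1,\infty)$ of $\arccos$, so no sign ambiguity arises, and the degenerate case $v = 0$ forces $y = 0$ and is trivial. The crux of the argument is the observation that Pythagorean elimination turns the problem into a quadratic in $\sinh^2 v$ whose positive root is controlled linearly by $|y|$, which in turn produces the $\sqrt{|y|}$-type bound on $|\sinh v|$ needed to close the $n^{-2} \to n^{-1}$ gap in the hypothesis.
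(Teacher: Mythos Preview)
Your proof is correct and takes a genuinely different route from the paper. The paper expands $\arccos(re^{i\theta})$ as a power series around zero, uses De Moivre's theorem to isolate the imaginary part as $\sum_{k\ge 0} a_k r^{2k+1}\sin((2k+1)\theta)$, bounds the coefficients $a_k = \frac{(2k)!}{4^k(k!)^2(2k+1)}$ via Stirling's formula to obtain $a_k(2k+1)\le 1/\sqrt{k}$ and $a_k\le 1/k^{3/2}$, and then splits the resulting sum at a cutoff $K\approx 64n^2$, balancing a head of size roughly $|\mathrm{Im}(z)|\sqrt{K}$ against a tail of size roughly $1/\sqrt{K}$. Your argument instead inverts $\cos$ directly via the addition formula, eliminates the real part of $\arccos z$ by Pythagoras, and reduces everything to the quadratic $s^2+(1-|z|^2)s-y^2=0$ in $s=\sinh^2 v$, whose positive root is bounded by $|y|$. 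This is considerably more elementary---no series truncation, no Stirling bounds---and in fact delivers the sharper conclusion $|\mathrm{Im}(\arccos z)|\le \sqrt{|\mathrm{Im}(z)|}\le 1/(8n)$. One small quibble: the points $z=\pm 1$ do lie on the branch cuts, so your remark that $|z|\le 1$ keeps $z$ strictly away from them is not literally correct; but those points have $y=0$ and hence $v=0$, so the conclusion is trivial there and the argument is unaffected.
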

\begin{proof}[Proof of Lemma~\ref{lemma:arccos}]
    Let $re^{i \theta} = z$ and assume $\abs{\theta} \leq 1/64n^2$. We use the Taylor series for $\arccos(\cdot)$,
    \begin{align*}
        \arccos(re^{i \theta}) & = \frac{\pi}{2} - \sum_{k = 0}^{\infty} a_k (re^{i \theta})^{2k + 1} \tag{For $a_k = \frac{(2k)!}{4^k (k!)^2 (2k + 1)}$} \\
        & = \frac{\pi}{2} - \sum_{k = 0}^{\infty} a_k r^{2k + 1} e^{i (2k + 1) \theta} \tag{De Moivre's Theorem}\\
        & = \frac{\pi}{2} - \sum_{k = 0}^{\infty} a_k r^{2k + 1}  \cos ( (2k + 1) \theta)  - i \sum_{k = 0}^{\infty} a_k r^{2k + 1} \sin( (2k + 1) \theta). \tag{$e^{i \theta} = \cos \theta + i \sin \theta$} 
    \end{align*}
    Therefore,
    \begin{equation*}
         \mathrm{Im} \left( \arccos(re^{i \theta}) \right)  = \sum_{k = 0}^{\infty} a_k r^{2k + 1} \sin( (2k + 1) \theta).
    \end{equation*}
    Then
    \begin{align*}
         \abs{ \mathrm{Im} \left( \arccos(re^{i \theta}) \right)  } & \leq \sum_{k = 0}^{\infty} a_k  \abs{r}^{2k+1} \abs{ \sin( (2k + 1) \theta)}  \\
         & \leq \sum_{k = 0}^{\infty} a_k \abs{r}^{2k+1} \min (1, (2k + 1) \abs{ \theta} ) \tag{$\abs{\sin(x)} \leq \min (\abs{x}, 1)$} \\
           & \leq  \sum_{k = 0}^{\infty} a_k  \min (1, (2k + 1) \abs{\arg(z) }  ) \tag{$\abs{r} \leq 1$}  \\
         & \leq \sum_{k = 0}^{K} a_k (2k+1) \abs{\arg(z)} + \sum_{k = K+1}^{\infty} a_k  . \tag{For any arbitrary $K \geq 0$}
    \end{align*}
    Now we bound $a_k$.
\begin{align*}
    a_k (2k + 1)& = \frac{(2k)!}{4^k (k!)^2} \\
    & \leq \frac{\sqrt{2 \pi (2k)} (2k/e)^{2k} \left( 1 + \frac{1}{2k} \right)}{4^k \left( \sqrt{2 \pi k} (k/e)^k \right)^2 } \tag{Stirling's Formula }
    \\
    & = \left( 1 + \frac{1}{2k} \right)/\sqrt{ \pi k}   \\
    & \leq 1/\sqrt{k}.
\end{align*}
Therefore we also have that $a_k \leq 1/k^{3/2}$. Using this (and noting that $a_0 = 1$) we see,
 \begin{align*}
         \abs{ \mathrm{Im} \left( \arccos(z) \right)  } & \leq \abs{\arg(z)} \left( 1 + \sum_{k =1}^K \frac{1}{\sqrt{k}} \right) + \sum_{k = K}^{\infty} \frac{1}{k^{3/2}} \\
         & \leq 4 \left( \abs{\arg(z)} \sqrt{K} + \frac{1}{\sqrt{K}} \right) \\
         & \leq \frac{8}{\sqrt{K}} \tag{For $\abs{\arg(z)}\leq 1/K$} \\
         & \leq \frac{1}{n} \tag{For $K \geq 64 n^2$.}
         \end{align*}
         Therefore, for $\abs{\arg(z)} \leq 1/64n^2$, we have that
         \begin{equation*}
             \abs{\mathrm{Im} ( \arccos (z) ) } \leq 1/n.
         \end{equation*}
\hfill $\qedsymbol$ \end{proof}

\begin{proof}[Proof of Lemma~\ref{lemma:cheby_coeffs_bound}]
We bound the coefficients of the Chebyshev polynomial. From Chapter 22 of \cite{abramowitz1948handbook},
\begin{equation}
\label{eqn:coefficients_cheby}
    T_n(x) = \frac{n}{2} \sum_{m=0}^{\lfloor n/2 \rfloor} (-1)^m \frac{(n-m-1)!}{m!(n-2m)!} (2x)^{n-2m}.
\end{equation}
Therefore
\begin{equation*}
    M_n(x) = \frac{1}{2^{n-1}} T_n(x) = n \sum_{m=0}^{\lfloor n/2 \rfloor} (-1)^m \frac{(n-m-1)!}{m!(n-2m)!} 2^{-2m} x^{n-2m}.
\end{equation*}
Let $c_m = \frac{(n-m-1)!}{m!(n-2m)!} 2^{-2m}$. Then
\begin{align*}
    \max_{m = 0, \dots, n} c_m & \leq \max_{m = 0, \dots, n}  {n-m \choose m} 4^{-m} \\
    & \leq \max_{m = 0, \dots, n} \left( \frac{(n-m)e}{4m}\right)^m \tag{${n \choose k }\leq (ne/k)^k$} \\
    & \leq \max_{c \in [0,1]} \left( \frac{(1-c)e}{4c}\right)^{cn} \tag{Letting $m = cn$} \\
    & \leq 2^{0.3n}. \tag{$\max_{c \in [0,1]} ((1-c)e/4c)^c \leq 2^{0.3}$}
\end{align*}
\hfill $\qedsymbol$ \end{proof}
\section{Plots from Experiments}
The details of the experiments are in Section~\ref{sec:experiments}, please refer to that for specifics.
\subsection{Experiments with Linear Dynamical System Data}
As our theory shows, for Chebyshev polynomials (the same can be shown for Legendre polynomials), the coefficients of the polynomial grow exponentially and therefore the performance gains vanish after the degree is too high. However, learning the optimal coefficients is able to sidestep this issue. 

\begin{figure}[H]
    \centering

    \begin{subfigure}[b]{0.32\textwidth}
        \includegraphics[width=\linewidth]{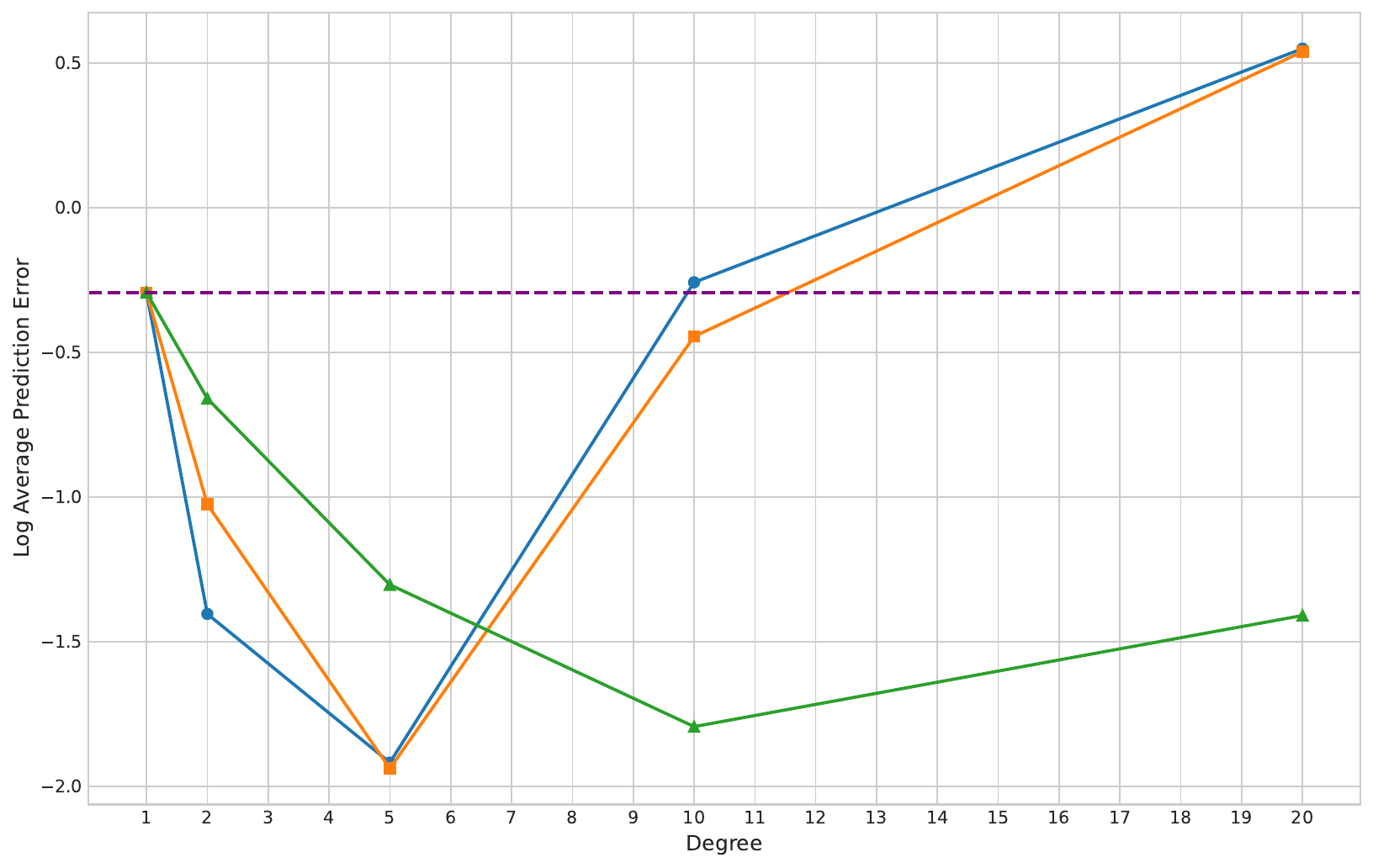}
        \caption{Regression, $\tau_{\textrm{thresh}}$=0.01}
    \end{subfigure}
    \hfill
    \begin{subfigure}[b]{0.32\textwidth}
        \includegraphics[width=\linewidth]{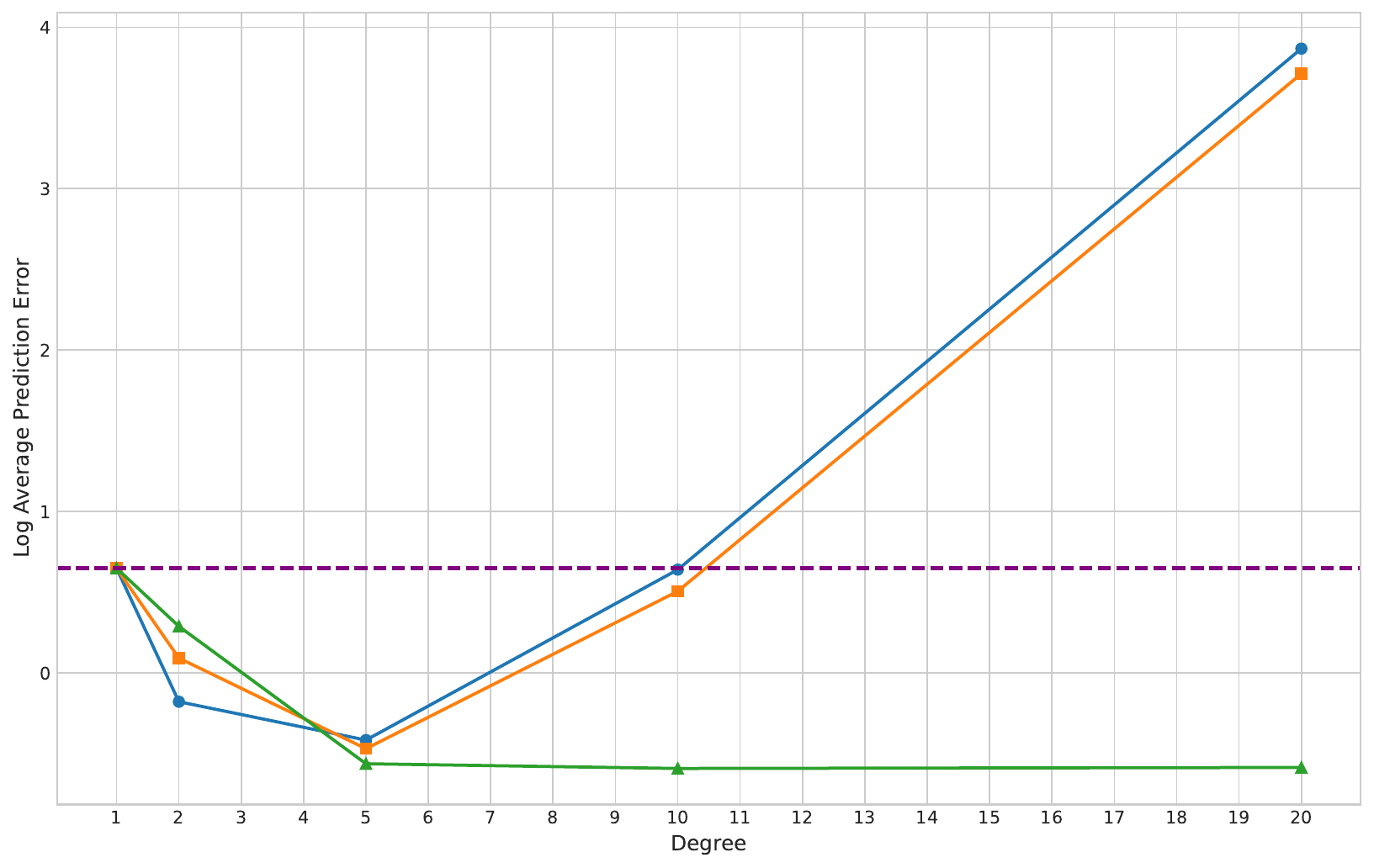}
        \caption{Regression, $\tau_{\textrm{thresh}}$=0.1}
    \end{subfigure}
    \hfill
    \begin{subfigure}[b]{0.32\textwidth}
        \includegraphics[width=\linewidth]{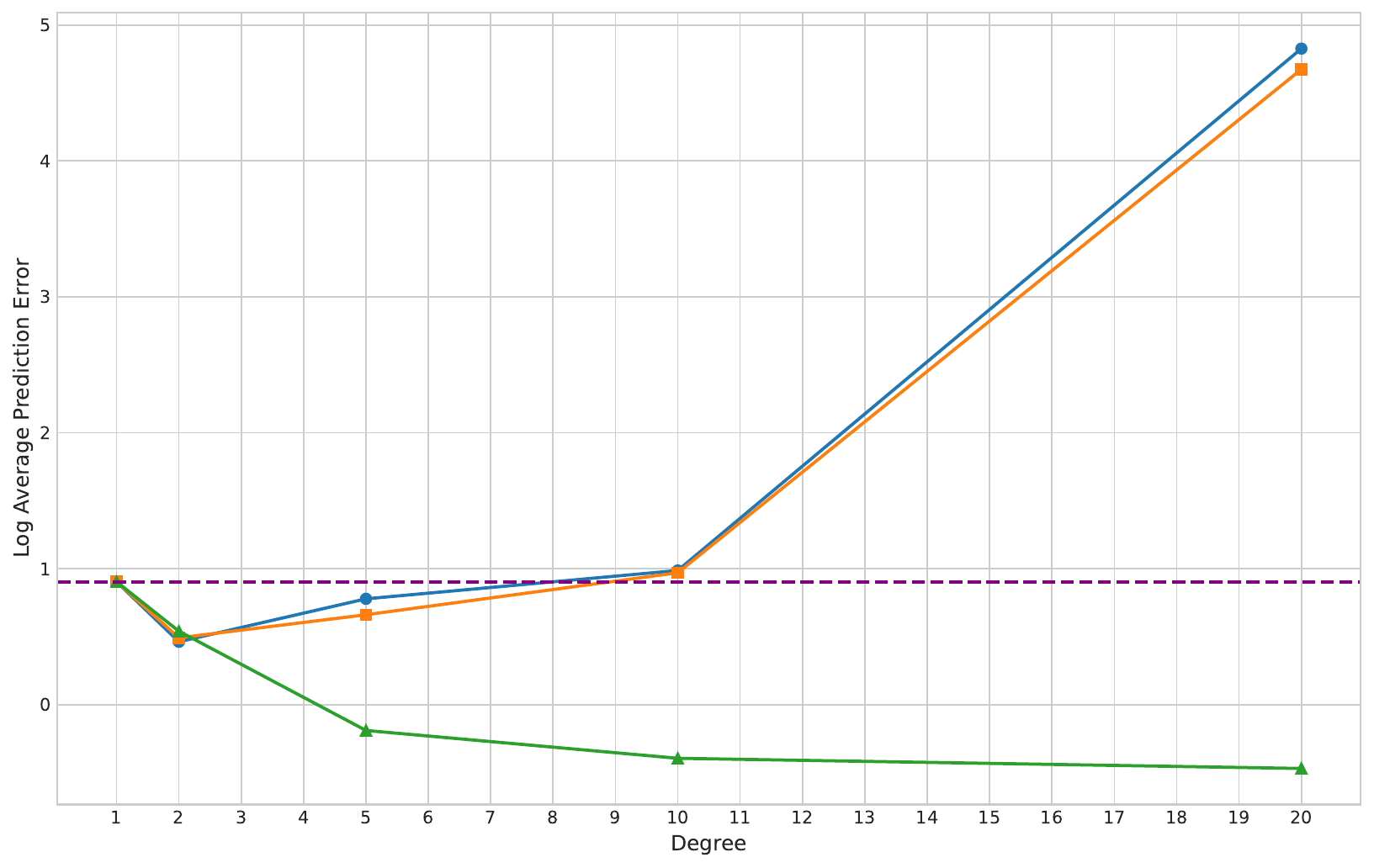}
        \caption{Regression, $\tau_{\textrm{thresh}}=0.9$}
    \end{subfigure}

    \vspace{0.5cm}

    \begin{subfigure}[b]{0.32\textwidth}
        \includegraphics[width=\linewidth]{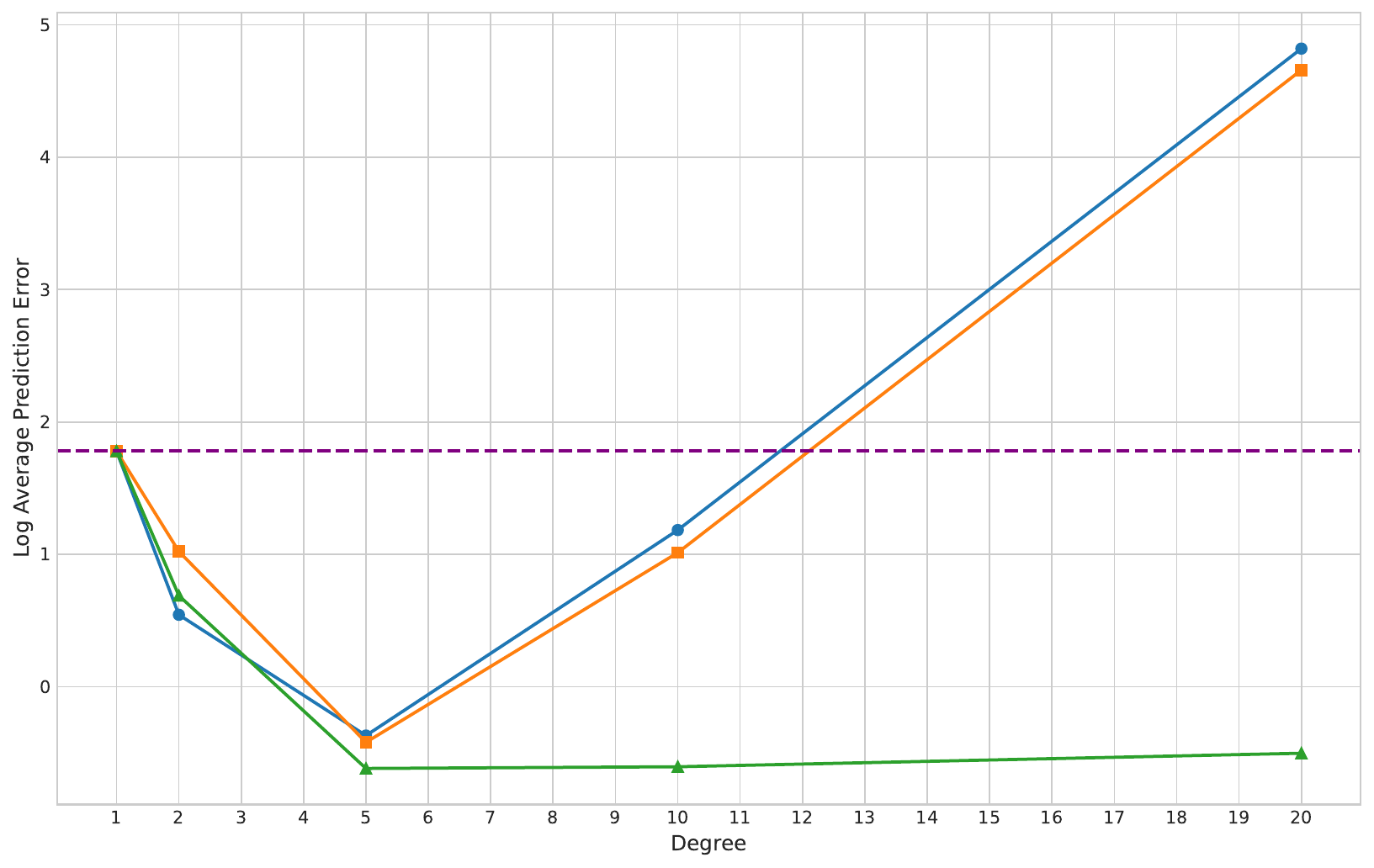}
        \caption{Spectral Filtering, $\tau_{\textrm{thresh}}=0.01$}
    \end{subfigure}
    \hfill
    \begin{subfigure}[b]{0.32\textwidth}
        \includegraphics[width=\linewidth]{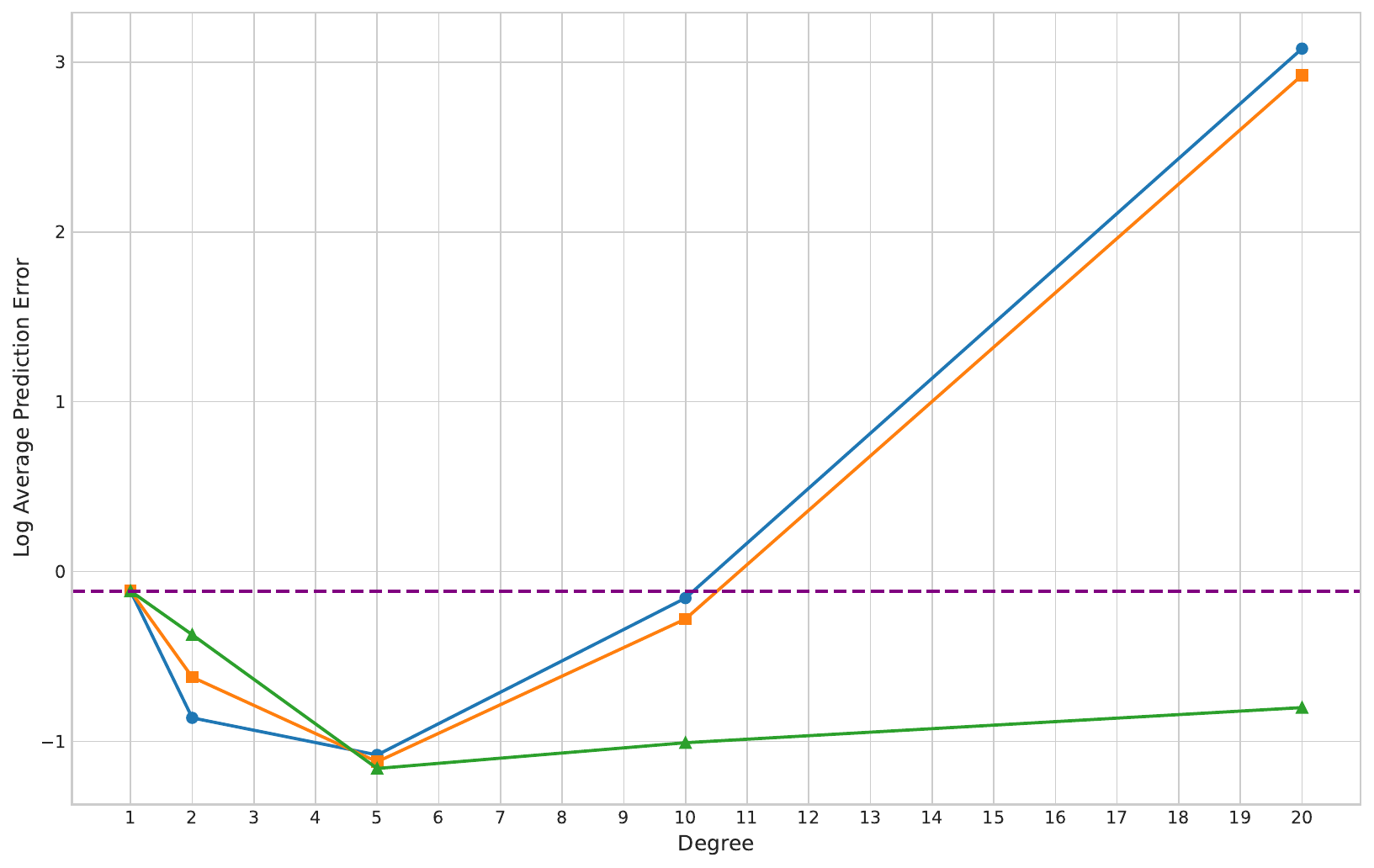}
        \caption{Spectral Filtering, $\tau_{\textrm{thresh}}=0.1$}
    \end{subfigure}
    \hfill
    \begin{subfigure}[b]{0.32\textwidth}
        \includegraphics[width=\linewidth]{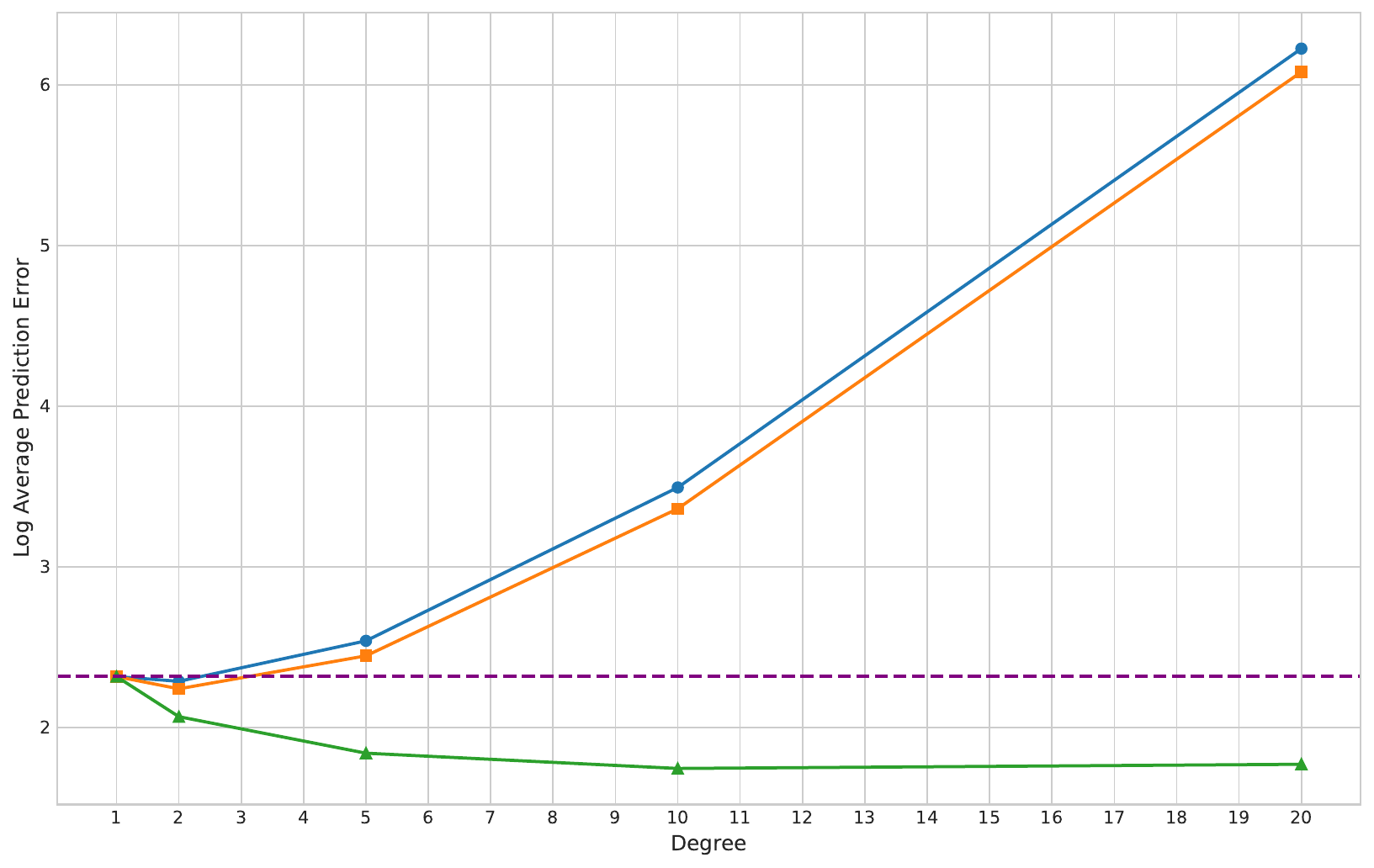}
        \caption{Spectral Filtering, $\tau_{\textrm{thresh}}=0.9$}
    \end{subfigure}

    \vspace{0.5cm}

    \begin{subfigure}[b]{0.32\textwidth}
        \includegraphics[width=\linewidth]{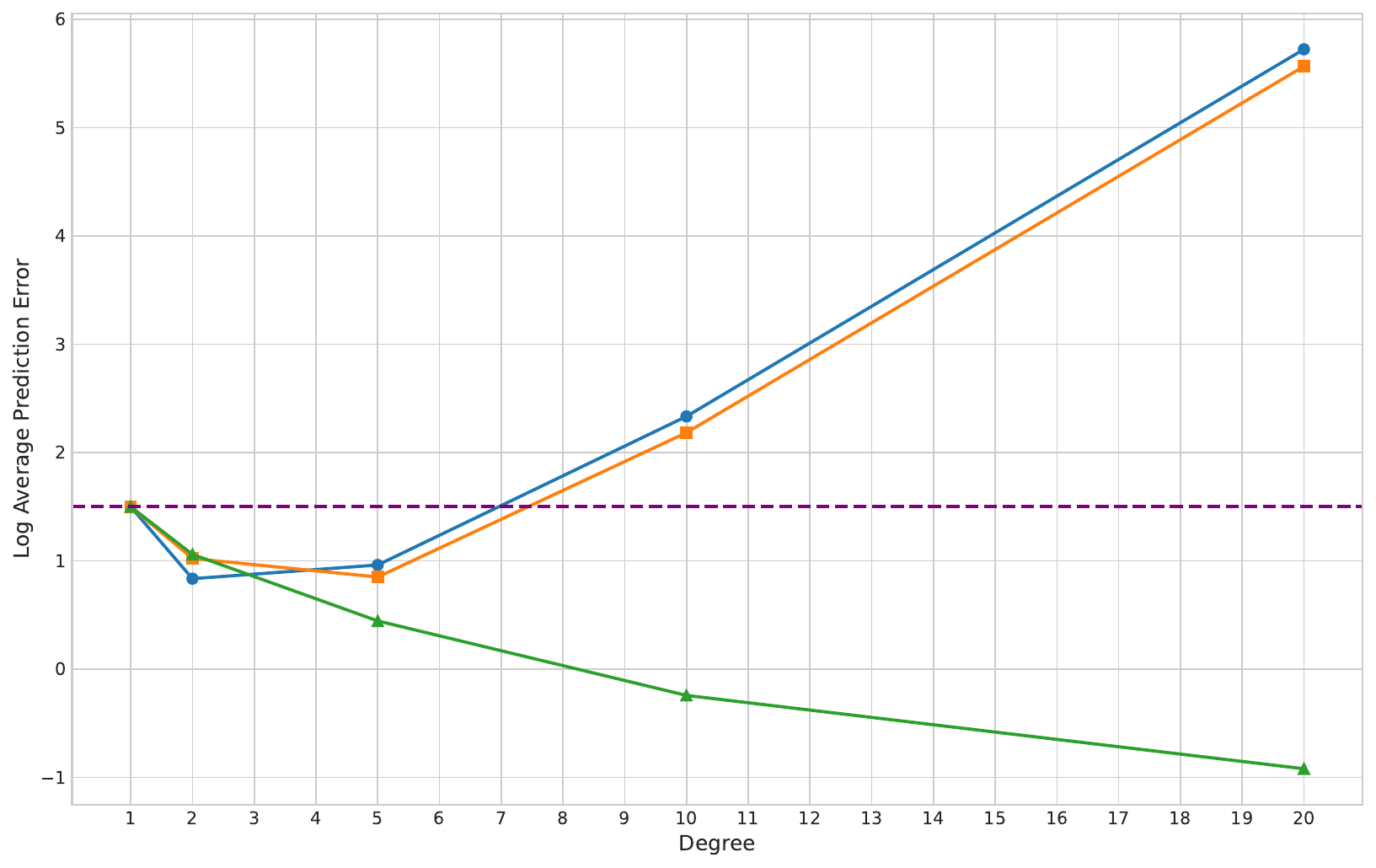}
        \caption{DNN, $\tau_{\textrm{thresh}}=0.01$}
    \end{subfigure}
    \hfill
    \begin{subfigure}[b]{0.32\textwidth}
        \includegraphics[width=\linewidth]{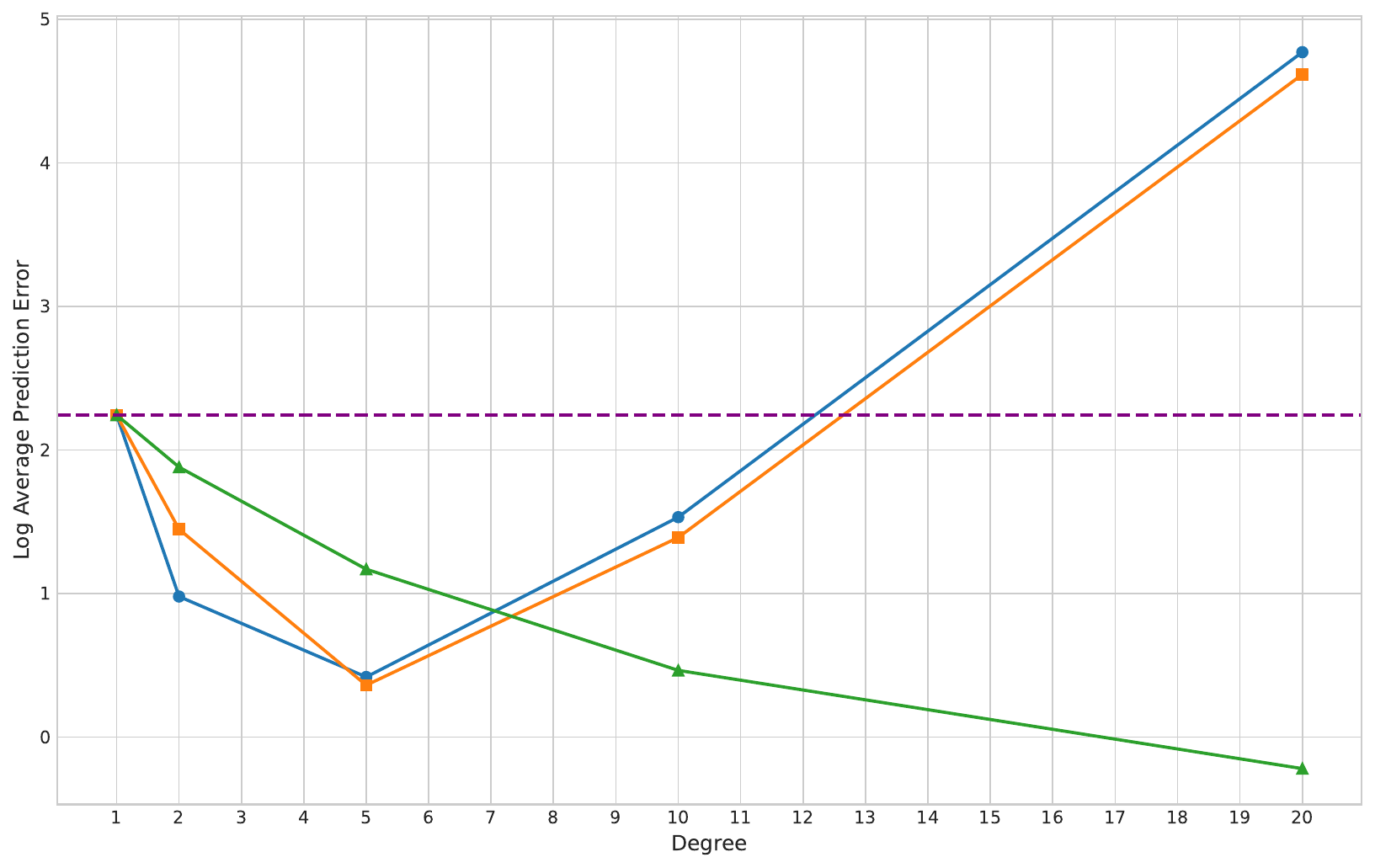}
        \caption{DNN, $\tau_{\textrm{thresh}}=0.1$}
    \end{subfigure}
    \hfill
    \begin{subfigure}[b]{0.32\textwidth}
        \includegraphics[width=\linewidth]{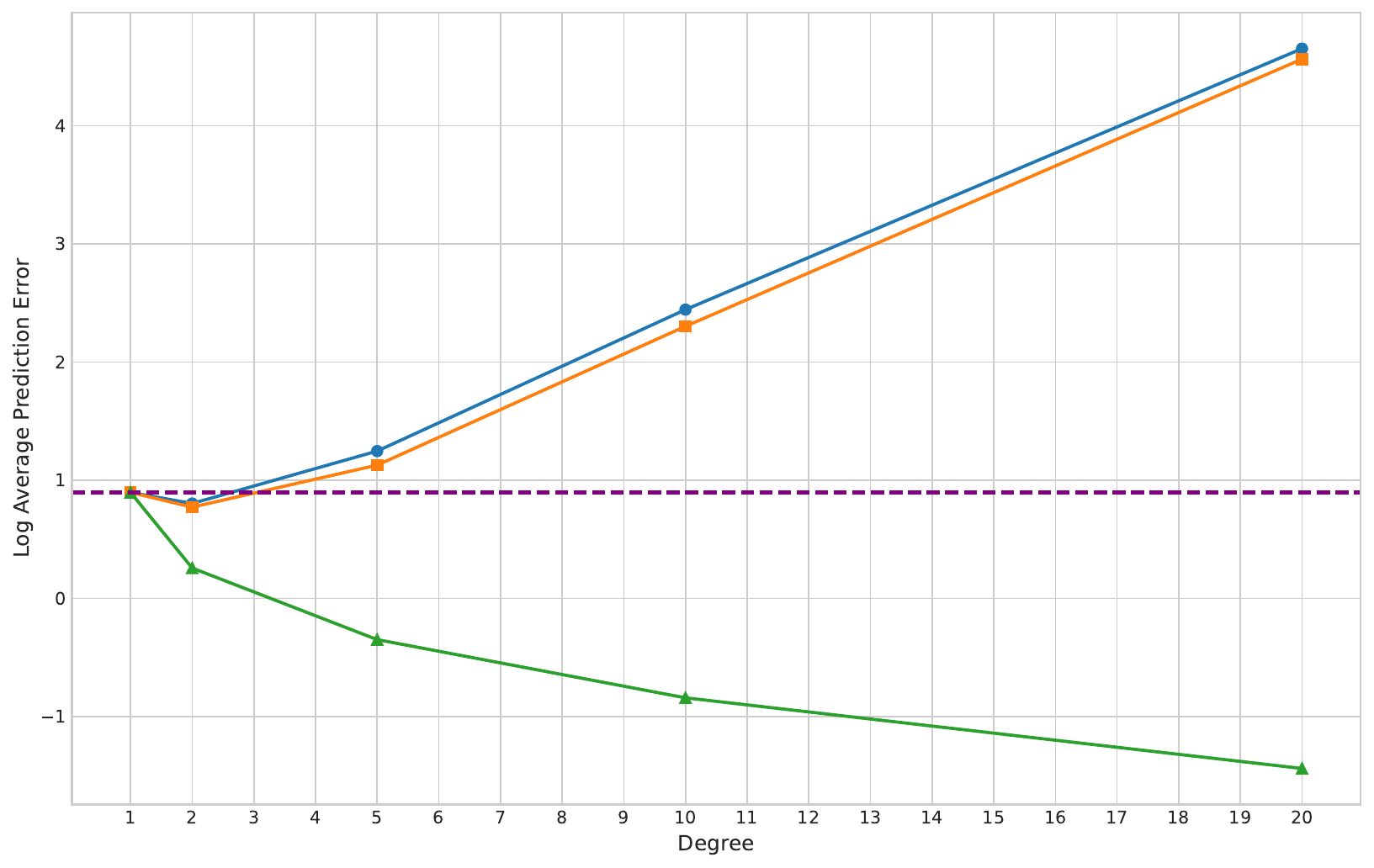}
        \caption{DNN, $\tau_{\textrm{thresh}}=0.9$}
    \end{subfigure}

    \vspace{0.5cm}

    \includegraphics[width=0.12\textwidth]{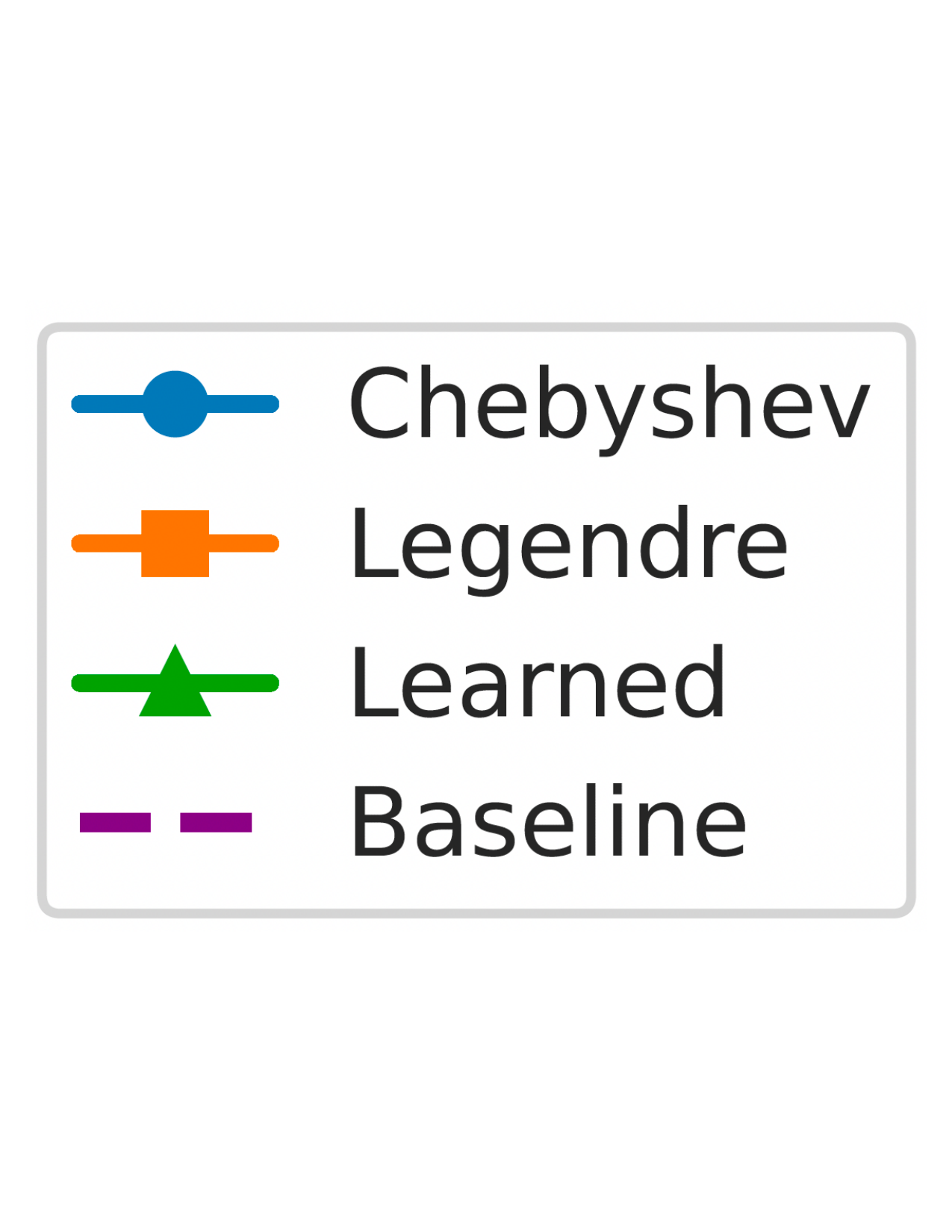}

    \caption{Absolute prediction error averaged over 200 independent runs with data generated from a linear dynamical system with varying complex threshold.}
\end{figure}

\subsection{Nonlinear Data}
\begin{figure}[H]
    \centering

    \begin{subfigure}[b]{0.32\textwidth}
        \includegraphics[width=\linewidth]{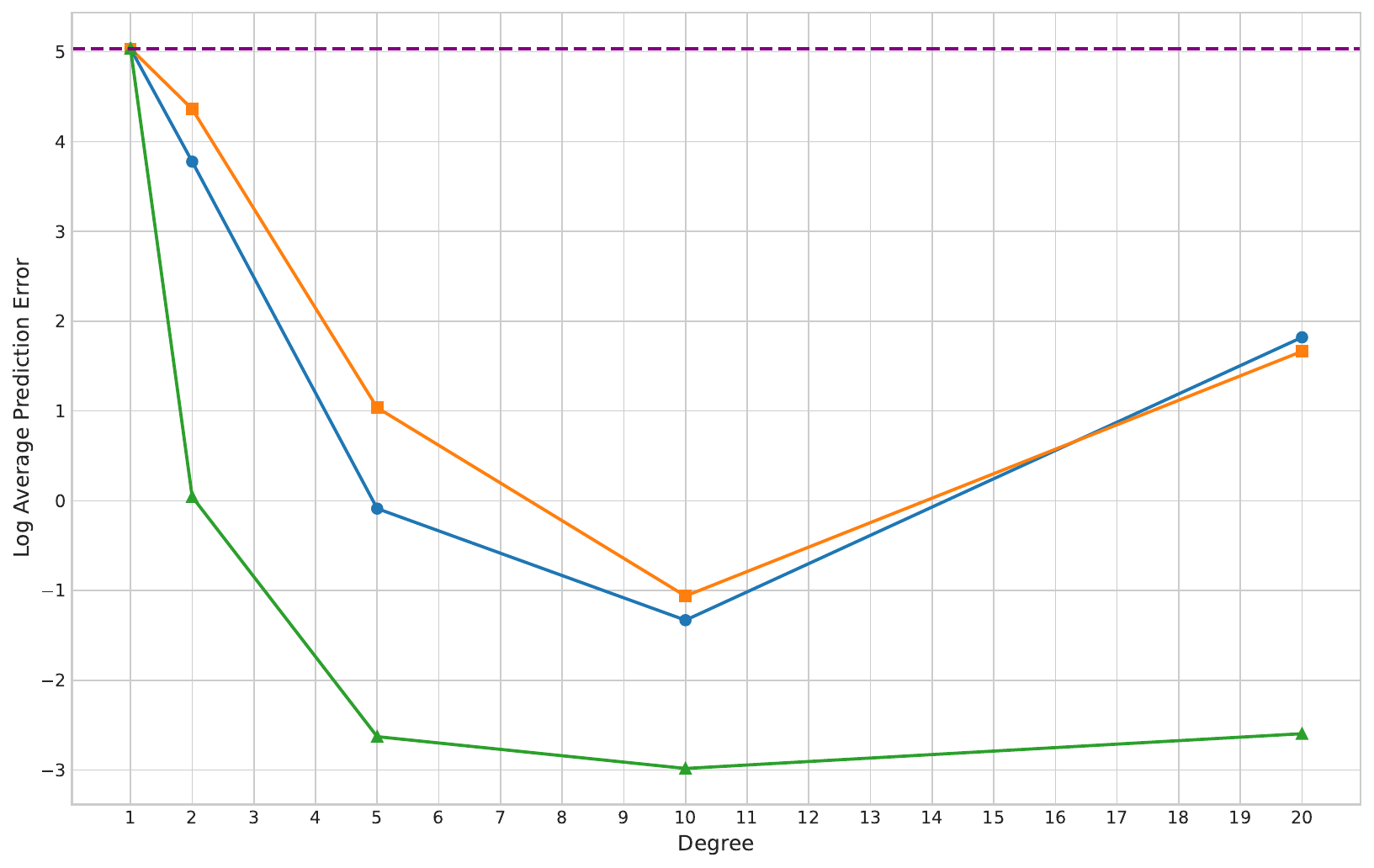}
        \caption{Spectral Filtering, $\tau_{\textrm{thresh}}=0.01$}
    \end{subfigure}
    \hfill
    \begin{subfigure}[b]{0.32\textwidth}
        \includegraphics[width=\linewidth]{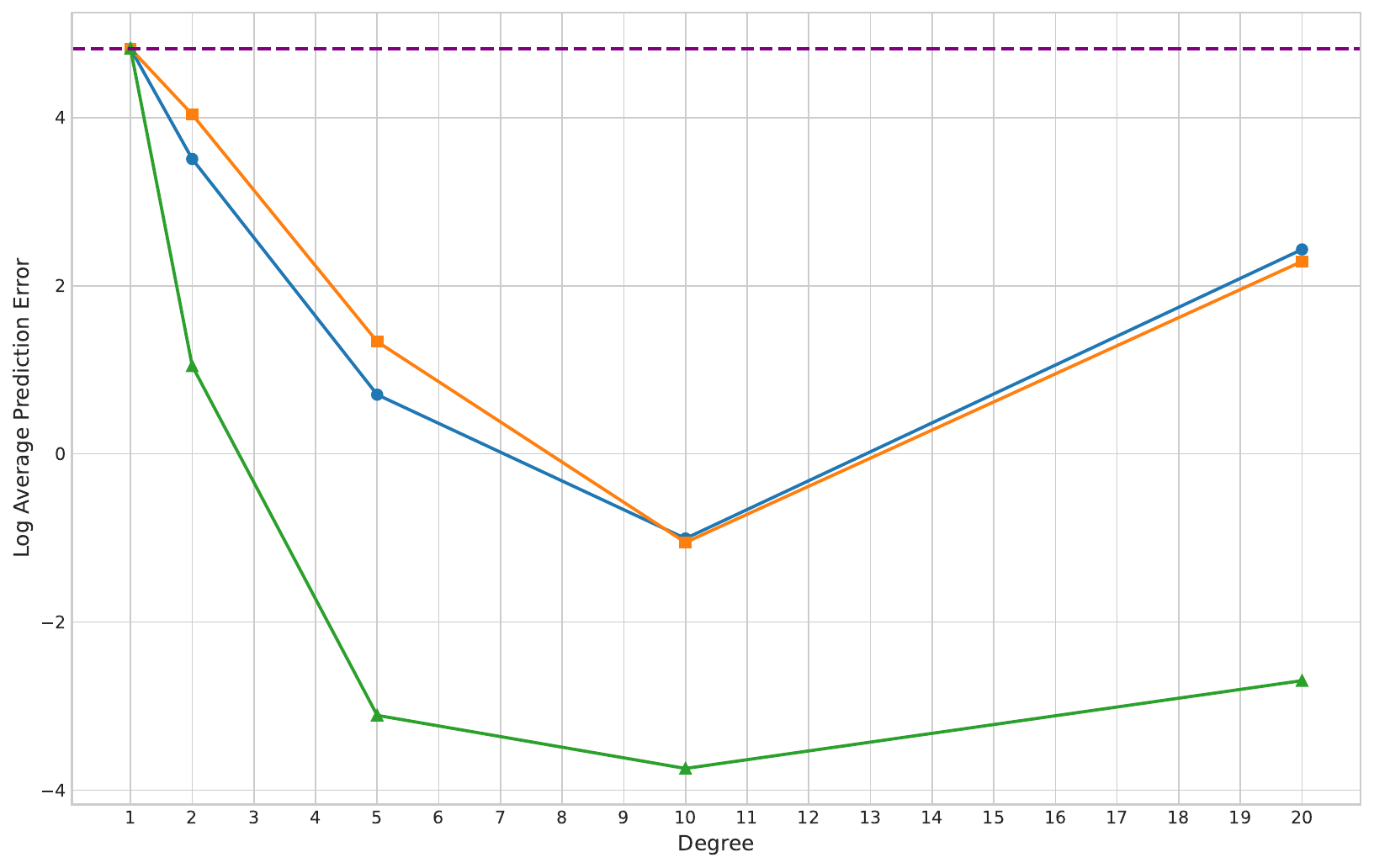}
        \caption{Spectral Filtering, $\tau_{\textrm{thresh}}=0.1$}
    \end{subfigure}
    \hfill
    \begin{subfigure}[b]{0.32\textwidth}
        \includegraphics[width=\linewidth]{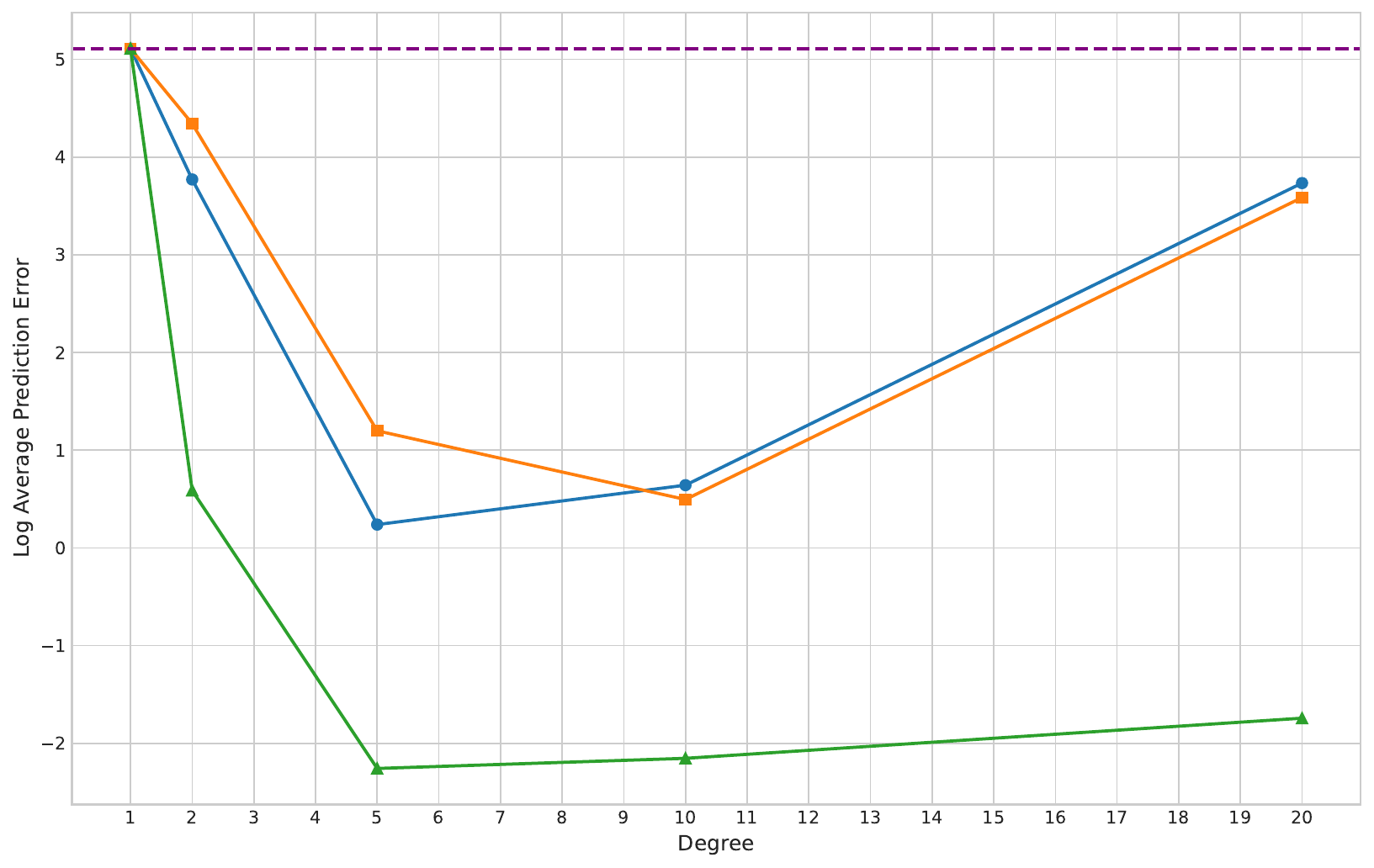}
        \caption{Spectral Filtering, $\tau_{\textrm{thresh}}=0.9$}
    \end{subfigure}

    \vspace{0.5cm}

    \begin{subfigure}[b]{0.32\textwidth}
        \includegraphics[width=\linewidth]{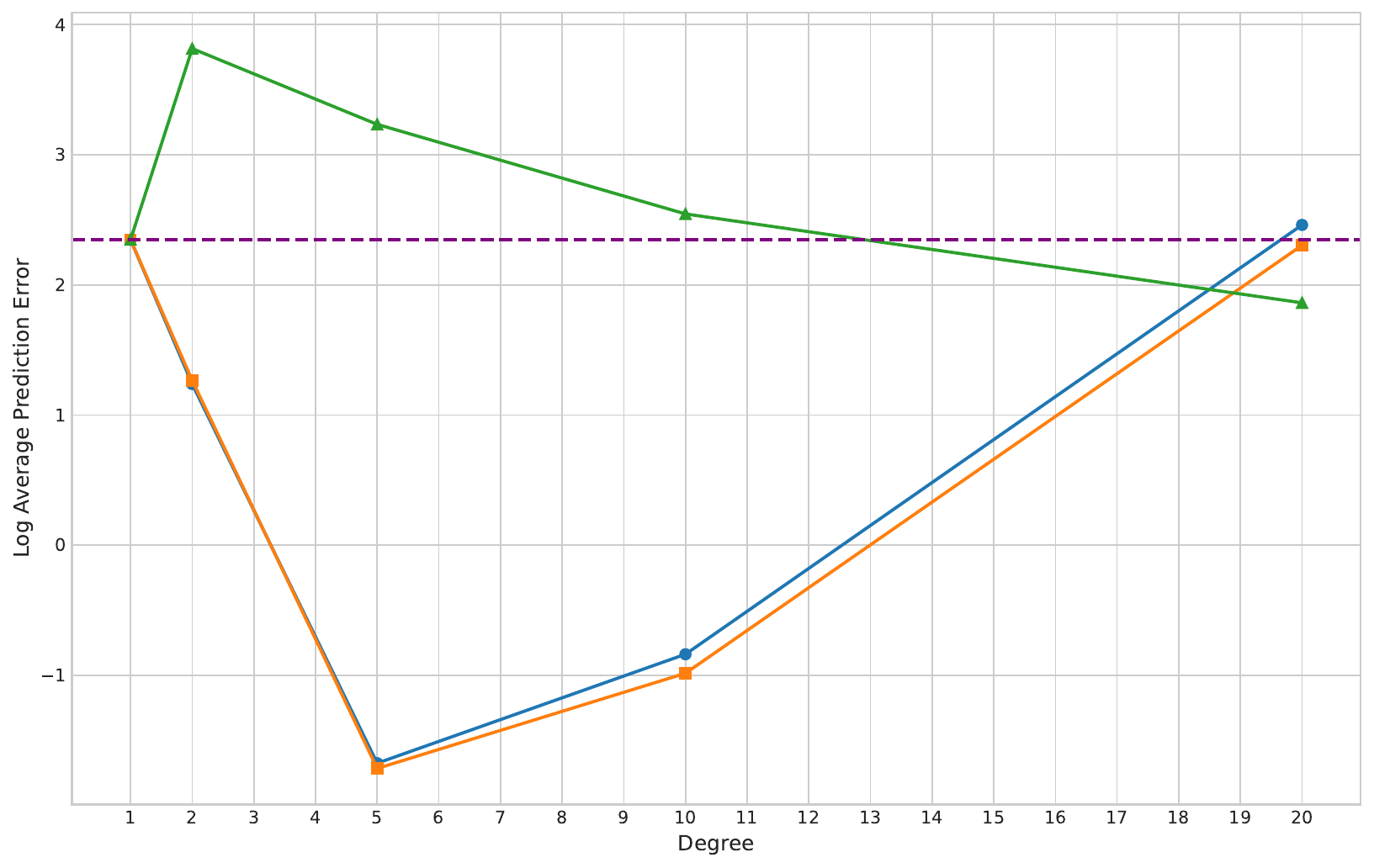}
        \caption{DNN, $\tau_{\textrm{thresh}}=0.01$}
    \end{subfigure}
    \hfill
    \begin{subfigure}[b]{0.32\textwidth}
        \includegraphics[width=\linewidth]{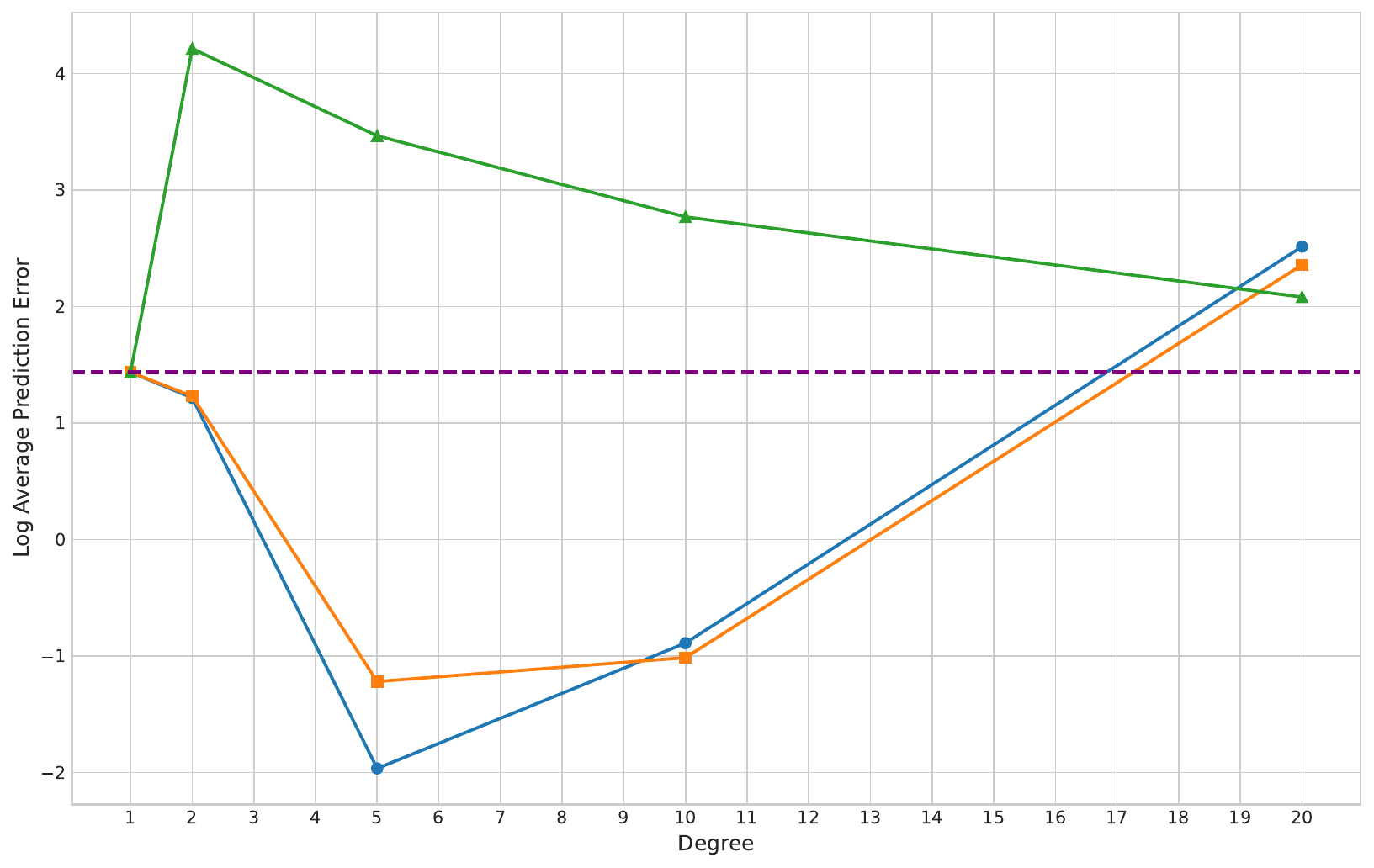}
        \caption{DNN, $\tau_{\textrm{thresh}}=0.1$}
    \end{subfigure}
    \hfill
    \begin{subfigure}[b]{0.32\textwidth}
        \includegraphics[width=\linewidth]{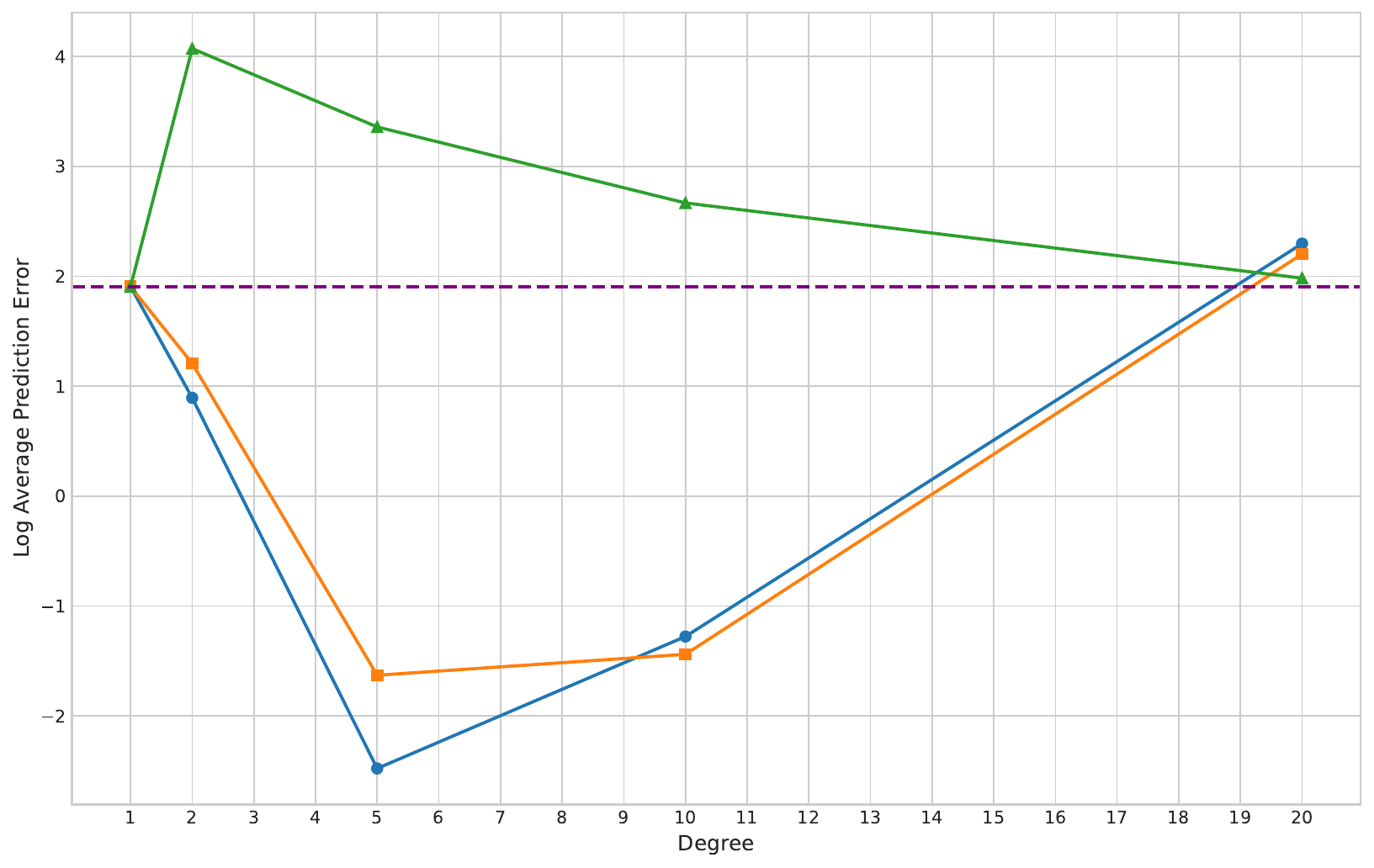}
        \caption{DNN, $\tau_{\textrm{thresh}}=0.9$}
    \end{subfigure}

    \vspace{0.5cm}

    \includegraphics[width=0.12\textwidth]{appendix/plots/per_degree_legend.pdf}

    \caption{Absolute prediction error averaged over 200 independent runs with data generated from a nonlinear dynamical system with varying complex threshold.}
\end{figure}

\newpage

\subsection{Data from a DNN}
Finally we generate data from a 10-layer sparse neural network which stacks 100-dimensional LSTMs with ReLU nonlinear activations.

\begin{figure}[H]
    \centering
    \begin{subfigure}[b]{0.4\textwidth}
        \includegraphics[width=\linewidth]{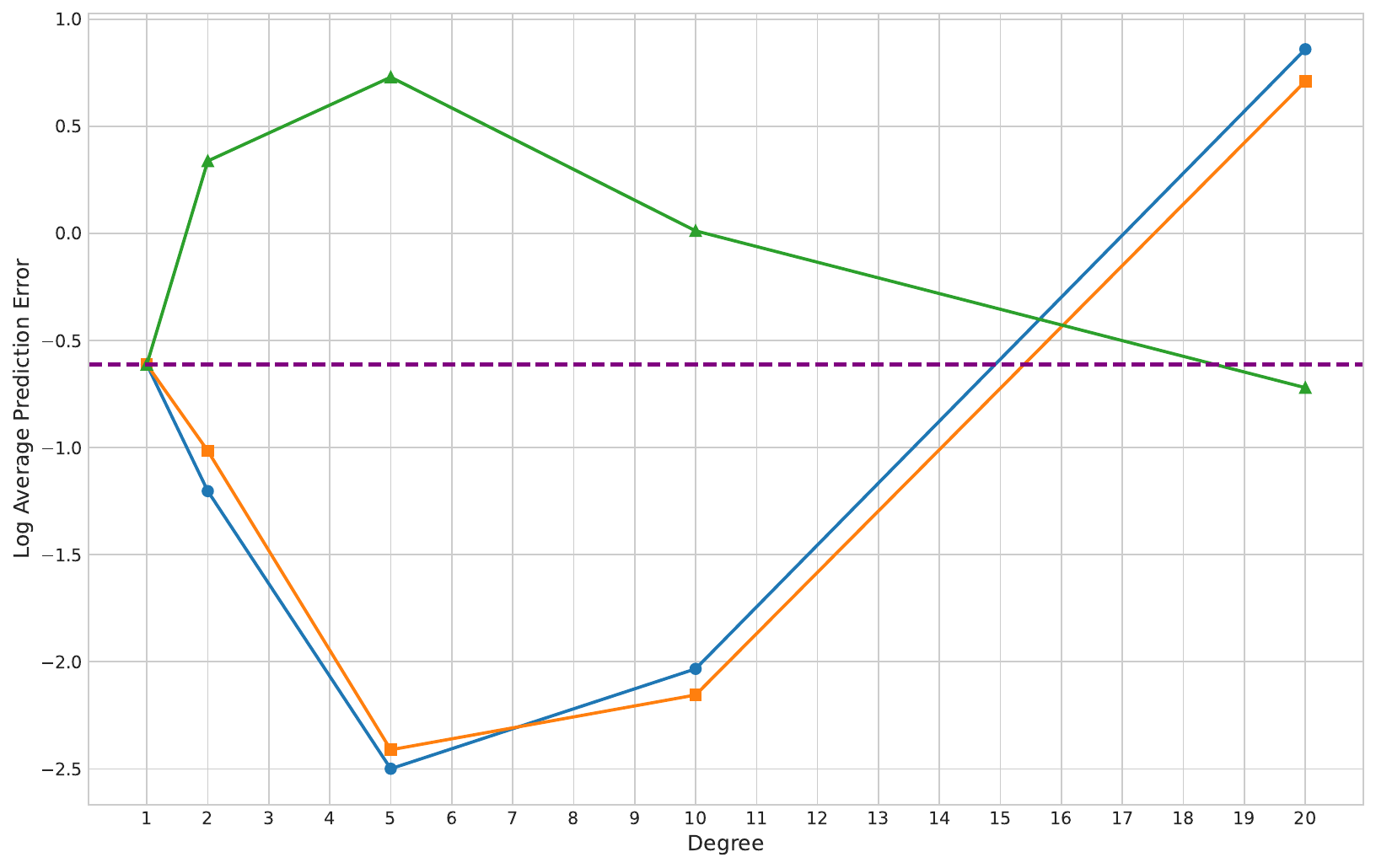}
    \end{subfigure}

    \vspace{0.5cm}

    \includegraphics[width=0.12\textwidth]{appendix/plots/per_degree_legend.pdf}

    \caption{Absolute prediction error of a 10-layer DNN model averaged over 200 independent runs.}
\end{figure}

\end{document}